\newtheorem{theorem}{Theorem}
\newtheorem{definition}{Definition}
\newtheorem{lemma}{Lemma}
\newtheorem{corollary}{Corollary}
\newtheorem{proposition}{Proposition}
\def\one{\mathds{1}}
\def\th{^\text{th}}
\def\T{\mathsf{T}}
\def\de{{\rm d}} % integral d. (Ex) $\int_\R
\newcommand{\R}{\mathbb{R}} % Reals
\renewcommand{\S}{\mathbb{S}} % Sphere
\def\ball{{\mathsf B}} % Ball
\newcommand{\G}{\mathbb{G}} % Gaussian process
\newcommand{\rarrow}{\rightarrow}
\renewcommand{\P}{\mathbb{P}} % Probability
\newcommand{\E}{\mathbb{E}} % Expectation
\newcommand{\Var}{\mathrm{Var}} % Variance
\newcommand{\Cov}{\mathrm{Cov}} % Covariance
\newcommand{\simiid}{\overset{\text{i.i.d.}}{\sim}} % iid
\newcommand{\convd}{\overset{d}{\to}} % converge in distribution
\def\reals{\mathbb{R}}
\def\de{\mathrm{d}}
\def\cR{{\mathcal R}}
\def\cF{{\mathcal F}}
\def\cS{{\mathcal S}}
\def\cM{{\mathcal M}}
\def\cN{{\mathcal N}}
\def\cG{{\mathcal G}}
\def\cL{{\mathcal L}}
\def\bcdot{{\boldsymbol \cdot}}
\def\sf{\mathsf{f}}
\newcommand{\TV}{{\rm TV}}
\newcommand{\RTV}{{\mathrm{RTV}}}
\newcommand{\RBV}{{\mathrm{RBV}}}
\def\RTVk{\RTV^k}
\def\RBVk{\RBV^k}
\newcommand{\RadonDomain}{{\S^{d-1}\times\R}}
\newcommand{\nonnegRD}{{\S^{d-1}\times[0,\infty)}}
\newcommand{\zeroRD}{{\S^{d-1}\times\{0\}}}
\newcommand{\negativeRD}{{\S^{d-1}\times(-\infty, 0)}}
\newcommand{\positiveRD}{{\S^{d-1}\times(0, \infty)}}
\def\bracket{{[\;]}}
\newcommand{\popP}{P}
\newcommand{\popQ}{Q}
\newcommand{\empP}{P_m}
\newcommand{\empQ}{Q_n}
\newcommand{\Rd}{\Reals^d}
\newcommand{\Reals}{\mathbb{R}} % Same thing as Ryan's \R
\title{Integral Probability Metrics Meet Neural Networks: \\ The
  Radon-Kolmogorov-Smirnov Test}
\author{Seunghoon Paik$^{1}$, Michael Celentano$^{1}$, Alden Green$^{2}$,
  Ryan J. Tibshirani$^{1}$} 
\date{$^1$University of California, Berkeley, $^2$Stanford University}
\begin{document}
\maketitle

\begin{abstract}
Integral probability metrics (IPMs) constitute a general class of nonparametric
two-sample tests that are based on maximizing the mean difference between
samples from one distribution $\popP$ versus another $\popQ$, over all choices
of data transformations $f$ living in some function space $\cF$. Inspired by
recent work that connects what are known as functions of \emph{Radon bounded
variation} (RBV) and neural networks \citep{parhi2021banach, parhi2023near}, we
study the IPM defined by taking $\cF$ to be the unit ball in the RBV space of a
given smoothness degree $k \geq 0$. This test, which we refer to as the
\emph{Radon-Kolmogorov-Smirnov} (RKS) test, can be viewed as a generalization of
the well-known and classical Kolmogorov-Smirnov (KS) test to multiple dimensions
and higher orders of smoothness. It is also intimately connected to neural
networks: we prove that the witness in the RKS test---the function $f$ achieving
the maximum mean difference---is always a ridge spline of degree $k$, i.e., a
single neuron in a neural network. We can thus leverage the power of modern
neural network optimization toolkits to (approximately) maximize the criterion
that underlies the RKS test. We prove that the RKS test has asymptotically full
power at distinguishing any distinct pair $\popP \not= \popQ$ of distributions,
derive its asymptotic null distribution, and carry out experiments to elucidate
the strengths and weaknesses of the RKS test versus the more traditional kernel
MMD test.   
\end{abstract}

\section{Introduction}
\label{sec:intro}

In this paper, we consider the fundamental problem of nonparametric two-sample
testing, where we observe independent samples $x_i \sim \popP$, $i = 1,\ldots,m$
and $y_i \sim \popQ$, $i = 1,\ldots,n$, all samples assumed to be in $\Rd$, and
we use the data to test the hypothesis
\[
	H_0: \popP = \popQ, \quad \text{versus} \quad H_1: \popP \neq \popQ. 
\]
This is not only a problem of core interest in the traditional literature on
hypothesis testing in statistics, it also has a key role in more modern
applications in machine learning, such as out-of-distribution detection
\citep{hendrycks2017baseline}, transfer learning \citep{long2015learning},
generative modeling \citep{goodfellow2014generative}, among
others. \emph{Nonparametric} two-sample testing in particular refers to the
problem of testing $\popP = \popQ$ as above, without parametric assumptions on
the forms of the distributions $\popP, \popQ$. Although there are many
nonparametric two-sample tests, we will focus on a class of tests called
\emph{integral probability metrics} (IPMs) \citep{muller1997integral}, which
measure the discrepancy between two sets of samples by taking the maximum
difference in sample means over a function class $\cF$:
\begin{equation}
\label{eq:ipm}
\rho(\empP, \empQ; \cF) = \sup_{f \in \cF} \, | \empP(f) - \empQ(f) |. 
\end{equation}
Here \smash{$\empP = \frac{1}{m} \sum_{i = 1}^{m} \delta_{x_i}$} is the empirical
distribution and \smash{$\empP(f) = \frac{1}{m} \sum_{i = 1}^{m} f(x_i)$} the
corresponding empirical expectation operator based on $x_i$, $i = 1,\ldots,m$;    
and likewise for $\empQ$ and $\empQ(f)$. 

The discrepancy in \eqref{eq:ipm} is quite general and certain choices of
$\mathcal{F}$ recover well-known distances that give rise to two-sample tests. A
noteworthy univariate ($d = 1$) example is the \emph{Kolmogorov-Smirnov} (KS)
distance \citep{kolmogorov1933sulla, smirnov1948table}. Though it is more
typically defined in terms of cumulative distribution functions (CDFs) or ranks,
the KS distance can be seen as the IPM that results when $\mathcal{F}$ is taken
to be the class of functions with at most unit \emph{total variation} (TV).

Some attempts have been made to define a multivariate KS distance based on
multivariate CDFs \citep{bickel1969distribution} or ranks
\citep{friedman1979multivariate}, but the resulting tests have a few
limitations, such as being expensive to compute or exhibiting poor power in
practice. In fact, even the univariate KS test is known to be insensitive to
certain kinds of alternatives, such as those $\popP \neq \popQ$ which differ in
the tails \citep{bryson1974heavy}. This led \citet{wang2014falling} to recently
propose a higher-order generalization of the KS test, which uses the IPM based
on a class of functions whose \emph{derivatives} are of bounded TV.

In this work we propose and study a new class of IPMs based on a kind of
multivariate total variation known as \emph{Radon total variation} (RTV). The
definition of RTV is necessarily technical, and we delay it until Section
\ref{sec:radon-ks-test}. For now, we remark that if $d = 1$ then RTV reduces to 
the usual notion of total variation, which means that our RTV-based IPM---the
central focus of this paper---directly generalizes the KS distance to multiple
dimensions. We refer to this RTV-based IPM as the
\emph{Radon-Kolmogorov-Smirnov} (RKS) distance, and the resulting 
two-sample test as the RKS test. We also will consider the IPM defined using a 
class of functions whose derivatives are of bounded RTV, which in turn
generalizes the higher-order KS distance of \citet{wang2014falling} to multiple
dimensions. \citet{sadhanala2019higher} recently showed that the
higher-order KS test can be more sensitive to departures in the tails, and we
will give empirical evidence that our higher-order RKS test can be similarly
sensitive to tail behavior. 

Our test statistic also has a simple but revealing characterization in terms
of \emph{ridge splines}. A ridge spline of integer degree $k \geq 0$ is defined
for a direction \smash{$w \in \S^{d - 1}$} and an offset $b \in \R$ as the
truncated polynomial \smash{$f(x) = (w^\T x - b)_+^{k}$}. Here we use 
$\S^{d-1}$ to denote the unit $\ell_2$ sphere in $\R^d$ (the set of vectors with
unit $\ell_2$ norm), and we use the abbreviation $t_+ = \max\{t,0\}$. We prove
(Theorem \ref{thm:test-stat-is-relu-power}) that a $k\th$ degree ridge spline 
\emph{witnesses} the $k\th$ degree RKS distance: there exists a $k\th$ degree
ridge spline $f$ with unit Radon total variation for which $\empP(f) - \empQ(f)$
equals the RKS distance. Thus the RKS test statistic can be written as  
\begin{equation}
\label{eq:test-stat}
T_{d,k} = \max_{(w,b)\in\nonnegRD} \, \bigg| \frac{1}{m}\sum_{i = 1}^{m} 
(w^\T x_i -  b)_{+}^{k} - \frac{1}{n}\sum_{i = 1}^{n} (w^\T y_i - b)_{+}^{k}
\bigg|. 
\end{equation}
Ridge splines---and in particular the first-degree ridge spline $f(x) = (w^\T x 
- b)_+$, also known as the rectified linear (ReLU) unit---are fundamental
building blocks of modern neural networks. From this perspective, the
representation \eqref{eq:test-stat} says that the RKS distance is achieved by a
single neuron in a two-layer neural network. In fact, the connections between
Radon total variation and neural networks run deeper. The RKS distance can be
viewed as finding the maximum discrepancy between sample means over all
two-layer neural networks (of arbitrarily large width) subject to a weight decay
constraint (Section \ref{sec:radon-ks-test}). Although we focus throughout on
two-layer neural networks, the IPM defined over deeper networks can be viewed as 
the RKS distance in the representation learned by the hidden layers.

The representation \eqref{eq:test-stat} also hints at some interesting
statistical properties. Let $X \sim \empP$ and $Y \sim \empQ$ be random
variables drawn from the empirical distributions of \smash{$\{x_i\}_{i=1}^{m}$}
and \smash{$\{y_i\}_{i=1}^{n}$}, respectively. The $0\th$ degree RKS distance
finds the direction $w \in \S^{d - 1}$ along which the usual Kolmogorov-Smirnov
distance---maximum gap in CDFs---between the univariate random variables $w^\T
X$ and $w^\T Y$ is as large as possible. For $k \geq 1$, the $k\th$ degree
statistic finds the direction $w \in \S^{d-1}$ along which the higher-order KS
distance---maximum gap in truncated $k\th$ moments---between $w^\T X$ and
$w^\T Y$ is maximized.

Thus, speaking informally, we expect the RKS test to be particularly sensitive
to the case when the laws of $w^\T X$ and $w^\T Y$ differ primarily in
just a few directions $w$ (since we are taking a maximum over all such 
directions in \eqref{eq:test-stat}). In other words, we expect the RKS test to be
sensitive to \emph{anisotropic} differences between $\popP$ and $\popQ$. On the  
other hand, taking a larger value of $k$ results in higher-order truncated
sample moments, which can be much more sensitive to small differences in the 
tails. Figure \ref{fig:illustrative} gives an illustrative example.  We consider 
two normal distributions with equal means and covariances differing in just one 
direction (along the x-axis). The figure displays the witnesses (ridge splines)
for the RKS test when $k = 0,1,2$. For larger $k$, the witness is more aligned 
with the direction in which $\popP$ and $\popQ$ vary, and also places more
weight on the tails.  

\begin{figure}[htb]
\centering
\includegraphics[width=\textwidth]{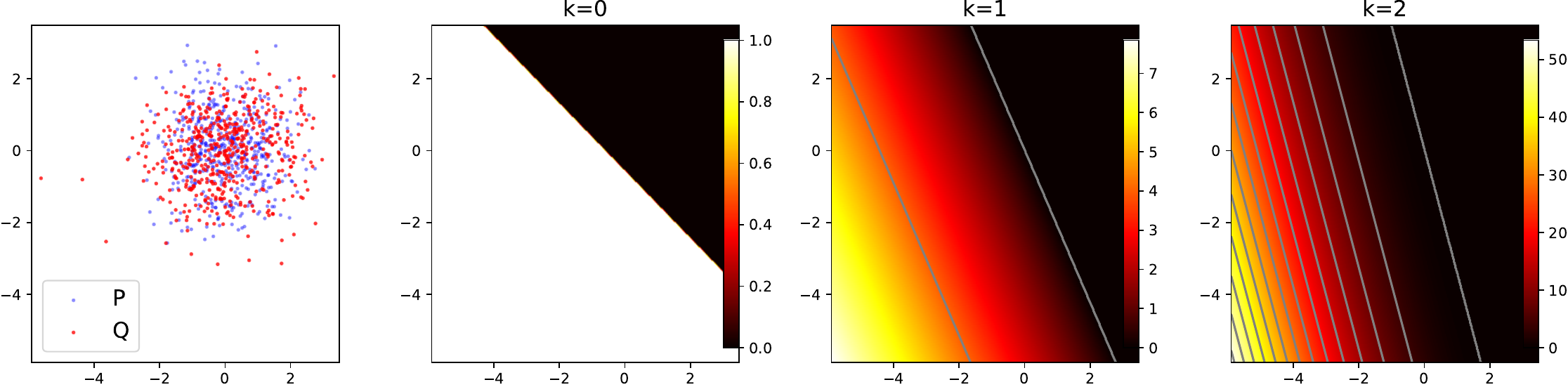}
\caption{Illustration of RKS tests for $\popP = \cN_2(0,I)$ and $\popQ = 
  \cN_2(0,D)$, where $D = \text{diag}(1.4, 1)$.}  
\label{fig:illustrative}
\end{figure}

\paragraph{Summary of contributions.} 

Our main contributions are as follows.

\begin{itemize}
\item We propose a new class of two-sample tests using a $k\th$ degree Radon 
  total variation IPM. We prove a representer theorem (Theorem
  \ref{thm:test-stat-is-relu-power}) which shows that the IPM is witnessed by a
  ridge spline (establishing the representation in \eqref{eq:test-stat}). This
  connects our proposal to two-sample testing based on neural networks, and
  helps with optimization.     

\item Under the null $\popP = \popQ$, we derive the asymptotic distribution of
  the test statistic (Theorem \ref{thm:null-distribution}).  

\item Under the alternative $\popP \neq \popQ$, we give upper bounds on the rate
  at which the test statistic concentrates around the population-level IPM.
  Using the fact that the population-level IPM is a metric, we then show that
  the RKS test is \emph{consistent} against any fixed $\popP \neq \popQ$: it has
  asymptotic error (sum of type I and type II errors) tending to zero (Theorem
  \ref{thm:asymptotic-power}).     

\item We complement our theory with numerical experiments to explore the
  operating characteristics of the RKS test compared to other popular
  nonparametric two-sample tests.  
\end{itemize}

\paragraph{Related work.}

When $\cF$ is the unit ball in a reproducing kernel Hilbert space (RKHS), the 
IPM in \eqref{eq:ipm} is known as a \emph{maximum mean discrepancy} (MMD).
Recently, there has been interest in studying multivariate nonparametric
two-sample testing using kernel MMDs \citep{gretton2012kernel}, as well as
energy distances \citep{baringhaus2004new, szekely2005new}. The latter are in
many cases equivalent to a kernel MMD \citep{sejdinovic2013equivalence}. It is
worth emphasizing that the Radon bounded variation space is \emph{not} an RKHS, 
and as such, our test cannot be seen as a special case of a kernel MMD. There
are also many other kinds of multivariate nonparametric two-sample tests, such
as graph-based tests using kNN graphs \citep{schilling1986multivariate,
henze1988multivariate} or spanning-trees \citep{friedman1979multivariate}. The
RKS test statistic \eqref{eq:test-stat} avoids the need to construct a graph
over the samples. However, we note that one could also use a graph to measure 
(approximate) multivariate total variation of a different variety (often called
\emph{measure-theoretic} TV, which is not the same as Radon TV), and one could
then form a related two-sample test, accordingly. 

From a different point of view, based on \eqref{eq:test-stat}, the RKS test can
be understood as a projection test which uses the higher-order KS test statistic   
\citep{wang2014falling, sadhanala2019higher} as its base univariate tool. In
general, projections are a natural way to lift univariate tools for statistical
modeling (whether in testing, estimation, or prediction) to multivariate data
settings. The literature is rich with contributions studying projection tests in
two-sample settings, including \citet{mueller2015principal,
  ghosh2016distribution, wei2016direction, paty2019subspace,
  lin2020projection, niles2022estimation}. Our paper complements this line of 
work by connecting projection tests to IPMs over bounded variation spaces,
which, to our knowledge, has not been done before.    

In machine learning, two-sample tests play a fundamental role in generative
adversarial networks (GANs) \citep{goodfellow2014generative}, which have proven
to be an effective way to generate new (synthetic) draws that adhere to the
distribution of a given high-dimensional set of samples. In short, GANs are
trained to produce synthetic data which a two-sample test cannot distinguish
from ``real'' samples. However, optimizing a GAN is notoriously unstable. It has
been suggested that the stability of GANs can be improved if the underlying
two-sample test is based on an IPM, with constraints on, e.g., the Lipschitz
constant \citep{arjovsky2017wasserstein}, RKHS norm \citep{li2017mmd}, or
Sobolev norm \citep{mroueh2018sobolev} of the witness. In practice, a neural
network is used to train the discriminator in the GAN, which only approximates
the witness (the function achieving the supremum) in IPMs over RKHS or Sobolev
spaces. Interestingly, for RBV spaces, this is not an approximation and a
two-layer neural network captures the witness exactly, as shown in
\eqref{eq:test-stat}.

At a technical level, our paper falls into a line of work exploring neural
networks from the functional analytic perspective. The equivalences between
weight decay of parameters in a two-layer neural network and first-degree ($k = 
1$) Radon total variation were discovered in \citet{savarese2019infinite, 
  ongie2020function}, and further formalized and extended by
\citet{parhi2021banach} to the higher-degree case ($k > 1$). We make use of
these equivalences to prove results on the Radon TV IPM. Lastly, we note that
the sensitivity of neural networks to variation in only a few directions was
observed even earlier, in \citet{bach2017breaking}.     

\section{The Radon-Kolmogorov-Smirnov test}
\label{sec:radon-ks-test}

For independent samples $x_i \sim \popP$, $i = 1,\ldots,m$ and $y_i \sim \popQ$, 
$i = 1,\ldots,n$, and a given integer $k \geq 0$, the RKS test statistic
\smash{$T_{d,k}$} is defined as in \eqref{eq:test-stat}. This statistic
searches for a marginal (defined by projection onto the direction $w$) with the
largest discrepancy in a truncated $k\th$ moment. As with any two-sample test,
we can use permutations to compute a finite-sample valid p-value,
\begin{equation}
\label{eq:test-perm}
p = \frac{\sum_{b=1}^{B} \one\big\{ T_{d,k}(z_{\pi_b}) \geq T_{d,k}(z) \big\} +  
  1}{B+1}.
\end{equation}
Here we use $z = (x_1,\ldots,x_m,y_1,\ldots,y_n) \in \R^{m+n}$ to denote the
original data sequence, and we use \smash{$T_{d,k}(z)$} to emphasize that the 
statistic in \eqref{eq:test-stat} is computed on $z$. Moreover, each $\pi_b$ is
a permutation of $\{1,\ldots,m+n\}$, drawn uniformly at random (from the set of
all possible permutations), and we use \smash{$T_{d,k}(z_{\pi_b})$} to denote
the test statistic computed on the permuted sequence \smash{$z_{\pi_b} = 
  (z_{\pi_b(1)},\ldots,z_{\pi_b(m+n)})$}. Given any user-chosen level $\alpha
\in [0,1]$, we reject $H_0: \popP = \popQ$ if $p \leq \alpha$. By well-known
results on permutation tests (see, e.g., \citet{hemerik2018exact}, and
references therein), this gives finite-sample type I error control, 
\smash{$\P_{H_0} (p \leq \alpha) \leq \alpha$}. 

Finding the value of $w$ and $b$ achieving the supremum in \eqref{eq:test-stat}
is a nonconvex problem. However, it bears a connection to neural network
optimization, which is of course among the most familiar and widely-studied
nonconvex problems in machine learning. We show in Corollary
\ref{cor:Tdk-by-N-neurons} that problem \eqref{eq:test-stat} can be equivalently 
cast as finding the maximum discrepancy between sample means over all two-layer
neural networks (of arbitrarily large width) under a weight decay-like
constraint. In particular, if we denote by \smash{$f_{(a_j,w_j,b_j)_{j=1}^N}$}
the $N$-neuron two-layer neural network defined by
\begin{equation}
\label{eq:f-param}
f_{(a_j,w_j,b_j)}(x) = a_j (w_j^\T x - b_j)_+^k
\quad \text{and} \quad
f_{(a_j,w_j,b_j)_{j=1}^N}(x) = \sum_{j=1}^N \, f_{(a_j,w_j,b_j)}(x),
\end{equation}
then the RKS test statistic is equivalently
\begin{equation} 
\label{eq:Tdk-by-N-neurons}
\begin{aligned}
T_{d,k} \,=\,
&\max_{f = f_{(a_j,w_j,b_j)_{j=1}^N}}
&&\empP(f) - \empQ(f) \\
&\;\;\;\; \text{subject to} \;\;\;
&& {\textstyle \sum_{j=1}^N \, |a_j| \|w_j\|_2^k \leq 1}, \; \text{and} \; b_j \geq 0, \; j = 1,\ldots,N.
\end{aligned}
\end{equation}
This equivalence holds for any number of neurons $N \geq 1$, which is why we say
that the network can be of ``arbitrarily large width''.

In Section \ref{sec:experiments}, we use a Lagrangian form of this constrained
optimization, allowing us to leverage existing deep learning toolkits to compute
the RKS distance. As with neural network optimization in general, we cannot
prove that any practical first-order scheme---such as gradient descent or any
its variants---is able to find the global optimum of
\eqref{eq:Tdk-by-N-neurons}, or its Lagrange form. However, we find empirically
that the test statistic resulting from first-order optimization has favorable
behavior in practice, such as stability (with respect to the choice of learning
rate) and strong power (with respect to certain classes of alternatives).
Further, we recall that in practice we would typically use the permutation
p-value \eqref{eq:test-perm}, and as long as the same first-order optimization
algorithm is applied to the original data sequence and to every permuted
sequence, this still provides finite-sample type I error control. In effect, we
can view this as \emph{redefining} the RKS test to be the output of the given
first-order algorithm we have at our disposal.

\subsection{Functions of Radon bounded variation}

The RKS distance, like the univariate KS distance, can be viewed as the maximum
mean discrepancy with respect to a function class defined by a certain notion of
smoothness. Whereas the univariate KS distance is the IPM over functions with
total variation is bounded by 1, the RKS distance is the IPM over functions with 
Radon total variation bounded by 1, subject to a boundary condition. This will
be shown a bit later; first, we must cover preliminaries needed to understand
Radon total variation. 

Heuristically, the $k\th$ degree Radon total variation of  $f: \reals^d
\to \reals$ measures the average smoothness of the ``marginals'' of 
$f$. To be more concrete, consider a function $f: \reals^d \to \reals$
which is infinitely differentiable, and whose derivatives of any order decay 
super-polynomially: \smash{$f \in C^\infty(\reals^d)$} and\smash{ $\sup_{x \in
    \reals^d} |x^\alpha \partial^\beta f(x)| < \infty$}. Here $\alpha =
(\alpha_1,\ldots,\alpha_d)$ and $\beta = (\beta_1,\ldots,\beta_d)$ are 
multi-indices (with nonnegative integer elements), and we use the abbreviations     
\[
x^\alpha = \prod_{i=1}^d x_i^{\alpha_i}, 
\quad \text{and} \quad
\partial^\beta f = \partial_{x_1}^{\beta_1} \cdots \partial_{x_d}^{\beta_d} f. 
\]
The set of such functions is called the \emph{Schwartz class}, and denoted by
$\cS(\reals^d)$. The \emph{Radon transform} of a function $f \in \cS(\reals^d)$
is itself another function $\cR\{f\} : \RadonDomain \to \reals$ defined as   
\[
\cR\{f\}(w,b) = \int_{w^\T x = b} f(x) \, \de x,
\]
where the integral is with respect to the Lebesgue surface measure on the
hyperplane $\{w^\T x = b\}$. As a function of $b$, the Radon transform can be
viewed as the marginal of the function $f$ in the direction $w$. The $k\th$
degree Radon total variation of $f$ is then defined to be
\citep{parhi2021banach, parhi2023near}:\footnote{A remark on notation and  
  nomenclature: we use \smash{$\RBVk$} to refer to the space which
  \citet{parhi2021banach, parhi2022ridge, parhi2023near} instead call
  \smash{$\RBV^{k+1}$}. That is, we index based on the \emph{degree} of the
  underlying ridge spline, whereas the aforementioned papers index based on the
  \emph{order}, traditionally defined as the degree plus one, $m = k+1$.}  
\[
\| f \|_{\RTVk} =  c_d \int_{\RadonDomain} \big|\partial_b^{k+1} \Lambda^{d-1} 
\cR\{f\}(w,b)\big| \, \de \sigma(w,b),  
\]
where $\sigma$ is the Hausdorff measure on $\RadonDomain$, \smash{$c_d =
  1/(2(2\pi)^{d-1})$}, and $\Lambda^{d-1}$ is the ``ramp filter'' defined by
\smash{$\Lambda^{d-1} = (-\partial_b^2)^{\frac{d-1}2}$} with fractional
derivatives interpreted in terms of Riesz potentials (see
\citet{parhi2021banach, parhi2023near} for details). Equivalently, if we define
an operator by \smash{$R_k\{f\}(w,b) = c_d\partial_b^{k+1}
  (-\partial_b^2)^{\frac{d-1}{2}} \cR\{f\}(w,b)$}, then we can write $k\th$
degree Radon TV of $f$ simply as \smash{$\| f \|_{\RTVk} = \| R_k \{ f \}
  \|_{L^1}$}. The mapping \smash{$f \mapsto \| f \|_{\RTVk}$} is a seminorm,
which we call the \smash{$\RTVk$} seminorm. 

\citet{parhi2021banach, parhi2023near} and \citet{parhi2022ridge} extend the
\smash{$\RTVk$} seminorm beyond Schwartz functions using functional analytic
tools based on duality. The space of functions with bounded \smash{$\RTVk$}
seminorm is referred to as the \smash{$\RBVk$} space (where RBV stands for
``Radon bounded variation''). After this extension, one can still think of
\smash{$\RTVk$} seminorm as measuring average higher-order smoothness of the 
marginals of $f$, but it does not require that all derivatives and integrals
exist in a classical sense. We refer the reader especially to
\citet{parhi2022ridge} for the complete functional analytic definition of the
\smash{$\RBVk$} space. For our purposes, it is only important that
\smash{$\RBVk$} functions enjoy the representation established by the following
theorem. Note that in order to define the \smash{$\RBVk$} space,
\citet{parhi2021banach, parhi2022ridge} must view it as a space of equivalence
classes of functions under the equivalence relation $f \sim g$ if $f(x) = g(x)$
for Lebesgue almost every $x \in \R^d$. We denote elements of \smash{$\RBVk$} as
$\sf$, and the functions they contain as $f \in \sf$.  

\begin{proposition}[Adaptation of Theorem 22 in \citet{parhi2021banach}; Theorem
  3.8 in \citet{parhi2022ridge}] 
\label{prop:Parhi-Nowak-repr} 
Fix any $k \geq 0$. For each $\sf \in \RBVk$, there exists a representative $f
\in \sf$ which satisfies for all $x \in \reals^d$
\[
  f(x) = \int_{\nonnegRD} (w^\T x-b)_+^k \, \de \mu(w,b) + q(x), 
\]
for some finite signed Borel measure $\mu$ on \smash{$\nonnegRD$} satisfying
\smash{$\| \mu \|_{\TV} = \| \sf \|_{\RTVk}$}, and some polynomial $q$ of degree
at most $k$, where we use \smash{$\| \cdot \|_{\TV}$} for the total variation
norm in the sense of measures.  
\end{proposition}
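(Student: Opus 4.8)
\emph{Proof strategy.} The plan is to import the representer theorem of \citet{parhi2021banach, parhi2022ridge} on the \emph{full} Radon domain $\RadonDomain$ and then ``fold'' the representing measure onto the half domain $\nonnegRD$ using an elementary identity for truncated powers, checking along the way that the total variation norm is preserved exactly.

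First I would invoke Theorem~22 of \citet{parhi2021banach} (equivalently Theorem~3.8 of \citet{parhi2022ridge}): for $\sf\in\RBVk$ there is a representative $f_0\in\sf$ with $f_0(x)=\int_{\RadonDomain}(w^\T x-b)_+^k\,\de\mu_0(w,b)+q_0(x)$ for every $x$, where $q_0$ is a polynomial of degree at most $k$ and $\mu_0$ is a finite signed Borel measure on the full Radon domain; the canonical choice is $\mu_0=R_k\{\sf\}$, so that $\|\mu_0\|_{\TV}=\|R_k\{\sf\}\|_{\TV}=\|\sf\|_{\RTVk}$ directly from the definition of the seminorm, the constant $c_d$ being chosen precisely so that $R_k$ is a (symmetrized) inverse of the ridge-spline synthesis operator. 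From this construction I would extract three facts for later use: (i) $\mu_0$ satisfies the antipodal parity relation $\iota_\#\mu_0=(-1)^{k+1}\mu_0$, where $\iota(w,b)=(-w,-b)$, inherited from $\cR\{f\}(w,b)=\cR\{f\}(-w,-b)$ together with the $(k+1)$ derivatives in $b$ appearing in $R_k$; (ii) convergence of the integral for all $x$ forces $\int(1+|b|)^k\,\de|\mu_0|<\infty$, since $(w^\T x-b)_+^k$ grows like $|b|^k$ as $b\to-\infty$; and (iii) the total variation norm is additive over disjoint Borel sets.

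Next comes the fold. The elementary identity is $t_+^k=t^k+(-1)^{k+1}(-t)_+^k$ for all $t\in\R$ (check $t\ge0$ and $t<0$ separately). Applying it with $t=w^\T x-b$ and using $(b-w^\T x)_+^k=\big((-w)^\T x-(-b)\big)_+^k$, I would split $\int_{\RadonDomain}=\int_{\nonnegRD}+\int_{\negativeRD}$ and rewrite the second piece: since $b<0$ there means $-b>0$, the change of variables $\iota$ turns it into $(-1)^{k+1}\int_{\positiveRD}(w^\T x-b)_+^k\,\de(\iota_\#(\mu_0|_{\negativeRD}))(w,b)$ plus the polynomial $q_1(x)=\int_{\negativeRD}(w^\T x-b)^k\,\de\mu_0(w,b)$, which is well defined and of degree at most $k$ by fact (ii). This yields $f_0(x)=\int_{\nonnegRD}(w^\T x-b)_+^k\,\de\mu(w,b)+q(x)$ with $\mu:=\mu_0|_{\nonnegRD}+(-1)^{k+1}\iota_\#(\mu_0|_{\negativeRD})$ (the second summand supported on $\positiveRD$) and $q:=q_0+q_1$ of degree at most $k$. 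To see that $\|\mu\|_{\TV}=\|\sf\|_{\RTVk}$, I would use parity: from $\iota_\#\mu_0=(-1)^{k+1}\mu_0$ one gets $\iota_\#(\mu_0|_{\negativeRD})=(-1)^{k+1}\mu_0|_{\positiveRD}$, hence $\mu|_{\positiveRD}=2\mu_0|_{\positiveRD}$ while $\mu|_{\zeroRD}=\mu_0|_{\zeroRD}$; and since $\iota$ is a total-variation isometry, $\|\mu_0|_{\negativeRD}\|_{\TV}=\|\mu_0|_{\positiveRD}\|_{\TV}$. By fact (iii), $\|\mu\|_{\TV}=2\|\mu_0|_{\positiveRD}\|_{\TV}+\|\mu_0|_{\zeroRD}\|_{\TV}=\|\mu_0\|_{\TV}=\|\sf\|_{\RTVk}$.

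The only genuinely nontrivial part is the first step: correctly transporting the functional-analytic construction of \citet{parhi2021banach, parhi2022ridge}---in which $\RBVk$ is defined abstractly by duality rather than constructively---so as to obtain the representation with the \emph{exact} equality $\|\mu_0\|_{\TV}=\|\sf\|_{\RTVk}$ (rather than an inequality, or an equality only up to a dimensional constant), together with the precise antipodal parity sign and the moment bound (ii). Notably, the clean doubling in the norm computation above depends on this sign being exactly $(-1)^{k+1}$: were it $(-1)^k$, the folded mass on $\positiveRD$ would cancel instead of double. I expect most of the effort to go into pinning these properties down from the cited theorems and the normalization of $c_d$; once they are in hand, the passage to $\nonnegRD$ is the short computation above. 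As a consistency check, the case $d=1$, $k=0$ recovers the classical fact that a bounded-variation function decomposes into one-sided step functions plus a constant, with $\|\mu\|_{\TV}$ equal to the ordinary total variation.
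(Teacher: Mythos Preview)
Your folding strategy is essentially the same as the paper's: use the antipodal parity of the representing measure to show that the contribution from $b<0$ coincides with that from $b>0$, define the half-domain measure as $\mu_0|_{\zeroRD}+2\mu_0|_{\positiveRD}$, and check that the TV norm is preserved. Your identity $t_+^k=t^k+(-1)^{k+1}(-t)_+^k$ plays the same role as the paper's condition on the correction polynomials.

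There is, however, a real gap in your first step. The representation you import from \citet{parhi2021banach} is stated too cleanly: Theorem~22 there does \emph{not} give $f_0(x)=\int_{\RadonDomain}(w^\T x-b)_+^k\,\de\mu_0+q_0$ with the bare kernel. It gives
\[
f_0(x)=\int_{\RadonDomain}\Big[(w^\T x-b)_+^k-\sum\nolimits_{|\alpha|\le k}c_\alpha(w,b)x^\alpha\Big]\,\de\mu_0(w,b)+q_0(x),
\]
with polynomial correction terms $c_\alpha$ inserted precisely so that the integrand is $\mu_0$-integrable for every $x$. Without them, the bare kernel behaves like $|b|^k$ as $b\to-\infty$ and the integral need not converge: there is no a~priori bound of the form $\int(1+|b|)^k\,\de|\mu_0|<\infty$. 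Your ``fact~(ii)'' derives this moment bound \emph{from} convergence of the uncorrected integral, which is circular---the cited theorem does not assert that convergence. Consequently your polynomial $q_1(x)=\int_{\negativeRD}(w^\T x-b)^k\,\de\mu_0$ is not known to be well defined.

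The fix is exactly what the paper does: carry the $c_\alpha$ corrections through the fold (using their own parity relation $c_\alpha(w,b)x^\alpha+(-1)^k c_\alpha(-w,-b)x^\alpha$ summing to $(w^\T x-b)^k$), and only \emph{after} restricting to $b\ge 0$ observe that the bare kernel $(w^\T x-b)_+^k\le\|x\|_2^k$ is bounded uniformly in $(w,b)\in\nonnegRD$, so it is integrable against any finite measure there. At that point the correction terms can be separated out and absorbed into $q$. Your argument recovers once this is in place; the remainder---parity sign, doubling on $\positiveRD$, TV additivity---is correct and matches the paper.
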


Proposition \ref{prop:Parhi-Nowak-repr} is the key result connecting the
\smash{$\RBVk$} space to two-layer neural networks. It states that any
\smash{$\RBVk$} functions is equal almost everywhere to the sum of a (possibly 
infinite-width) two-layer neural network and a polynomial. Furthermore, the
$\ell_1$ norm of coefficients in the last layer of this neural network
(formally, the total variation norm \smash{$\| \mu \|_{\TV}$} of the measure
$\mu$ defined over the coefficients) equals the \smash{$\RBVk$} seminorm of the
function in question. As stated, Proposition \ref{prop:Parhi-Nowak-repr} is a
slight refinement (and simplification) of results in \citet{parhi2021banach, 
  parhi2022ridge}, which we show in Appendix \ref{supp:Parhi-Nowak-repr}.  

\citet{parhi2021banach} consider the nonparametric regression problem defined 
by minimizing, over all functions $f$, the squared loss incurred by $f$ with
respect to a given finite data set plus a penalty on \smash{$\| f
  \|_{\RTVk}$}. They show this is solved by the sum of a \emph{finite-width} 
two-layer neural network and a polynomial, and hence provide a representation
theorem for \smash{$\RTVk$}-penalized nonparametric regression analogous to that
for kernel ridge regression \citep{wahba1990spline}. In a coming subsection, we 
will establish a similar connection in the context of two-sample testing: the
IPM under an \smash{$\RTVk$} constraint is realized by a single neuron.      

Before moving on, it is worth providing some intuition for the results just
discussed. First, the operator $R_k$ should be understood as transforming a 
function $f$ into the Radon domain: namely, it specifies $f$ via the
higher-order derivatives of its marginals. Then, the \smash{$\RTVk$} acts as the 
\smash{$L^1$} norm in the Radon domain. Lastly, the reason two-layer neural
networks and ridge splines emerge as the solutions to nonparametric regression 
and maximum mean discrepancy problems under an \smash{$\RTVk$} constraint is
that ridge splines are \emph{sparse} in the Radon domain.  In particular,
for \smash{$f(x) = (w^\T x - b)_{+}^{k}$}, we have \citep{parhi2021banach}:
\[
R_k\{f\} (w,b) = \frac{1}{2} \big( \delta_{(w,b)} + (-1)^{k+1} \delta_{(-w,-b)}
\big), 
\]
where \smash{$\delta_{(\pm  w,\pm b)}$} denote point masses at $(\pm w,\pm
b)$. Thus, \smash{$\RTVk$}-penalized optimization problems are solved by
two-layer neural networks due to the tendency of the \smash{$L^1$} norm to
encourage sparse solutions.      

\subsection{Pointwise evaluation of RBV functions}

Recall, the \smash{$\RBVk$} space contains equivalence classes of functions
whose members differ on sets of Lebesgue measure zero. Therefore, at face value,
point evaluation is meaningless for elements of the \smash{$\RBVk$} space.  This
poses a problem for defining an IPM with respect to RBV functions, since the IPM
depends on empirical averages, which require function evaluations over
samples. Fortunately, we show that there is a natural way to resolve point
evaluation over the \smash{$\RBVk$} space: each equivalence class in
\smash{$\RBVk$} has, possibly subject to a mild boundary condition, a unique
representative which satisfies a suitable notion of continuity. We can then
define point evaluation with respect to this representative.

For the case $k = 0$, we need to introduce a new notion of continuity, which we
call \emph{radial cone continuity}. For $k \geq 1$, we rely on the standard
notion of continuity. Radial cone continuity is defined as follows. 

\begin{definition}
A function $f$ is said to be \emph{radially cone continuous} if it is
continuous at the origin, and for any $x \in \reals^d \setminus \{0\}$, 
we have $f(x + \epsilon v) \to f(x)$ whenever (i) $\epsilon \to 0$ and 
(ii) $v \in \S^{d-1}$ and $v \to x / \|x\|_2$. 
\end{definition}

Radial cone continuity can be viewed as a generalization of right-continuity to
higher dimensions. Indeed a consequence of radial cone continuity is that for
every \smash{$v \in \reals^d$}, the restriction of the function $f$ to the ray
emanating from the origin in the direction $v$ is right-continuous (i.e., $t
\mapsto f(tv)$, $t > 0$ is right-continuous). The concept of radial cone
continuity, however, is stronger than right-continuity along rays---it
additionally requires continuity along any path which is tangent to and
approaches $x$ from the same direction as the ray that points from the origin to 
$x$. The following theorem shows that (subject to a boundary condition for the 
case $k = 0$) every equivalence class \smash{$\sf \in \RBVk$} contains a unique    
continuous representative when $k \geq 1$, and radially cone continuous
representative when $k = 0$.

\begin{theorem}
\label{thm:RBVk-continuous-representatives}
For $k = 0$, if \smash{$\sf \in \RBV^0$} contains a function that is continuous
at the origin, then it contains a unique representative that is radially cone
continuous. Moreover, for $k \geq 1$, each \smash{$\sf \in \RBVk$} contains a 
unique representative that is continuous and this representative is in fact
$(k-1)$-times continuously differentiable.  
\end{theorem}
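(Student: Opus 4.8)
The plan is to work throughout with the explicit integral representation from Proposition~\ref{prop:Parhi-Nowak-repr}. Fix $\sf \in \RBVk$ and let $f(x) = \int_{\nonnegRD}(w^\T x - b)_+^k\,\de\mu(w,b) + q(x)$ be the representative it provides, with $\mu$ a finite signed Borel measure on $\nonnegRD$ and $q$ a polynomial of degree at most $k$. Two things must be shown: an \emph{existence} statement, that $f$ (after, when $k=0$, a modification on a Lebesgue-null set) has the asserted regularity; and a \emph{uniqueness} statement. Note first that every function of this form is finite everywhere: since $b\ge0$ and $\|w\|_2=1$, one has $0\le(w^\T x-b)_+\le(w^\T x)_+\le\|x\|_2$, so $|f(x)|\le\|x\|_2^k\|\mu\|_{\TV}+|q(x)|<\infty$.

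\emph{Existence, $k\ge1$.} Here I would show directly that the representative $f$ above lies in $C^{k-1}(\R^d)$, which in particular gives continuity. The scalar map $t\mapsto t_+^k$ has $j$-th derivative $\frac{k!}{(k-j)!}t_+^{k-j}$, continuous for $j\le k-1$ (as then $k-j\ge1$), so $x\mapsto(w^\T x-b)_+^k$ is $C^{k-1}$ with $\partial^\beta(w^\T x-b)_+^k=\frac{k!}{(k-|\beta|)!}(w^\T x-b)_+^{k-|\beta|}\prod_i w_i^{\beta_i}$ for $|\beta|\le k-1$. On any ball $\{\|x\|_2\le M\}$ each such partial is bounded in absolute value by the $\mu$-integrable constant $\frac{k!}{(k-|\beta|)!}M^{k-|\beta|}$. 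The standard theorem on differentiation under the integral sign then lets us move $\partial^\beta$ inside the integral for all $|\beta|\le k-1$, and dominated convergence shows the result is continuous in $x$; since $q\in C^\infty$, this gives $f\in C^{k-1}$.

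\emph{Existence, $k=0$.} Now $(w^\T x-b)_+^0=\one\{w^\T x>b\}$, and the two points are: (a) for the part of $\mu$ on $\{b>0\}$, the \emph{closed} half-space indicator $\one\{w^\T x\ge b\}$ (differing from the open one only on a null hyperplane) is the correct pointwise representative; and (b) the part of $\mu$ on $\{b=0\}$ must, under the hypothesis, collapse to an additive constant. For (a): fix $(w,b)$ with $b>0$ and $x_0\ne0$, and take any $\epsilon_n\downarrow0$, $v_n\in\S^{d-1}$ with $v_n\to x_0/\|x_0\|_2$; then $w^\T(x_0+\epsilon_n v_n)\to w^\T x_0$, and in the borderline case $w^\T x_0=b$ the limiting direction satisfies $w^\T(x_0/\|x_0\|_2)=b/\|x_0\|_2>0$, so $w^\T(x_0+\epsilon_n v_n)=b+\epsilon_n w^\T v_n>b$ for large $n$ --- in every case $\one\{w^\T(x_0+\epsilon_n v_n)\ge b\}\to\one\{w^\T x_0\ge b\}$. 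These indicators are dominated by $\one\{b\le\|x_0\|_2+1\}$ for large $n$, which is $|\mu|$-integrable, so dominated convergence makes $x\mapsto\int_{\{b>0\}}\one\{w^\T x\ge b\}\,\de\mu$ radially cone continuous (the check at $x_0=0$ is easier and also gives continuity at the origin). For (b): writing $\mu_0$ for $\mu$ restricted to $\{b=0\}$ (a finite signed measure on $\S^{d-1}$), its contribution is $G(x)=\mu_0(\{w:w^\T x>0\})$, which is constant along rays from the origin. Near $0$ the $\{b>0\}$ part of $f$ tends to $0$, so $f(x)=G(x)+c+o(1)$ as $x\to0$ for a constant $c$ (recall $q$ has degree $\le0$); if some representative of $\sf$ is continuous at the origin --- hence a.e.\ equal to a single value near $0$ --- then $G$ cannot take two values each on a positive-measure subset of $\S^{d-1}$ (else $f$ would oscillate on positive-measure cones accumulating at $0$), so $G$ equals some constant $\kappa$ a.e.\ on $\S^{d-1}$, hence a.e.\ on $\R^d$. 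Therefore, using that each hyperplane is Lebesgue-null, $f$ agrees a.e.\ with
\[
f^\star(x)=\int_{\S^{d-1}\times(0,\infty)}\one\{w^\T x\ge b\}\,\de\mu(w,b)+\big(q+\kappa\big),
\]
which is radially cone continuous by (a); this $f^\star$ is the desired representative.

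\emph{Uniqueness, and the main obstacle.} If $f,g\in\sf$ are both continuous (case $k\ge1$) and agree Lebesgue-a.e., their agreement set is dense (the complement is null, hence has empty interior), so $f=g$ everywhere by continuity. For $k=0$: at any $x_0\ne0$ and any $\delta>0$, the open set of points $x_0+\epsilon v$ with $0<\epsilon<\delta$ and $v\in\S^{d-1}$ within distance $\delta$ of $x_0/\|x_0\|_2$ has positive Lebesgue measure, so it meets the co-null agreement set of two radially cone continuous representatives; sending $\delta\to0$ yields admissible sequences along which they agree, so radial cone continuity forces agreement at $x_0$, and agreement at the origin follows from continuity there. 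The routine part is the $k\ge1$ case --- essentially bookkeeping around differentiation under the integral sign. The real work is point (b): showing that continuity at the origin of \emph{some} representative forces the $b=0$ part of $\mu$ to degenerate into a constant. That is where I expect the argument to require the most care, as it must rule out measures $\mu_0$ that are ``half-space balanced'' almost everywhere without being symmetric.
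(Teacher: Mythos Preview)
Your proposal is correct and follows essentially the same route as the paper: both arguments start from the integral representation of Proposition~\ref{prop:Parhi-Nowak-repr}, split the $k=0$ case into the $b>0$ and $b=0$ contributions (showing the latter collapses to a constant under the continuity-at-origin hypothesis, and the former with closed indicators is radially cone continuous via a pointwise limit plus dominated convergence), and handle $k\geq 1$ by differentiating under the integral sign with the obvious envelope. Your uniqueness argument for $k=0$ is actually spelled out more carefully than in the paper, which states uniqueness in Lemma~\ref{lem:radial-cone-continuity} but does not give the cone-sequence argument you provide.
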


Theorem \ref{thm:RBVk-continuous-representatives}, which as far as we can tell
is a new result for RBV spaces, is proved in Appendix
\ref{supp:RBVk-continuous-representatives}. This theorem motivates the definition
of the function spaces:   
\begin{equation}
\label{eq:rbv-cont}
\begin{aligned}
\RBV_c^0 &= \Big\{ f \,:\, \text{$f \in \sf$ for some $\sf \in \RBV^0$, $f$ 
  radially cone continuous} \Big\}, \\
\RBV_c^k &= \Big\{ f \,:\, \text{$f \in \sf$ for some $\sf \in \RBV^k$, $f$
  continuous} \Big\}, \quad k \geq 1.
\end{aligned}
\end{equation}

Unlike \smash{$\RBVk$}, the space \smash{$\RBV_c^k$} contains functions rather
than equivalence classes of functions, so that point evaluation is
well-defined. The Radon total variation of a member $f \in \sf$ is defined in
the natural way as the Radon total variation of the function class in
\smash{$\RBVk$} to which it belongs.

\subsection{The RKS distance is an IPM}

We are ready to show that the RKS distance is equivalent to the IPM over
functions $f$ with \smash{$\|f\|_{\RTVk} \leq 1$}, subject to a boundary 
condition. In this sense, it can be viewed as a generalization of the classical 
univariate KS distance, which, as we have mentioned, is the IPM over the space
of functions with total variation at most 1. Precisely, the RKS distance is the
IPM over the function space   
\begin{equation}
\label{eq:fk-space}
\cF_k = \Big\{ f \in \RBV_c^k \,:\, \|f\|_{\RTVk} \leq 1, \;
\text{$\partial^\alpha f(0)$ exists and equals 0 for all $|\alpha| \leq k$} 
\Big\}.   
\end{equation}
The boundary condition \smash{$\partial^\alpha f(0) = 0$} for all multi-indices
$\alpha = (\alpha_1,\ldots,\alpha_d)$ with \smash{$|\alpha| = \sum_{j=1}^d 
\alpha_j \leq k$} is needed to guarantee that the IPM is finite. We can
interpret this condition intuitively as requiring the polynomial $q$ to be zero
in the representation in Proposition \ref{prop:Parhi-Nowak-repr}. Otherwise, $q$
would be able to grow without bound under the constraint \smash{$\|f\|_{\RTVk}
  \leq 1$}, which we could then use to drive the IPM criterion to $\infty$,
provided that $\popP$ and $\popQ$ had any moment differences (in their first $k$
moments).     

The following result proves an important representation for functions in
\eqref{eq:fk-space}. Its proof is in Appendix \ref{supp:Fk-rep-poly0}.  

\begin{theorem}
\label{thm:Fk-rep-poly0}
Fix any $k \geq 0$. For any $f  \in \cF_k$, there exists a finite, signed Borel
measure $\mu$ on \smash{$\nonnegRD$} such that for all $x \in \reals^d$, 
\[
f(x) = \int_\nonnegRD (w^\T x-b)_+^k \, \de \mu(w,b),
\]
and \smash{$\|\mu\|_{\TV} = \|f\|_{\RTVk}$}. When $k = 0$, the measure $\mu$ may
be taken to be supported on \smash{$\positiveRD$}.
\end{theorem}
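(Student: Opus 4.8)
The plan is to push $f$ through the Parhi--Nowak representation of Proposition~\ref{prop:Parhi-Nowak-repr} and then exploit the boundary condition at the origin---via a homogeneity argument---to show that the polynomial term produced by that proposition exactly cancels the part of the representing measure carried by the slice $\zeroRD$. Fix $f\in\cF_k$; by Theorem~\ref{thm:RBVk-continuous-representatives} this is the unique continuous (for $k\ge1$), respectively radially cone continuous (for $k=0$), representative of some $\sf\in\RBVk$ with $\|\sf\|_{\RTVk}=\|f\|_{\RTVk}\le1$, and it satisfies $\partial^\alpha f(0)=0$ for all $|\alpha|\le k$. Proposition~\ref{prop:Parhi-Nowak-repr} supplies a finite signed Borel measure $\mu$ on $\nonnegRD$ with $\|\mu\|_{\TV}=\|f\|_{\RTVk}$ and a polynomial $q$ of degree at most $k$ such that $f=g+q$ Lebesgue-a.e., where $g(x)=\int_{\nonnegRD}(w^\T x-b)_+^k\,\de\mu(w,b)$. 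Split $g=h+g_0$ with $h(x)=\int_{\zeroRD}(w^\T x)_+^k\,\de\mu(w,0)$, homogeneous of degree $k$, and $g_0(x)=\int_{\positiveRD}(w^\T x-b)_+^k\,\de\mu(w,b)$. Since $(w^\T x-b)_+>0$ with $b>0$ forces $0<b<w^\T x\le\|x\|_2$, one gets $|g_0(x)|\le\|x\|_2^k\,|\mu|\big(\S^{d-1}\times(0,\|x\|_2)\big)$, and continuity of the finite measure $|\mu|$ then yields $g_0(x)=o(\|x\|_2^k)$ as $x\to0$; the target measure will turn out to be $\mu$ restricted to $\positiveRD$.

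For $k\ge1$, the next step upgrades the a.e.\ identity to a pointwise one and identifies $q$. Differentiating under the integral shows $g\in C^{k-1}$ with $\partial^\alpha g_0(0)=0$ for $|\alpha|\le k-1$, while homogeneity forces $\partial^\alpha h(0)=0$ for $|\alpha|\le k-1$; since $f$ is also $C^{k-1}$ and equals the continuous function $g+q$ a.e., it equals it everywhere, so matching partials of order $\le k-1$ at $0$ shows $q$ is homogeneous of degree $k$. The Peano form of Taylor's theorem---available because $f$ is $C^{k-1}$ near $0$ and its order-$k$ partials exist and vanish at $0$---then gives $f(x)=o(\|x\|_2^k)$, hence $h+q=f-g_0=o(\|x\|_2^k)$. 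A nonzero homogeneous function of degree $k$ is of exact order $\|x\|_2^k$ along some ray, so $h+q\equiv0$; therefore $q=-h$ and $f=g-h=g_0=\int_{\positiveRD}(w^\T x-b)_+^k\,\de\mu(w,b)$ pointwise, with representing measure $\mu|_{\positiveRD}$.

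For $k=0$ the same scheme runs ray by ray with radial cone continuity in place of derivatives. Along a ray $\{tv:t>0\}$ the function $h$ is constant in $t$, while $g_0(tv)\to0$ and $f(tv)\to f(0)=0$ as $t\to0^+$, and (modulo the convention for the $0$th power discussed below) both $f$ and $g+q$ are right-continuous along the ray; matching them a.e.\ along a.e.\ ray, promoting via right-continuity to equality for all $t>0$, and sending $t\to0^+$ forces the $\zeroRD$-contribution plus $q$ to vanish along a.e.\ ray, so $f=g_0$ a.e.; two radially cone continuous functions agreeing a.e.\ then agree everywhere, giving $f(x)=\int_{\positiveRD}(w^\T x-b)_+^0\,\de\mu(w,b)$ for all $x$. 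In every case the argument produces a measure supported on $\positiveRD$, which yields both the general statement and the $k=0$ refinement, and the norm equality follows from $\|\mu|_{\positiveRD}\|_{\TV}\le\|\mu\|_{\TV}=\|f\|_{\RTVk}$ together with the reverse bound $\|f\|_{\RTVk}=\|R_k\{f\}\|_{L^1}\le|\mu|(\positiveRD)=\|\mu|_{\positiveRD}\|_{\TV}$, which is immediate from linearity of $R_k$ and the point-mass formula $R_k\{(w^\T\cdot-b)_+^k\}=\tfrac12\big(\delta_{(w,b)}+(-1)^{k+1}\delta_{(-w,-b)}\big)$, whose $L^1$-norm is $1$.

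The main obstacle I anticipate is precisely the $k=0$ passage from the almost-everywhere identity to a pointwise one: unlike $k\ge1$, one cannot simply say ``two continuous functions agreeing a.e.\ are equal,'' because the Parhi--Nowak representative $g$ genuinely jumps wherever $\mu$ has an atom on a hyperplane $\{w^\T x=b\}$, so the ray-wise limiting argument---and with it the bookkeeping of which convention for $(\cdot)_+^0$ at such hyperplanes (and at the origin) makes the representing integral radially cone continuous, hence actually equal to $f$ rather than merely a.e.\ equal---has to be set up carefully. A secondary point to verify is that the order-$k$ partials appearing in the definition of $\cF_k$ really do license the Peano remainder estimate given only the $C^{k-1}$ regularity supplied by Theorem~\ref{thm:RBVk-continuous-representatives}.
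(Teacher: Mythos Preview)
Your overall strategy for $k\geq 1$ differs from the paper's, and the step you flagged as ``secondary'' is in fact a genuine gap. The paper's proof simply invokes Lemma~\ref{lem:repr-with-boundary-condition} (for $k\geq 1$) and Lemma~\ref{lem:radial-cone-continuity} (for $k=0$), both of which were already established in the course of proving Theorem~\ref{thm:RBVk-continuous-representatives}. For $k\geq 1$, Lemma~\ref{lem:repr-with-boundary-condition} does \emph{not} argue via a Taylor expansion of $f$; instead it computes the $k$th partials of the Parhi--Nowak representative explicitly and uses the parity of $\mu|_{\zeroRD}$ (odd when $k$ is even, even when $k$ is odd---this is recorded in the proof of Proposition~\ref{prop:Parhi-Nowak-repr}) to show that the $b=0$ contribution to each $D^\alpha f(0)$ with $|\alpha|=k$ vanishes, forcing $D^\alpha q(0)=0$ for all $|\alpha|\le k$ and hence $q\equiv 0$. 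The conclusion is $f=\int_{\nonnegRD}(\cdot)\,\de\mu$ with the \emph{original} $\mu$, so the norm equality is inherited directly from Proposition~\ref{prop:Parhi-Nowak-repr}.

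Your route instead tries to deduce $h+q\equiv 0$ from $f(x)=o(\|x\|_2^k)$. But the premise ``$f\in C^{k-1}$ and the order-$k$ partials exist and vanish at $0$'' does \emph{not} license the Peano remainder: it requires $k$-fold Fr\'echet differentiability at $0$, which is strictly stronger. A concrete obstruction (for $k=2$, $d=2$) is
\[
\varphi(x,y)=\frac{x^2y^2}{x^2+y^2},\qquad \varphi(0,0)=0.
\]
One checks $\varphi\in C^1(\R^2)$, all four second partials exist at the origin and equal $0$, yet $\varphi(t,t)=t^2/2$, so $\varphi$ is not $o(\|\cdot\|_2^2)$. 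Your argument uses nothing about $f$ beyond $C^{k-1}$ regularity and vanishing $k$th partials at $0$, so it cannot rule out this behaviour; what rescues the actual $f\in\cF_k$ is precisely the parity of $\mu|_{\zeroRD}$, which you never invoke. Two further remarks: (i) for $k\geq 1$ the theorem only asks for a measure on $\nonnegRD$, so you are aiming for more than is required and then have to patch the norm identity by an $R_k$-linearity argument that, as stated, leans on the extended (duality) definition of $R_k$ outside the Schwartz class; the paper avoids this entirely. (ii) For $k=0$ your ray-by-ray outline is close in spirit to the paper's Lemmas~\ref{lem:continuity-at-0}--\ref{lem:radial-cone-continuity}, which handle the pointwise identification cleanly via dominated convergence; following those lemmas would resolve the obstacle you anticipated.
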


Note that the representation provided by Theorem \ref{thm:Fk-rep-poly0} is
similar to that in Proposition \ref{prop:Parhi-Nowak-repr}, except that the
former explicitly deals with point evaluation and sets the polynomial term to
$q = 0$, which is a consequence of the boundary condition imposed in the
definition of $\cF_k$. (Theorem \ref{thm:Fk-rep-poly0} also appears similar to
results in \citet{parhi2021banach}, but differs from their results for the same
reasons.) 

The next theorem establishes the equivalence between \smash{$T_{d,k}$} as
defined in \eqref{eq:test-stat} and the IPM over $\cF_k$. Its proof is in
Appendix \ref{supp:test-stat-is-relu-power}.   

\begin{theorem}
\label{thm:test-stat-is-relu-power}
Fix any $k \geq 0$. Define 
\[
\cG_k = \Big\{ (w^\T \cdot -\, b)_+^k \,:\, (w,b)\in\nonnegRD \Big\},   
\]
where by convention we take \smash{$t_+^0 = \one\{t \geq 0\}$}. Then for any 
$\popP$ and $\popQ$ with finite $k\th$ moments, we have 
\[
\rho(\popP, \popQ; \cF_k) = \rho(\popP, \popQ; \cG_k).
\]
In particular, this implies that for the empirical distributions $\empP$ and
$\empQ$ over any sets of samples \smash{$\{x_i\}_{i=1}^{m}$} and
\smash{$\{y_i\}_{i=1}^{n}$}, we have \smash{$\rho(\empP, \empQ; \cF_k) = 
  T_{d,k}$}, where the latter is as defined in \eqref{eq:test-stat}.
\end{theorem}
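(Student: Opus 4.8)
The plan is to establish the two inequalities $\rho(\popP,\popQ;\cF_k)\le\rho(\popP,\popQ;\cG_k)$ and $\rho(\popP,\popQ;\cG_k)\le\rho(\popP,\popQ;\cF_k)$ separately: the first via the integral representation of Theorem~\ref{thm:Fk-rep-poly0}, the second via a direct membership check supplemented by a limiting argument at the boundary $b=0$. Throughout I write $g_{w,b}(x)=(w^\T x-b)_+^k$, so $\cG_k=\{g_{w,b}:(w,b)\in\nonnegRD\}$, and I record the uniform bound $|g_{w,b}(x)|\le 1+\|x\|_2^k$ for all $(w,b)\in\nonnegRD$ (indeed $0\le w^\T x-b\le\|x\|_2$ wherever $g_{w,b}(x)\ne 0$, since $b\ge 0$); under the finite $k\th$ moment hypothesis this bound makes each $g_{w,b}$ integrable against $\popP$ and $\popQ$, and shows both discrepancies are finite.

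For the first inequality, fix $f\in\cF_k$ and take the measure $\mu$ on $\nonnegRD$ guaranteed by Theorem~\ref{thm:Fk-rep-poly0}, so $f(x)=\int_\nonnegRD g_{w,b}(x)\,\de\mu(w,b)$ with $\|\mu\|_{\TV}=\|f\|_{\RTVk}\le 1$. Since $\int_\nonnegRD|g_{w,b}(x)|\,\de|\mu|(w,b)\le(1+\|x\|_2^k)\|\mu\|_{\TV}$ is $\popP$- and $\popQ$-integrable in $x$, Tonelli's theorem justifies interchanging integrals, giving $\popP(f)-\popQ(f)=\int_\nonnegRD\big(\popP(g_{w,b})-\popQ(g_{w,b})\big)\,\de\mu(w,b)$; hence $|\popP(f)-\popQ(f)|\le\|\mu\|_{\TV}\sup_{(w,b)\in\nonnegRD}|\popP(g_{w,b})-\popQ(g_{w,b})|\le\rho(\popP,\popQ;\cG_k)$, and taking the supremum over $f\in\cF_k$ finishes this direction.

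For the second inequality I first claim $g_{w,b}\in\cF_k$ for every $(w,b)\in\positiveRD$. By the point-mass identity $R_k\{g_{w,b}\}=\tfrac12\big(\delta_{(w,b)}+(-1)^{k+1}\delta_{(-w,-b)}\big)$ recalled above, $\|g_{w,b}\|_{\RTVk}=1$; the function $g_{w,b}$ is continuous for $k\ge1$ and radially cone continuous for $k=0$ (it is constant near the origin since $b>0$, and locally constant away from the hyperplane $w^\T x=b$), so by Theorem~\ref{thm:RBVk-continuous-representatives} it is the distinguished representative of its class and lies in $\RBV_c^k$; and since $b>0$ it vanishes on a neighborhood of the origin, so $\partial^\alpha g_{w,b}(0)=0$ for all $|\alpha|\le k$. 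Thus $|\popP(g_{w,b})-\popQ(g_{w,b})|\le\rho(\popP,\popQ;\cF_k)$ whenever $b>0$. To reach $b=0$: when $k\ge1$, $g_{w,b}\to g_{w,0}$ pointwise as $b\downarrow0$ (continuity of $t\mapsto t_+^k$) and is dominated by $1+\|\cdot\|_2^k$ for $b\le1$, so dominated convergence gives $|\popP(g_{w,0})-\popQ(g_{w,0})|=\lim_{b\downarrow0}|\popP(g_{w,b})-\popQ(g_{w,b})|\le\rho(\popP,\popQ;\cF_k)$; when $k=0$, $g_{w,b}\to\one\{w^\T\cdot>0\}=\one\{(-w)^\T\cdot<0\}$ rather than $g_{w,0}$, so I instead use that a constant cancels in $\popP-\popQ$ to write $|\popP(g_{w,0})-\popQ(g_{w,0})|=|\popP(\one\{w^\T\cdot<0\})-\popQ(\one\{w^\T\cdot<0\})|=\lim_{b\downarrow0}|\popP(g_{-w,b})-\popQ(g_{-w,b})|\le\rho(\popP,\popQ;\cF_k)$. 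Taking the supremum over all $(w,b)\in\nonnegRD=\positiveRD\cup\zeroRD$ yields $\rho(\popP,\popQ;\cG_k)\le\rho(\popP,\popQ;\cF_k)$, hence equality. Specializing to $\empP,\empQ$ (which have finite $k\th$ moments) then gives $\rho(\empP,\empQ;\cF_k)=\rho(\empP,\empQ;\cG_k)=\sup_{(w,b)\in\nonnegRD}|\empP(g_{w,b})-\empQ(g_{w,b})|=T_{d,k}$, the supremum being attained by a routine compactness argument (every $g_{w,b}$ vanishes on the data once $b$ exceeds $\max_i\|x_i\|_2\vee\max_j\|y_j\|_2$, reducing to a compact domain).

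The step I expect to be the main obstacle is the boundary case $b=0$ of the second inequality: $\cG_k$ is genuinely \emph{not} contained in $\cF_k$, since the functions $g_{w,0}$ fail the boundary condition $\partial^\alpha f(0)=0$ (and for $k=0$ are not even radially cone continuous at the origin), so one cannot conclude by a one-line set inclusion and must argue by approximation, with the $k=0$ case needing the extra ``take complements, flip the direction'' maneuver. A secondary point of care is making the interchange of integrals in the first inequality rigorous, which is why I isolate the growth bound $|g_{w,b}(x)|\le 1+\|x\|_2^k$ that converts the moment hypothesis into the integrability required for Tonelli.
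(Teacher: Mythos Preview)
Your proof is correct and follows essentially the same two-inequality strategy as the paper: one direction via the integral representation of Theorem~\ref{thm:Fk-rep-poly0} plus Fubini, the other via the membership $g_{w,b}\in\cF_k$ for $b>0$ together with a limiting argument at the boundary $b=0$ (including, for $k=0$, the same ``complement and flip the direction'' trick). Your write-up is if anything a bit more explicit than the paper's about the integrability justification for Fubini and about why $g_{w,b}$ satisfies the boundary condition, but the underlying ideas coincide.
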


Lastly, as a consequence of the above result, we establish the representation
claimed earlier in \eqref{eq:Tdk-by-N-neurons}. Its proof is simple enough that 
we present it here.

\begin{corollary}
\label{cor:Tdk-by-N-neurons}
Fix any $k \geq 0$. For the empirical distributions $\empP$ and $\empQ$ over any
sets of samples \smash{$\{x_i\}_{i=1}^{m}$} and \smash{$\{y_i\}_{i=1}^{n}$}, the
RKS test statistic $T_{d,k}$ defined in \eqref{eq:test-stat} is equivalent to
that in \eqref{eq:f-param}, \eqref{eq:Tdk-by-N-neurons}, for any $N \geq 1$.
\end{corollary}

\begin{proof}
For $\|w\|_2 = 1$ and $b \in \R$, \citet{parhi2021banach} show that 
\smash{$\|(w^\T \cdot - \, b)_+^k\|_{\RTVk} = 1$}. Then, for any $w \in \R^d$
and $a \in \R$, by the \smash{$k\th$} order homogeneity of the ridge splines,  
\smash{$\|a (w^\T \cdot - \, b)_+^k\|_{\RTVk} = |a| \|w\|_2^k$}. By the
triangle inequality,  
\[
\big\| f_{(a_j, w_j, b_j)_{j=1}^N} \|_{\RTVk} \leq \sum_{j=1}^N | a_j | \|w_j
\|_2^k.
\]
Thus, defining
\[
\cF_{k,N} = \bigg\{ f_{(a_j, w_j, b_j)_{j=1}^N} \,:\, (a_j, w_j, b_j) \in
\R\times\R^d\times(0,\infty), \; \sum_{j=1}^N |a_j|\|w_j\|_2^k\leq 1 \bigg\},
\]
we have \smash{$\cF_{k,N} \subseteq \cF_k$}. Defining also \smash{$\cG_k^+ = 
  \{(w^\T \cdot - \, b)_+^k \,:\, (w,b) \in \positiveRD\}$}, we have
\smash{$\cG_k^+ \subseteq \cF_{k,N}$}, so  
\[
	\sup\nolimits_{f\in\cG_k^+} \, |\empP(f) - \empQ(f)|
	\leq \sup\nolimits_{f\in\cF_{k,N}} \, |\empP(f) - \empQ(f)|
	\leq \sup\nolimits_{f\in\cF_k} \, |\empP(f) - \empQ(f)|.
\]
The leftmost supremum is unchanged if we include the origin, i.e., take the
suprermum over $f \in \cG_k$, which precisely defines $T_{d,k}$. By Theorem
\ref{thm:test-stat-is-relu-power}, this is equal to \smash{$\sup_{f \in \cF_k}
  |\empP(f) - \empQ(f)|$}, the rightmost supremum above, so we conclude that all 
inequalities are equalities, thus \smash{$T_{d,k} = \sup_{f \in \cF_{k,N}}
  |\empP(f) - \empQ(f)|$}. Finally, we can remove the absolute value sign in the 
criterion because $\cF_{k,N}$ is symmetric ($f \in \cF_{k,N} \iff -f \in
\cF_{k,N}$), which completes the proof of \eqref{eq:Tdk-by-N-neurons} as an
equivalent representation.
\end{proof}

\subsection{The RKS distance identifies the null hypothesis}

An important property of the RKS distance for the purposes of two-sample testing 
is its ability to distinguish any two distinct distributions. In particular, we
have the following result, whose proof is in Appendix \ref{proof:rtv-ipm-metric}.

\begin{theorem}
\label{thm:rtv-ipm-metric}
For any $\popP,\popQ$ with finite $k\th$ moments, $\rho(\popP, \popQ; \cF_k) =
0$ if and only if $\popP = \popQ$, and is positive otherwise. 
\end{theorem}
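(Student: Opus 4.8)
The plan is to use the identity $\rho(\popP,\popQ;\cF_k) = \rho(\popP,\popQ;\cG_k)$ of Theorem~\ref{thm:test-stat-is-relu-power} to turn the statement into a one‑dimensional fact about truncated $k$th moments of projections, establish that fact by a two‑sided uniqueness argument, and lift back to $\R^d$ with the Cram\'er--Wold device.

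The easy direction is immediate: if $\popP = \popQ$, then $\rho(\popP,\popQ;\cG_k) = \sup_{(w,b)\in\nonnegRD}\big|\popP(g_{w,b}) - \popP(g_{w,b})\big| = 0$, where $g_{w,b}(x) = (w^\T x - b)_+^k$, and hence $\rho(\popP,\popQ;\cF_k) = 0$ by Theorem~\ref{thm:test-stat-is-relu-power}. Since $\rho(\popP,\popQ;\cF_k)$ is a supremum of absolute values it is always nonnegative, so once we prove the converse we also get strict positivity whenever $\popP \ne \popQ$.

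For the converse, suppose $\rho(\popP,\popQ;\cF_k)=0$; then $\rho(\popP,\popQ;\cG_k)=0$, i.e.
\[
\E_{X\sim\popP}\big[(w^\T X - b)_+^k\big] = \E_{Y\sim\popQ}\big[(w^\T Y - b)_+^k\big]
\qquad\text{for all } (w,b)\in\nonnegRD .
\]
Fix $w\in\S^{d-1}$ and let $\nu$ be the pushforward of the signed measure $\popP-\popQ$ under $x\mapsto w^\T x$, a finite signed Borel measure on $\R$ with $\nu(\R)=0$ and $\int_\R|t|^k\,\de|\nu|(t)<\infty$. Instantiating the display with direction $w$ yields $\int_\R (t-b)_+^k\,\de\nu(t)=0$ for all $b\ge 0$; instantiating it with direction $-w\in\S^{d-1}$ and substituting $c=-b$ yields $\int_\R (c-t)_+^k\,\de\nu(t)=0$ for all $c\le 0$. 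The heart of the proof is the claim that these two families force $\nu=0$. When $k=0$, with the convention $t_+^0=\one\{t\ge 0\}$, they read $\nu([b,\infty))=0$ for all $b\ge0$ and $\nu((-\infty,c])=0$ for all $c\le0$, and since a finite signed measure on $\R$ that vanishes on a generating $\pi$-system is zero --- and these half-lines together with $\nu(\R)=0$ supply one --- we get $\nu=0$. When $k\ge1$, I would peel off powers one at a time using the elementary identity $(t-b)_+^k=k\int_b^\infty (t-s)_+^{k-1}\,\de s$ and Fubini--Tonelli (legitimate because $\int|t|^k\,\de|\nu|<\infty$): if $b\mapsto\int (t-b)_+^{j}\,\de\nu(t)$ vanishes on $[0,\infty)$ then so does $b\mapsto\int (t-b)_+^{j-1}\,\de\nu(t)$ --- for $j\ge2$ by differentiating the continuous primitive, and for $j=1$ by a short limiting argument on $b\mapsto\nu([b,\infty))$. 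Iterating from $j=k$ down to $j=0$ gives $\nu((b,\infty))=0$ for all $b\ge0$, the reflected family gives $\nu((-\infty,c))=0$ for all $c\le0$, and taking differences over a generating $\pi$-system together with $\nu(\R)=0$ again forces $\nu=0$. Hence $w^\T X$ and $w^\T Y$ have the same law for every $w\in\S^{d-1}$, and Cram\'er--Wold gives $\popP=\popQ$, which also settles the strict-positivity claim.

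The only step that needs genuine care is the one-dimensional claim, and the subtle point there is that one \emph{must} use both directions $\pm w$: the direction-$w$ data alone controls $\nu$ only on $(0,\infty)$ and cannot see, e.g., $\nu=\delta_{-1}-\delta_{-2}$. With both sides in hand, the repeated-integration reduction and the attendant continuity/endpoint bookkeeping are routine, as are the two remaining ingredients (the reduction to $\cG_k$ via Theorem~\ref{thm:test-stat-is-relu-power} and the Cram\'er--Wold step).
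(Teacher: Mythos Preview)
Your proof is correct and uses essentially the same ingredients as the paper's: reduction to $\cG_k$ via Theorem~\ref{thm:test-stat-is-relu-power}, one-dimensional projection using both directions $\pm w$, and recursion between orders via $(t-b)_+^k = k\int_b^\infty (t-s)_+^{k-1}\,\de s$ (equivalently, differentiating in $b$). The only difference is framing: the paper argues the contrapositive---starting from $\popP\neq\popQ$ it exhibits a specific $(w,b)$ with nonzero discrepancy---whereas you argue directly and close with Cram\'er--Wold.
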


Theorem \ref{thm:rtv-ipm-metric} states that the RKS distance, in the
population, perfectly distinguishes the null $H_0: \popP = \popQ$ from the
alternative $H_1 : \popP \neq \popQ$. Thus, the function space $\cF_k$ is rich
enough to make the IPM a \emph{metric} at the population level. As an
interesting contrast, we remark that this is not true of the kernel MMD with a
polynomial kernel of degree $k$, as studied in \citet{gretton2012kernel}. 
Indeed, if $\popP$ and $\popQ$ have the same moments up to order $k$ (where
``moments'' is to be interpreted in the appropriate multivariate sense) then the
kernel MMD in the population between $\popP$ and $\popQ$ is $0$. In other words,
we can see that the use of \emph{truncated} moments, as given by averages of
ridge splines, is the key to the discrepancy being a metric.         

\section{Asymptotics}

Given any probability distribution $\popP$ on $\reals^d$ with finite moments of
order $2k + \Delta$ for any fixed $\Delta > 0$, let us define the corresponding
Gaussian process \smash{$\G_{\popP} = \{ G_{w,b} : (w,b) \in \nonnegRD \}$} by 
\begin{equation}
\label{eq:gwb-process}
G_{w,b} \sim \cN\Big( 0, \, \E_{x \sim \popP} \big[ (w^\T x - b)_+^{2k} \big]
\Big), \quad
\Cov(G_{w,b}, G_{w',b'}) = \E_{x \sim \popP} \Big[ (w^\T x - b)_+^k (w'^\T x 
- b')_+^k \Big].  
\end{equation}
The importance of this Gaussian process is explained in the next result, which
shows that its supremum provides the asymptotic null distribution of the RKS
test statistic. 

\begin{theorem}
\label{thm:null-distribution}
Fix any $k \geq 0$. Assume that $\popP$ has finite moments of order $2k +
\Delta$, for any fixed $\Delta>0$. If $k = 0$, then additionally assume $\popP$  
is absolutely continuous with respect to Lebesgue measure, and satisfies
\begin{equation}
\label{eq:density-bounded}
\sup_{w \in \S^{d-1}} \big\| p_{w^\T x} \big\|_\infty <
\infty, 
\end{equation}
where \smash{$p_{w^\T x}$} denotes the density of $w^\T x$ for $x \sim P$, and
\smash{$\|f\|_\infty = \sup_{t \in \R} |f(t)|$} denotes the sup norm of
$f$. The condition \eqref{eq:density-bounded} holds when $\popP$ has a bounded
density and bounded support, but also holds for many distributions without
bounded support, such as the multivariate Gaussian. Under these conditions,
when $\popP = \popQ$, we have         
\[
\sqrt{\frac{mn}{m+n}} \, T_{d,k} \,\convd\, \sup_{(w,b) \in \nonnegRD} \, 
|G_{w,b}|, \quad \text{as $m,n \to \infty$}.
\]
\end{theorem}

Moreover, with a proper rejection threshold, the RKS test can have
asymptotically zero type I error and full power against any fixed $\popP \neq
\popQ$.  

\begin{theorem}
\label{thm:asymptotic-power}
Fix any $k \geq 0$. Assume that $\popP, \popQ$ each have finite moments of order
$2k + \Delta$, for any fixed $\Delta>0$. If $k = 0$, then additionally assume
$\popP, \popQ$ each have densities which satisfy \eqref{eq:density-bounded}. Let  
$t_{m,n}$ be such that $t_{m,n} \to 0$ and \smash{$t_{m,n} \sqrt{m+n} \to
  \infty$} as $m,n \to \infty$. Then the test which rejects when \smash{$T_{d,k}
  > t_{m,n}$} rejects with asymptotic probability 0 if $\popP = \popQ$ and
asymptotic probability 1 if $\popP \neq \popQ$, as $m,n \to \infty$.  
\end{theorem}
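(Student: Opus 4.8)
The plan is to reduce both claims to concentration of the test statistic $T_{d,k}$ around its population counterpart $\rho(\popP,\popQ;\cF_k)$, and then invoke Theorem \ref{thm:rtv-ipm-metric} to identify when that population quantity is zero versus positive. Concretely, I would first establish the following key estimate: under the moment assumptions (and the bounded-density assumption when $k=0$), we have $|T_{d,k} - \rho(\popP,\popQ;\cF_k)| = O_{\P}(p^{-1/2})$, where $p = m+n$ (with the usual assumption, implicit in the asymptotic regime, that $m/p$ stays bounded away from $0$ and $1$). Given this, the null case is immediate: when $\popP = \popQ$, Theorem \ref{thm:rtv-ipm-metric} gives $\rho(\popP,\popQ;\cF_k) = 0$, so $T_{d,k} = O_\P(p^{-1/2})$; since $t_p\sqrt p \to \infty$, i.e. $t_p \gg p^{-1/2}$, the event $\{T_{d,k} > t_p\}$ has probability tending to $0$. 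For the alternative, Theorem \ref{thm:rtv-ipm-metric} gives $\rho(\popP,\popQ;\cF_k) = c > 0$ for a fixed constant $c$; then $T_{d,k} \geq c - O_\P(p^{-1/2}) \to c$ in probability, and since $t_p \to 0 < c$, the event $\{T_{d,k} > t_p\}$ has probability tending to $1$.

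The substance is therefore entirely in the concentration estimate. Here I would use the representer characterization from Theorem \ref{thm:test-stat-is-relu-power}: both $T_{d,k} = \rho(\empP,\empQ;\cG_k)$ and the population MMD $\rho(\popP,\popQ;\cG_k) = \rho(\popP,\popQ;\cF_k)$ are suprema of the same empirical/population process indexed by the class $\cG_k = \{(w^\top\cdot - b)_+^k : (w,b)\in\nonnegRD\}$. By the triangle inequality for suprema,
\[
|T_{d,k} - \rho(\popP,\popQ;\cF_k)| \;\leq\; \sup_{g\in\cG_k} |\empP(g) - \popP(g)| \;+\; \sup_{g\in\cG_k} |\empQ(g) - \popQ(g)|,
\]
so it suffices to bound each one-sample empirical process uniformly over $\cG_k$ at rate $p^{-1/2}$. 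The functions in $\cG_k$ are unbounded (ridge splines grow like $\|x\|^k$), so I cannot use a naive bounded-difference or VC argument directly; instead I would truncate at a slowly growing radius $\|x\|\leq\log p$ (say), control the tail contribution using the finite $(2k+\Delta)$-moment assumption via Markov's inequality, and on the truncated region apply a standard maximal-inequality / chaining bound for the empirical process over $\cG_k$. The class $\cG_k$ restricted to a bounded ball has a well-behaved envelope and polynomial covering numbers (it is a finite-dimensional parametric family composed with a fixed monotone truncation, hence VC-subgraph), so symmetrization plus Dudley's entropy integral yields $\E\sup_{g} |\empP(g)-\popP(g)| \lesssim \sqrt{(\log p)^{2k+1}/m}$ or similar, which is still $o(t_p)$ since we only need a rate slightly better than $t_p$ and $t_p\sqrt p\to\infty$ leaves room for logarithmic factors. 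For the case $k=0$, the truncation is unnecessary (the functions are indicator-valued and uniformly bounded), but one needs the bounded-density assumption to control the metric-entropy of the relevant class of halfspaces intersected with the ramp filter; this is exactly the extra hypothesis flagged in Theorem \ref{thm:null-distribution}, and I would import whatever empirical-process bound is established there.

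The main obstacle I anticipate is making the uniform empirical-process bound over the \emph{unbounded}, \emph{non-Donsker-at-face-value} class $\cG_k$ rigorous without simply reproving the whole null-distribution analysis: one must simultaneously handle the $\|x\|^k$ growth of the envelope and the $(w,b)$-parametrization over the noncompact set $\nonnegRD$ (note $b$ ranges over all of $[0,\infty)$). The saving grace is that for $b$ larger than roughly $\max_i\|x_i\|$ every summand vanishes, so effectively $b$ is confined to a compact (random) interval, and for $b$ in a bounded range the class has finite pseudo-dimension; combined with moment-based truncation of the $x$'s this gives the needed rate. An alternative, possibly cleaner route is to not derive a rate at all but merely invoke Theorem \ref{thm:null-distribution}'s machinery to get $\sqrt{p}\,T_{d,k} = O_\P(1)$ under the null directly (giving the type I statement), and to get a weak law $T_{d,k} \to_\P \rho(\popP,\popQ;\cF_k)$ under the alternative from a one-sided uniform law of large numbers over $\cG_k$ (which needs only an integrable envelope, hence only finite $k\th$ moments — consistent with the hypothesis); this decouples the two cases and sidesteps the need for a sharp rate. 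I would present it this way, citing the relevant appendix lemmas, since the theorem statement asks only for the two probability limits, not for a rate.
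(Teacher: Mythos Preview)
Your proposal is correct, and the overall architecture---reduce both cases to concentration of $T_{d,k}$ around $\rho(\popP,\popQ;\cF_k)$ via the triangle inequality for suprema, then invoke Theorem~\ref{thm:rtv-ipm-metric}---is exactly the paper's strategy. The only difference is in the empirical-process machinery used to get concentration. The paper does not truncate: it reuses the bracketing-number bounds already computed for Theorem~\ref{thm:null-distribution} and feeds them into a van der Vaart--Wellner $L^p$-moment maximal inequality (their Lemma~\ref{lem:concentration}) with envelope $F(x)=\|x\|_2^k$, which directly handles the unbounded envelope under the $(2k+\Delta)$-moment hypothesis and yields a tail bound for $|T_{d,k}-\rho(\popP,\popQ;\cF_k)|$ of order $m^{-1/2}+n^{-1/2}$ (their Theorem~\ref{thm:tail-bounds}). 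Your truncation-plus-chaining route would also work, though the VC-subgraph claim for $\cG_k$ with $k\geq 1$ needs a bit of care; it is easier to exploit the Lipschitz dependence on $(w,b)$, which is precisely what the paper does for its bracketing bound. Your ``alternative, cleaner route'' at the end---Theorem~\ref{thm:null-distribution} for the null plus a uniform LLN for the alternative---is essentially the paper's actual proof verbatim, with their concentration theorem playing the role of the ULLN. The paper's path buys recycling of the bracketing computation already needed for the null CLT; your main route would be self-contained but duplicates that work. One small slip: for $k=0$ the class $\cG_0$ is just halfspace indicators---there is no ``ramp filter'' involved at the level of $\cG_0$---and the bounded-density assumption enters only to control the $L^2(\P)$-bracketing entropy, not the class itself.
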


We prove Theorems \ref{thm:null-distribution} and \ref{thm:asymptotic-power} in
Appendices \ref{supp:uclt-prelim} and \ref{sec:proof-asymptotic-power}.
Roughly, Theorem \ref{thm:null-distribution} follows from general uniform
central limit theorem results in \citet{dudley2014uniform}, after controlling
the bracketing number of the function class $\cG_k$. Theorem
\ref{thm:asymptotic-power} follows from tail bounds that we establish for the
concentration of the RKS distance around its population value, combined with the
fact that the population distance is a metric (Theorem
\ref{thm:rtv-ipm-metric}).

Let us pause to discuss the above theorems and how they fit into our
understanding of the RKS test at large. While these results answer (what in our
view are) standard and yet fundamental questions about the asymptotic behavior
of the RKS test, neither really guides practical usage of the test. If the
asymptotic null distribution in Theorem \ref{thm:null-distribution} had been
\emph{pivotal}, i.e., not depending on $P$, then we could have used this (at
least in principle, ignoring computational issues) to approximately control type
I error in large samples. However, the asymptotic null appears to depend on $P$
except in the special case when $k = 0$ and $d = 1$, which recall, coincides
with classical KS test, whose asymptotic null distribution is well-known to be 
pivotal (defined in terms of the supremum of a Brownian bridge).

In practice, as discussed at the start of Section \ref{sec:radon-ks-test}, we
would typically use permutations to calibrate the RKS test, controlling type I 
error in finte-sample. Our next result shows that calibration via permutations
can indeed also achieve asymptotically zero type I error and full power against
any fixed $\popP \neq \popQ$.    

\begin{theorem}
\label{thm:asymptotic-perm} 
Fix any $k \geq 0$. Assume the conditions of Theorem \ref{thm:asymptotic-power}
on $\popP, \popQ$, and assume further that each has bounded support. Consider
the permutation p-value in \eqref{eq:test-perm}, for $B+1 = (m+n)!$, and let 
\smash{$\alpha_{m,n}$} be such that \smash{$\alpha_{m,n} \to 0$} and
\smash{$\log(1/\alpha_{m,n}) \sqrt{1/m+1/n}$} as $m,n \to \infty$. Then the test 
which rejects when \smash{$p > \alpha_{m,n}$} rejects with asymptotic
probability 0 if $\popP = \popQ$ and asymptotic probability 1 if $\popP \neq
\popQ$, as $m,n \to \infty$.      
\end{theorem}

The proof of Theorem \ref{thm:asymptotic-perm} is given in Appendix
\ref{app:asymptotic-perm}. It follows techniques developed in \citet{green2024two}. 

\section{Experiments}
\label{sec:experiments}

We conduct experiments to compare the power of the RKS test with other
multivariate nonparametric tests in multiple settings (different pairs of
$\popP$ and $\popQ$), both to investigate the strengths and weaknesses of the
RKS test, and to investigate its behavior as we vary the smoothness degree $k$.  

\subsection{Computation of the RKS distance}
\label{subsec:computation}

First, we discuss computation of the RKS distance, as initially defined in
\eqref{eq:test-stat}. This a difficult optimization problem because of the
nonconvexity of the criterion and the unit $\ell_2$ norm constraint on
$w$. Though we will not be able to circumvent the challenge posed by
nonconvexity entirely, we can ameliorate it by moving to the overparametrized
formulation in \eqref{eq:Tdk-by-N-neurons}: this now casts the same computation
as an optimization over the $N$-neuron neural network $f$ in \eqref{eq:f-param},
subject to a constraint on \smash{$\sum_{j=1}^N \, |a_j| \|w_j\|_2^k$}
(which is sometimes called the ``path norm'' of $f$). We call this problem 
``overparametrized'' since any individual pair $(w_j, b_j)$ would be
sufficient to obtain the global optimum in \eqref{eq:test-stat}, and yet we
allow the optimization to search over $N$ such pairs $(w_j, b_j)$, $j =
1,\ldots,N$.  Perhaps unsurprisingly, we find that overparametrization (larger
$N$) generally makes optimization easier---more stable with respect to the
initialization and choice of learning rate, discussed in more detail in
Appendix \ref{supp:sensitivity-N}. 

Problem \eqref{eq:Tdk-by-N-neurons} is still more difficult than we would like
it to be, i.e., not within scope for typical first-order optimization
techniques, due to the path norm constraint \smash{$\sum_{j=1}^N \, |a_j|
  \|w_j\|_2^k \leq 1$}. Fortunately, it is easy to see that any bound here
suffices to compute the RKS distance: we can instead use 
\smash{$\sum_{j=1}^N \, |a_j| \|w_j\|_2^k \leq C$}, for any $C>0$, rescale the
criterion $\empP(f) - \empQ(f)$ by $1/C$, and then the value of the criterion at
its optimum (the IPM value) is unchanged. This motivates us to instead solve a 
Lagrangian version of \eqref{eq:Tdk-by-N-neurons}:   
\begin{equation}
\label{eq:opt-problem}
\min_{\substack{f = f_{(a_j,w_j,b_j)_{j=1}^N} \\ b_j \geq 0, \, j = 1,\ldots,N}}
\, - \frac{1}{k} \log\bigg(\frac{1}{N} \big| \empP(f) - \empQ(f) \big| \bigg)   
+ \frac{\lambda}{kN} \sum_{i=1}^N |a_i| \|w_i\|_2^k,
\end{equation}
which is the focus in all of our experiments henceforth. The log transform for
the mean difference in \eqref{eq:opt-problem} is used because we find that it
leads to greater stability with respect to the choice of learning rate, which we
cover in Appendix \ref{supp:sensitivity-log}. Importantly, the choice of
Lagrangian parameter $\lambda > 0$ in \eqref{eq:opt-problem} can be arbitrary,
by the same rescaling argument as explained above: given any (local) optimizer
$f^*$ in \eqref{eq:opt-problem}, we can simply return the mean difference over
the rescaled witness \smash{$f^* / \|f^*\|_{\RTVk}$}, where
\smash{$\|f^*\|_{\RTVk} = \sum_{i=1}^N |a^*_i| \|w^*_i\|_2^k$}.    

For $k \geq 1$, we apply the \texttt{torch.optim.Adam} optimizer (a variation on
gradient descent), as implemented in PyTorch, to \eqref{eq:opt-problem}. We use 
a \texttt{betas} parameter $(0.9, 0.99)$, learning rate 0.5, number of
iterations $T = 200$, penalty parameter $\lambda = 1$, and number of neurons $N
= 10$. To enforce the nonnegativity condition on $b$, we project $b$ to
$[0,\infty)$ after each gradient step. Rather than take the last iterate, we
choose the maximal IPM values among the iterates (after rescaling by the
\smash{$\RTV^k$} seminorm of each iterate so that it lies in the unit seminorm   
ball). Further, we repeat this over three random initializations, and select the
best resultant IPM value to be the final output. While this optimization scheme
does not come with guarantees on convergence to a minimizer of the criterion
\eqref{eq:opt-problem}, we reiterate that if we calibrate this test statistic
using the permutation null (i.e., we use permutations to calculate a p-value as
in \eqref{eq:test-perm}, where in each permutation we apply the same
optimization scheme), then we are still guaranteed to control type I error in
finite-sample.   

For $k = 0$, such a first-order scheme is not applicable due to the fact that
the gradient of the $0\th$ degree ridge spline \smash{$(w^\T x - b)_+^0 = \one\{ 
  w^\T x \geq b \}$} (with respect to $w$ and $b$) is almost everywhere zero. As a
surrogate, we directly approximate the optimum $(w^*,b^*)$ in
\eqref{eq:test-stat} using logistic regression where the class labels identify
samples from $\popP$ versus $\popQ$, as implemented in
\texttt{sklearn.linear\_model.LogisticRegression} in Python. 

For concreteness, we summarize our computational approach below in Algorithm 
\ref{alg:rks-test-statistic}.

\begin{algorithm}
\caption{RKS test statistic} \label{alg:rks-test-statistic}
\begin{algorithmic}[]
  \State \textbf{Input:} Samples \smash{$\{x_i\}_{i=1}^m,  \{y_i\}_{i=1}^n
    \subseteq \R^d$}; smoothness degree $k \geq 0$; number of neurons $N \geq
  1$; number of iterations $T \geq 1$; learning rate $\eta > 0$; regularization
  parameter $\lambda > 0$  
  
  \smallskip
  \State \textbf{Output:} RKS test statistic $T_{d,k}$ 
  \smallskip
  \State \textbf{Run:}
  \State \underline{Step 1} (optional). Center the data to have sample mean
  zero:    
  \State \indent $\hat\mu = \frac{1}{m+n} ( \sum_{\ell=1}^m x_\ell +
  \sum_{\ell=1}^n y_\ell )$; 
  $x_i \leftarrow x_i-\hat\mu$;
  $y_i \leftarrow y_i-\hat\mu$

  \smallskip	
  \State \underline{Step 2} (optional). Scale the data to have average $\ell_2$ 
  norm of 1: 
  \State \indent $\hat{L}^2 = \frac{1}{m+n} (\sum_{\ell=1}^m \|x_\ell\|_2^2 +
  \sum_{\ell=1}^n \|y_\ell\|_2^2 )$; 
  $x_i \leftarrow x_i/\hat{L}$;
  $y_i \leftarrow y_i/\hat{L}$

  \smallskip
  \State \underline{Step 3}. Apply optimization procedure: 

  \smallskip
  \If {$k\geq 1$} 

  \State \indent Run Adam on the criterion in \eqref{eq:opt-problem} for $T$
  iterations with learning rate $\eta$ to compute $f^*$ 

  \smallskip
  \Else 
  \State \indent Run logistic regression on data $\{z_i\}_{i=1}^{m+n} =
  \{x_i\}_{i=1}^m \cup \{y_i\}_{i=1}^n$, with class labels indicating whether
  each $z_i \in \{x_\ell\}_{\ell=1}^m$ or $z_i \in \{y_\ell\}_{\ell=1}^n$. Given
  the solution $(w^*,b^*)$, define $f^*(x) = \one\{ w^{*\T} x \geq b \}$   
  \EndIf
		
  \smallskip
  \State \underline{Step 4}. Rescale $f^* / \|f^*\|_{\RTVk}$, where
  $\|f^*\|_{\RTVk} = \sum_{i=1}^N |a^*_i| \|w^*_i\|_2^k$, and compute   
  \[
  T_{d,k} = \big| \empP(f^*) - \empQ(f^*) \big|
  \]

  \smallskip
  \State \textbf{return} $T_{d,k}$
	\end{algorithmic}
\end{algorithm}

\subsection{Experimental setup}

We compare the RKS tests of various degrees across several experimental
settings, both to one another and to other all-purpose nonparametric tests: the
energy distance test \citep{szekely2005new} as well as kernel MMD
\citep{gretton2012kernel} with a Gaussian kernel. Python code to replicate all
of our experimental results is available at \url{https://github.com/100shpaik/}.      

We fix the sample sizes to $m = n = 512$ throughout, and study four choices of
dimension: $d=2,4,8,16$. For each dimension $d$, we consider five settings for
$\popP,\popQ$, which are described in Table \ref{table:experiments-setup}. In
each setting, the parameter $v$ controls the discrepancy between $\popP$ and
$\popQ$, but its precise meaning depends on the setting. The settings were
broadly chosen in order to study the operating characteristics of the RKS test
when differences between $\popP$ and $\popQ$ occur in one direction (settings
1--4), and in all directions (setting 5). Among the settings in which the
differences occur in one direction, we also investigate different varieties
(settings 1 and 2: mean shift under different geometries, setting 3: tail
difference, setting 4: variance difference). Figure \ref{fig:scatter-task}
visualizes samples from drawn each task in $d=2$ dimensions, whereas Figure 
\ref{fig:extreme-scatter-task} exaggerates the deviation between $\popP,\popQ$
(larger values of $v$) to better illustrate the geometry. Finally, we note that
since the RKS test is \emph{rotationally invariant},\footnote{Meaning,
  $\rho(\empP, \empQ; \cF_k)$ is invariant to rotations of the underlying
  samples \smash{$\{x_i\}_{i=1}^{m}$} and \smash{$\{y_i\}_{i=1}^{n}$} by any
  arbitrary orthogonal transformation $U \in \R^{d \times d}$. This is true
  because the $\RTV^k$ seminorm is itself invariant to rotations of the
  underlying domain, as can be seen directly from the representation in
  Proposition \ref{prop:Parhi-Nowak-repr}.}    
the fact that the chosen differences in Table \ref{table:experiments-setup} are 
axis-aligned is just a matter of convenience, and the results would not change 
if these differences instead occurred along arbitrary directions in $\R^d$.    

\begin{table}[htb]
\centering
\renewcommand{\arraystretch}{1.8}
\renewcommand{\arraycolsep}{1pt}
\begin{tabular}{|c||c|c|c|}
\hline
Setting & $\popP$ & $\popQ$ & $v$ \\
\hline
\makecell{Pancake mean shift \\ (``pancake-shift'')} & 
\makecell{one axis $\sim \cN(0, 1)$ \\ the others $\sim \cN_{d-1}(0, 9 I)$} & 
\makecell{one axis $\sim \cN(v, 1)$ \\ the others $\sim \cN_{d-1}(0, 9 I)$} &
0.3 \\ \hline
\makecell{Ball mean shift \\ (``ball-shift'')} & 
$\cN_d(0, I)$ &
\makecell{one axis $\sim \cN(v, 1)$ \\ the others $\sim \cN_{d-1}(0, I)$} &
0.2 \\ \hline
\makecell{$t$ coordinate \\ (``t-coord'')} & 
$\cN_d(0, I)$ & 
\makecell{one axis $\sim t(v)$ \\ the others $\sim \cN_{d-1}(0, I)$} & 
3 \\ \hline
\makecell{Variance inflation \\ in one direction \\ (``var-one'')} & 
$\cN_d(0, I)$ &
\makecell{one axis $\sim \cN(0, v)$ \\ the others $\sim \cN_{d-1}(0, I)$} &
1.4 \\ \hline
\makecell{Variance inflation \\ in all directions \\ (``var-all'')} &
$\cN_d(0, I)$ & $\cN_d(0, vI)$ & 1.2 \\
\hline
\end{tabular}
\caption{Experimental settings. Here $\cN_d(\mu, \Sigma)$ means the
  $d$-dimensional normal distribution with mean $\mu$ and covariance $\Sigma$, 
  and $t(v)$ means the $t$ distribution with $v$ degrees of freedom.} 
\label{table:experiments-setup}
\end{table}

\begin{figure}[H]
\centering
\begin{subfigure}[b]{0.19\textwidth}
  \includegraphics[width=\textwidth]{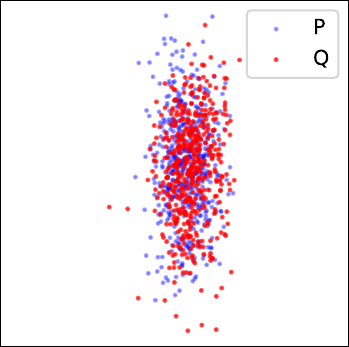}
  \caption{pancake-shift}
\end{subfigure}
\hfill
\begin{subfigure}[b]{0.19\textwidth}
  \includegraphics[width=\textwidth]{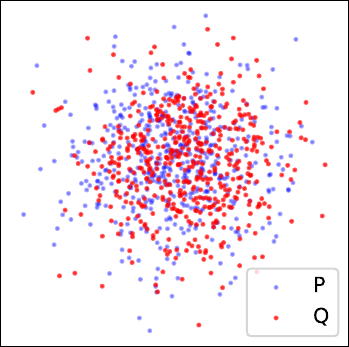}
  \caption{ball-shift}
\end{subfigure}
\hfill
\begin{subfigure}[b]{0.19\textwidth}
  \includegraphics[width=\textwidth]{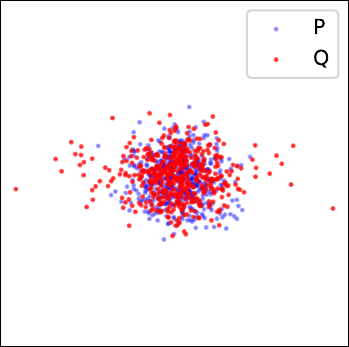}
  \caption{t-coord}
\end{subfigure}
\hfill
\begin{subfigure}[b]{0.19\textwidth}
  \includegraphics[width=\textwidth]{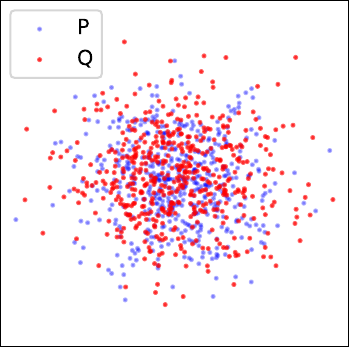}
  \caption{var-one}
\end{subfigure}
\hfill
\begin{subfigure}[b]{0.19\textwidth}
  \includegraphics[width=\textwidth]{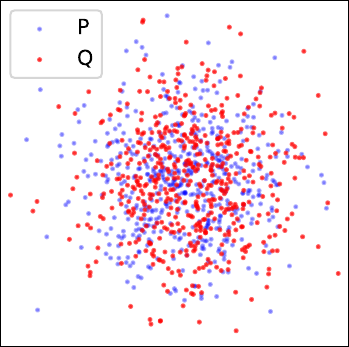}
  \caption{var-all}
\end{subfigure}
\caption{Samples drawn from the settings in Table \ref{table:experiments-setup}, 
  with $d = 2$.}
\label{fig:scatter-task} 

\vspace{15pt}
\begin{subfigure}[b]{0.19\textwidth}
  \includegraphics[width=\textwidth]{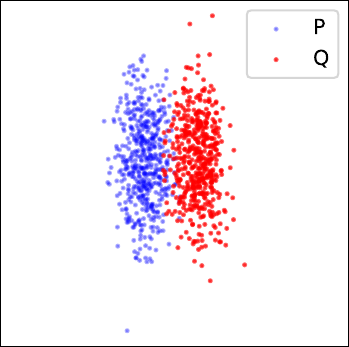}
  \caption{pancake-shift}
\end{subfigure}
\hfill
\begin{subfigure}[b]{0.19\textwidth}
  \includegraphics[width=\textwidth]{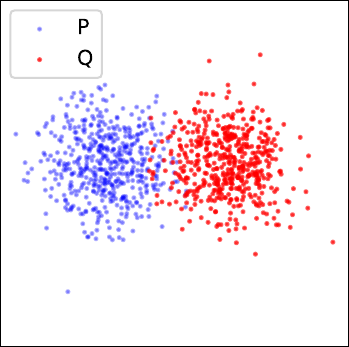}
  \caption{ball-shift}
\end{subfigure}
\hfill
\begin{subfigure}[b]{0.19\textwidth}
  \includegraphics[width=\textwidth]{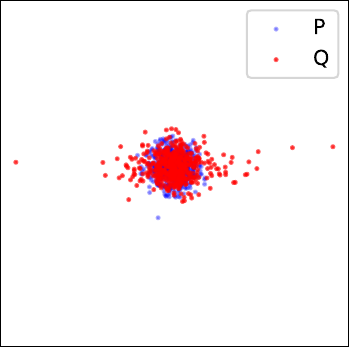}
  \caption{t-coord}
\end{subfigure}
\hfill
\begin{subfigure}[b]{0.19\textwidth}
  \includegraphics[width=\textwidth]{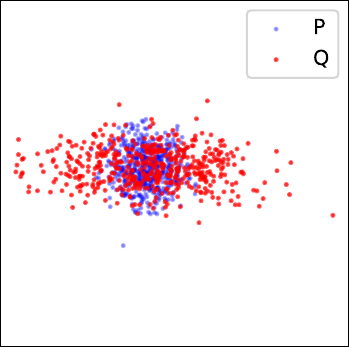}
  \caption{var-one}
\end{subfigure}
\hfill
\begin{subfigure}[b]{0.19\textwidth}
  \includegraphics[width=\textwidth]{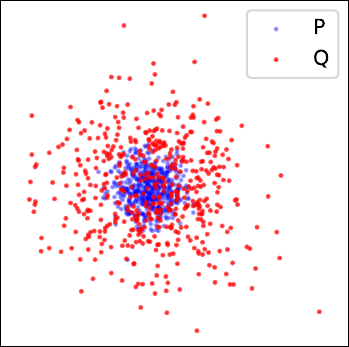}
  \caption{var-all}
\end{subfigure}
\caption{Samples drawn from the settings in Table \ref{table:experiments-setup}, 
    with $d = 2$, and exeggerated values of $v$.}
  \label{fig:extreme-scatter-task} 
\end{figure}

\clearpage % RJT: Hack to control page flow 

For the RKS tests, we examine smoothness degrees $k = 0,1,2,3$, and we center  
the input data to have the sample mean zero jointly across both samples. That
is, each $x_i$ is replaced by \smash{$x_i - \frac{1}{m+n} \big(\sum_{\ell=1}^m
  x_\ell + \sum_{\ell=1}^n y_\ell\big)$}, and likewise for $y_i$. This makes the RKS test
\emph{translationally-invariant}. For kernel MMD, we set the Gaussian kernel
bandwidth according to a standard heuristic, based on the median of all pairwise 
squared distances between the input samples \citep{caputo2002appearance}. (The
energy distance test has no tuning parameters.) For each setting, we compute
these test statistics under the null where each $x_i$ and $y_i$ are sampled
i.i.d.\ from the mixture \smash{$\frac{m}{m+n} \popP + \frac{n}{m+n} \popQ$},
and under the alternative where $x_i$ are i.i.d.\ from $\popP$ and $y_i$ from
$\popQ$. We then repeat this 100 times (draws of samples, and computation of
test statistics), and trace out ROC curves---true positive versus false positive
rates---as we vary the rejection threshold for each test. 

In Appendix \ref{supp:mmd-poly}, we also compare the RKS test to kernel MMD with
a polynomial kernel. An important remark is that the latter is not actually a 
nonparametric test, for any finite polynomial degree, because the population IPM
does not define a metric (it cannot distinguish distributions $\popP, \popQ$
that agree up to some number of moments, interpreted in the multivariate sense,
but differ otherwise). We include comparisons to polynomial kernels for
completeness nonetheless.  

\subsection{Experimental results}

\begin{figure}[p]
\vspace{-10pt}
\includegraphics[width=\textwidth]{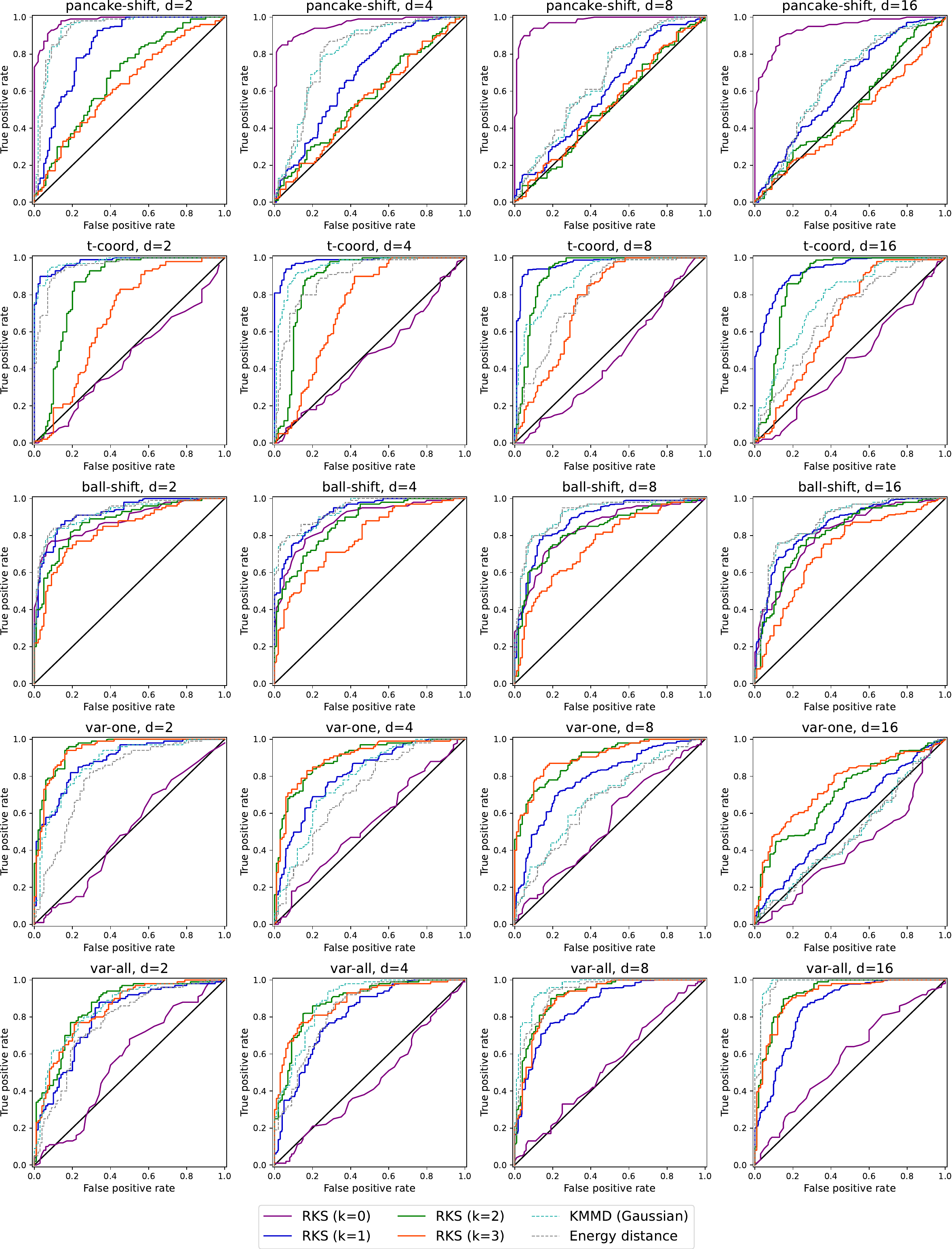}
\caption{ROC curves across the experimental settings described in Table
  \ref{table:experiments-setup}. Each row represents a different setting, and
  each column a different dimension. The RKS, kernel MMD (``KMMD'') with a
  Gaussian kernel, and energy distance tests are compared. Each is coded by a 
  combination of color and line type.}     
\label{fig:roc-setup-per-dimension}
\end{figure}

Figure \ref{fig:roc-setup-per-dimension} displays the ROC curves from all tests
across all settings (rows) and all dimensions (columns). We see that in the
first four settings (first four rows), the RKS tests exhibit generally strong
performance: in each case, one of the $k\th$ degree RKS tests performs either
the best or near-best among the methods considered. This supports the intuition
discussed earlier, because in each of these settings, $\popP$ and $\popQ$ differ 
in only a small number (in fact, one) direction. Conversely, in the last
setting, ``var-all'', the distributions $\popP$ and $\popQ$ differ by a small
amount in all directions, and the Gaussian kernel MMD and energy distance tests
outperform the RKS tests, especially for larger $d$. 
  
Recalling that larger choices of $k$ correspond to comparing higher-order
truncated moments, we can look further into the first four tasks (and their
underlying data models) in order to try to learn something about why and when
different degree RKS tests perform better or worse. In the ``pancake-shift''
setting, the mean shift occurs in a low-variance direction, which leads to
nearly separable samples. The smallest choice $k = 0$ performs best (and also
better than kernel-based tests). In the ``var-one'' and ``t-coord'' settings,
$\popP,\popQ$ have the same mean, but different variance or tails, respectively,
in one direction. We see that larger choices of $k$ perform better: $k = 1$ for
``t-coord'', and $k = 2,3$ for ``var-one''. In the ``ball-shift'' setting, which
is a mean shift between isotropic Gaussians, all degrees $k = 0,1,2$ are
reasonably competitive.

As the dimension $d$ increases (across the columns), the performance of each
test generally gets worse, except for ``var-all'', which will be explained
shortly. However, when performance decreases, it does so at a rate which depends
on the task and the test. For example, in the ``pancake-shift'' task, the
performance of the RKS test with $k = 0$ decreases quite slowly as $d$
increases, whereas all other methods quickly lose power. (In experiments not
shown, RKS with $k = 0$ continues to have nontrivial power in this setting even
when $d$ is in the hundreds.) On the other hand, in the ``var-one'' task, all
tests---including the leading RKS tests with $k = 2,3$---noticeably lose power
by the last column, $d = 16$. Finally, in ``var-all'', the problem actually gets 
\emph{easier} as $d$ increases, recalling that $v$ parametrizes a variance
inflation in each of the $d$ total directions. Indeed, we can see that the
kernel MMD and energy distance tests improve as $d$ increases.  

\subsection{Aggregating RKS tests}

Now we discuss how to aggregate the RKS test statistics, across various degrees
$k$. In an ideal case, such an aggregate RKS test could leverage the strengths
of different degrees for different settings, and have more robust power than any
individual degree itself. 

Suppose we consider RKS tests for degrees $k = 0,\ldots,K$, for some choice $K
\geq 0$ of maximum degree; in most practical scenarios, it seems reasonable to
choose to $K = 3$. A classical approach for aggregating these RKS tests would be 
Bonferroni correction, which takes p-values $p_k$, $k = 0,\ldots,K$, and returns   
\[
p_{\min} = (K+1) \cdot \min_{k=0,\ldots,K} \, p_k 
\]
as an aggregate p-value. The controls type I error by the union bound:
\smash{$\P_{H_0} ((K+1) \cdot \min_{k = 0,\ldots,K} p_k \leq \alpha) \leq$}
\smash{$\sum_{k=0}^K \P_{H_0} (p_k \leq \alpha / (K+1)) \leq \alpha$}. 

A variety of other aggregation strategies are available; for example,
\[
p_{\mathrm{avg}} = \frac{2}{K+1} \sum_{k=0}^K p_k 
\]
is also a valid p-value. Table 1 of \citet{vovk2020combining} lists a number
of other examples. Interestingly, if the p-values for individual RKS tests are
computed via permutations (as have advocated for generally in this paper), then
we can take any aggregation function $G$, and calibrate the corresponding
p-value   
\[
G(p_0,\ldots,p_K)
\]
itself via permutations at no additional computational cost, provided the 
same permutations $\pi_b$, $b = 1,\ldots,B$ were used to compute each 
p-value $p_k$, and these permutations form a group. This can be explained as
follows (similar ideas appear in \citet{solmi2014combining}). Abbreviating
\smash{$T(b,k) = T_{d,k}(z_{\pi_b})$} for the test statistic for each $b,k$, and
\smash{$T(0,k) = T_{d,k}(z)$} for  the test statistic on the original sequence,
it helps to visualize the matrix of available test statistics: 
\begin{center}
\begin{tabular}{|c|c|c|c|}
\hline
$T(0,0)$ & $T(0,1)$ & $\cdots$ & $T(0,K)$ \\
\hline
$T(1,0)$ & $T(1,1)$ & $\cdots$ & $T(1,K)$ \\
\hline
$\vdots$ & & & \\
\hline
$T(B,0)$ & $T(B,1)$ & $\cdots$ & $T(B,K)$ \\
\hline
\end{tabular}
\end{center}
The permutation p-value, per column (per degree), is given by the following
computation:
\[
p_k = \frac{\sum_{b=0}^{B} \one\{ T(b,k) \geq T(0,k) \}}{B+1}, \quad k =
0,\ldots,K. 
\]
The aggregator, $G(p_0,\ldots,p_K)$, combines this information across the
columns. To calibrate it, we can treat this as yet another test statistic of the
data, writing \smash{$\tilde{T}(0) = G(p_0(z), \ldots, p_K(z))$}, and writing 
\[
\tilde{T}(b) = G\big( p_0(z_{\pi_b}), \ldots, p_K(z_{\pi_b}) \big) 
\] 
for the result when the original sequence $z$ is replaced by
\smash{$z_{\pi_b}$}. Now comes the key realization: each
\smash{$p_k(z_{\pi_b})$} can be computed from the same matrix of  
available test statistics as above, just given by swapping the $b\th$ row with 
the first row:   
\[
p_k(z_{\pi_b}) = \frac{\sum_{b'=0}^{B} \one\{ T(b',k) \geq T(b,k) \}}{B+1}, 
  \quad k = 0,\ldots,K.
\]
This is true because the set of permutations generated by applying $\pi_{b'}$,
$b' = 0,\dots,B$ to \smash{$z_{\pi_b}$} is the same as that generated by
applying $\pi_{b'}$, $b' = 0,\dots,B$ to $z$ (due to the group structure). Therefore
each \smash{$\tilde{T}(b)$} can be computed from the same matrix of available
test statistics, as can the permutation p-value corresponding to the aggregate 
statistic: 
\[
p_G = \frac{\sum_{b=0}^{B} \one\{ \tilde{T}(b) \geq \tilde{T}(0) \}}{B+1}.
\]
This is a valid p-value in finite-sample, for the same reason that permutation
p-values are valid whenever the set of permutations used forms a group (see,
e.g., \citet{hemerik2018exact}).

Figure \ref{fig:agg-and-grayout} examines two aggregation methods over the
same setup as that in Figure \ref{fig:roc-setup-per-dimension}. As before, we
use a mixture null as a proxy for the permutation null (for the sake of
computational efficiency; our simulations use 100 repetitions). The aggregation
rules considered are the min-rule: 
\smash{$G(p_0,\ldots,p_K) = \min_{k=0,\ldots,K} p_k$}, 
and the so-called Fisher-rule: 
\smash{$G(p_0,\ldots,p_K) = -\sum_{k=0}^K \log p_k$}. 
For each task (row), Figure \ref{fig:agg-and-grayout} displays the ROC curves
corresponding to these rules as dotted colored lines, and the ROC curve from the
best RKS (single $k$) test as a solid colored line; ROC curves from the other
RKS tests (other values of $k$) are grayed-out as a visual aid. We can see that
either aggregation method frequently competes with the best RKS test, with the
min-rule performing slightly better overall. 

\begin{figure}[p]
\vspace{-10pt}
\includegraphics[width=\textwidth]{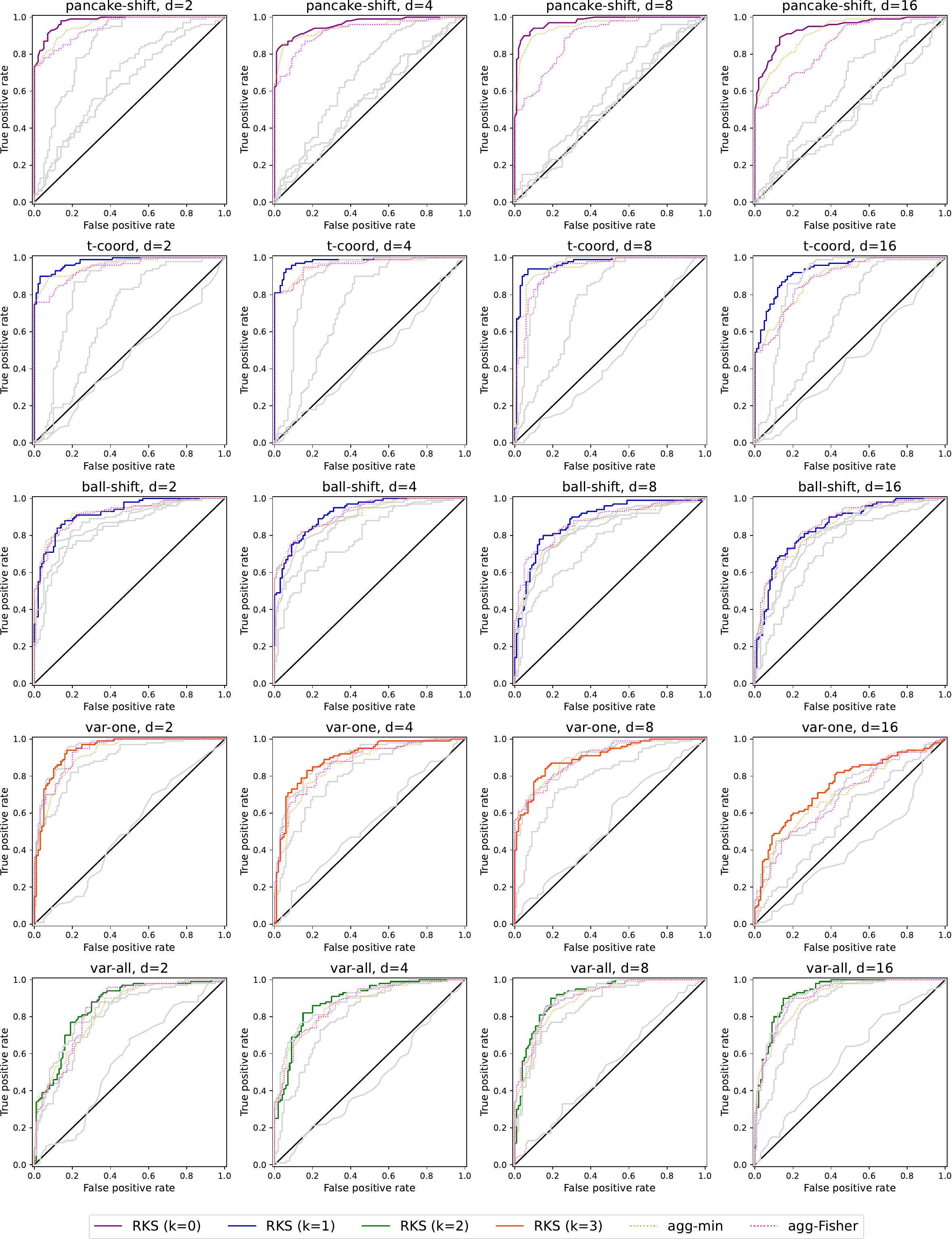}
\caption{ROC curves across the same experimental settings as in Figure
  \ref{fig:roc-setup-per-dimension}, along with ROC curves from the min
  aggregation rule (``agg-min'') and Fisher aggregation rule (``agg-Fisher''). 
  In each row, only the ROC curve from the best-performing RKS test is drawn in
  color, and the rest are drawn in gray.}
\label{fig:agg-and-grayout}
\end{figure}

\section{Discussion}

We proposed and analyzed a new multivariate nonparametric two-sample test,
which is an integral probability metric (IPM) test with respect to a class of
functions having unit Radon total variation of a given smoothness degree $k
\geq 0$. We call this the Radon-Kolmogorov-Smirnov (RKS) test, and it can be
seen as a generalization of the Kolmogorov-Smirnov (KS) test to multiple
dimensions and higher orders of smoothness. 

The RKS test, like any statistical test, is not expected to be most powerful in
all scenarios. Roughly, we expect the RKS test to be favorable in settings in
which the given distributions $\popP$ and $\popQ$ differ in only a few
directions in the ambient space; and higher orders of smoothness $k$ can be more
sensitive to tail differences along these few directions. When $\popP$ and
$\popQ$ differ in many or all directions, the more traditional kernel MMD test
with (say) Gaussian kernel will probably perform better.

It is worth revisiting computation of the RKS distance and discussing some
limitations of our work and possible future directions. In this paper, we
considered \emph{full batch} gradient descent applied to \eqref{eq:opt-problem}
(we actually used Adam, but the differences for this discussion are
unimportant), i.e., in each update the entire data set
\smash{$\{x_i\}_{i=1}^{m} \cup \{y_i\}_{i=1}^{n}$} is used to calculate the 
gradient. This results in the following complexity: $T$ iterations of gradient
descent costs $O((n+m)dNT)$ operations, where recall $N$ is the number of
neurons. However, \emph{mini batch} gradient descent should be also
investigated, and is likely to be preferred in large problem sizes. That said,
implementing mini batch gradients for \eqref{eq:opt-problem} is nontrivial due
to the nonseparable nature of the criterion across observations (due to the log
and the absolute value in the first term), and pursuing this carefully is an
important direction for future work. For now, we note that by just subsetting
$\empP, \empQ$ to a mini batch of size $b \leq n+m$, this results in a
$T$-iteration complexity of $O(bdNT)$ operations. To compare kernel MMD: 
this costs $O((n+m)^2 d)$ operations, and thus we can see that for large enough
problem sizes computation of the RKS statistic via full batch and especially
mini batch gradient descent can be more efficient, provided that small or
moderate values of $N,T$ still suffice to obtain good performance.

Finally, another direction worth investigating is the use of deeper networks,
which could be interpreted as learning the feature representation in which there
is the largest discrepancy in projected truncated moments. This would help
further distinguish the RKS test from kernel MMD, because the latter is ``stuck
with'' the representation provided by the choice of kernel, and it cannot learn
the representation based on the data. A deep RKS test could dovetail nicely with
existing architectures (and even pre-trained networks) for related tasks in
areas such as generative modeling, and out-of-distribution detection.

\subsection*{Acknowledgments}

We thank Rahul Parhi and Jonathan Siegel and helpful discussions about Radon
bounded variation spaces. This work was supported by NSF SCALE MoDL grant
2134133, ONR MURI grant N00014-20-1-2787, and the Miller Institute for Basic
Research in Science at the University of California, Berkeley.

\bibliographystyle{plainnat}
\bibliography{rks}

\clearpage
\appendix

%!TEX root = main-arxiv.tex

\allowdisplaybreaks

\section{Proof of Proposition \ref{prop:Parhi-Nowak-repr}}  
\label{supp:Parhi-Nowak-repr}

    By Theorem 22 in \citet{parhi2021banach}, every \smash{$\sf \in \RBV^k$} has a representative $f \in \sf$ of the form
    \begin{equation}
    \label{eq:ridge-mixture-representation-original}
        f(x) = \int_{\S^{d-1} \times \reals} 
            \Big[
            	(w^\T x - b)_+^k - \sum\nolimits_{|\alpha|\leq k} c_\alpha(w,b)x^\alpha
            \Big]
            \de \mu(w,b) + q_0(x),
    \end{equation}
    for a finite Radon measure $\mu$ on \smash{$\RadonDomain$} such that: 
    \textit{(i)} $\mu$ is an odd (respectively even) measure if $k$ is even (respectively odd);
    \textit{(ii)} the integrand is integrable in $(w,b)$ with respect to $\mu$ for any $x$;
    \textit{(iii)} \smash{$\| \mu \|_{\TV} = \|f\|_{\RTVk}$};
    \textit{(iv)} \smash{$q_0(x)$} is a polynomial of degree at most $k$;
    \textit{(v)} \smash{$\sum_{|\alpha|\leq k}\big(c_\alpha(w, b) x^\alpha + (-1)^k c_\alpha(-w, -b)x^\alpha\big) = (w^\T x-b)^k$}.
    
    Using the conditions \textit{(i)} and \textit{(v)}, 
    we can show the following with a manual calculation and the change of
    variables $(w,b) \to (-w, -b)$:
    \begin{align}
    \label{eqn:mu-flipping-trick}
    	&\int_\negativeRD \Big[(w^\T x - b)_+^k - \sum\nolimits_{|\alpha|\leq k} c_\alpha(w,b)x^\alpha \Big]\;\de\mu(w,b) \nonumber \\
        &=\int_{\S^{d-1} \times (0,\infty)} \Big[(-w^\T x + b)_+^k - \sum\nolimits_{|\alpha|\leq k} c_\alpha(-w,-b)x^\alpha \Big]\;\de\mu(-w,-b) \nonumber \\
        &=\int_{\S^{d-1} \times (0,\infty)} (-1)^{2k+1}\Big[(w^\T x - b)_-^k - \sum\nolimits_{|\alpha|\leq k} (-1)^k c_\alpha(-w,-b)x^\alpha \Big]\;\de\mu(w,b) \nonumber \\
        &=\int_{\S^{d-1} \times (0,\infty)} (-1)^{2k+1}\Big[(w^\T x - b)_-^k - (w^\T x - b)^k +  \sum\nolimits_{|\alpha|\leq k}  c_\alpha(w,b)x^\alpha \Big]\;\de\mu(w,b) \nonumber \\
    	&=\int_\positiveRD \Big[(w^\T x - b)_+^k - \sum\nolimits_{|\alpha|\leq k} c_\alpha(w,b)x^\alpha \Big]\;\de\mu(w,b).
    \end{align}
    Define a measure $\tilde{\mu}$ by {$\tilde{\mu}(E) = \mu(E_0) +2\mu(E_+)$}
    where for an event $E$ we denote {$E_0= (\zeroRD)\cap E$} and {$E_+=
      (\positiveRD)\cap E$}. Then \eqref{eqn:mu-flipping-trick} rewrites 
      \eqref{eq:ridge-mixture-representation-original} as  
    \begin{equation}
    \label{eq:ridge-mixture-representation-intermediate}
    	f(x) = \int_\nonnegRD \Big[(w^\T x - b)_+^k - \sum_{|\alpha|\leq k} c_\alpha(w,b)x^\alpha \Big]\;\de\tilde{\mu}(w,b) + q_0(x)
    \end{equation}
    where {$\|\tilde{\mu}\|_\TV = \|\mu\|_\TV = \|f\|_{\RTVk}$}.
    Note that {$(w^\T x - b)_+^k$} is integrable in $(w,b)$ with respect to $\tilde{\mu}$ because $\|w\|_2= 1$ and $b \geq 0$.
    Thus, $\int_\nonnegRD (w^\T x-b)_+^k \de\tilde{\mu}< \|x\|_2^k \|\tilde{\mu}\|_\TV$, hence $\sum_{|\alpha|\leq k} c_\alpha(w,b) x^\alpha$ is integrable with respect to $(w,b)$ for every $x$ as well. Thus, 
    \begin{equation*}
    	q_1(x) =\int_\nonnegRD\Big(\sum\nolimits_{|\alpha|\leq k} c_\alpha(w, b)x^\alpha \Big) \de\tilde{\mu}(w, b)
    \end{equation*}
    is well-defined due to the condition \textit{(ii)} and \eqref{eq:ridge-mixture-representation-intermediate}.
    Therefore \eqref{eq:ridge-mixture-representation-intermediate} can be rewritten as
    \begin{equation*}
    	f(x) = \int_\nonnegRD (w^\T x - b)_+^k \;\de\tilde{\mu}(w,b) + \tilde{q_0}(x)
    \end{equation*}
    where $\tilde{q_0}(x) = q_0(x) + q_1(x)$ is a polynomial of degree at most $k$.
    Also \smash{$\tilde{\mu}|_{\zeroRD} = \mu|_{\zeroRD}$} by definition of $\tilde{\mu}$, and hence, \smash{$\tilde{\mu}|_{\zeroRD}$} is odd when $k$ is even, and even when $k$ is odd by the condition \textit{(i)}.
    	Thus, we have arrived at the desired representation.

\section{Proof of Theorem \ref{thm:RBVk-continuous-representatives}}
\label{supp:RBVk-continuous-representatives}

We rely on the representation established by Proposition
\ref{prop:Parhi-Nowak-repr} and treat the cases $k = 0$ and $k \geq 1$
separately. 

\paragraph{Case $k = 0$.}

For this case, the boundary condition (continuity at the origin) implies that
the integral may be restricted to $b > 0$. 

\begin{lemma}
\label{lem:continuity-at-0}
    Consider \smash{$\sf \in \RBV^0$}.
    There exists a representative $f \in \sf$ which is continuous at $0\in\R^d$ if and only if $f$ admits a representation
    \begin{equation}
    \label{eq:k=0-pos-b-mixture}
        f(x) = \int_{\positiveRD} \one\{w^\T x \geq b\}\de \mu(w,b) + q,
    \end{equation}
    where $q$ is a constant and $\mu$ is a signed measure with \smash{$\|\mu\|_{\TV} < \infty$}.
\end{lemma}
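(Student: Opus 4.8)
The plan is to use the representation from Proposition~\ref{prop:Parhi-Nowak-repr} with $k=0$ and to isolate the part of the representing measure that sits at $b=0$, since that is the only possible source of a discontinuity at the origin. Concretely, for $\sf\in\RBV^0$ Proposition~\ref{prop:Parhi-Nowak-repr} furnishes a representative of the form $\int_{\nonnegRD} 1\{w^\top x\ge b\}\,\de\mu(w,b)+q$ with $q$ a constant and $\|\mu\|_{\TV}<\infty$ (using $t_+^0=1\{t\ge 0\}$). Split $\mu=\mu_0+\mu_+$ with $\mu_0=\mu|_{\zeroRD}$ and $\mu_+=\mu|_{\positiveRD}$, and write this representative as $g+h$ where
\[
g(x)=\int_{\zeroRD} 1\{w^\top x\ge 0\}\,\de\mu_0(w,0),\qquad
h(x)=\int_{\positiveRD} 1\{w^\top x\ge b\}\,\de\mu_+(w,b)+q .
\]
Note that $g$ is positively homogeneous of degree $0$, so $g(x)=\bar g(x/\|x\|_2)$ for $x\ne 0$ and some $\bar g\colon\S^{d-1}\to\R$; we read the lemma as the equivalence ``$\sf$ has a representative continuous at $0$'' $\iff$ ``$\sf$ has a representative of the displayed special form''.

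First I would prove that $h$ is always continuous at $0$, which immediately yields the ``if'' direction. Since $b>0$ on the support of $\mu_+$ we have $h(0)=q$ and $h(x)-h(0)=\int_{\positiveRD}1\{w^\top x\ge b\}\,\de\mu_+(w,b)$; because $1\{w^\top x\ge b\}=1$ and $b>0$ force $0<b\le |w^\top x|\le\|x\|_2$, we get $|h(x)-h(0)|\le |\mu_+|\big(\{(w,b):0<b\le\|x\|_2\}\big)\to 0$ as $\|x\|_2\to0$ by continuity from above of the finite measure $|\mu_+|$ (the sets decrease to $\emptyset$). So when $\mu_0=0$ the representative $h$ itself is continuous at $0$.

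For the ``only if'' direction, suppose $f\in\sf$ is continuous at $0$. Then $f=g+h$ Lebesgue-a.e., so $g=f-h$ a.e. Both $f$ (by hypothesis) and $h$ (by the previous step) are continuous at the origin, hence for every $\epsilon>0$ there is $\delta>0$ with $|g(x)-L|<\epsilon$ for a.e.\ $x$ in the punctured ball $\{0<\|x\|_2<\delta\}$, where $L:=f(0)-h(0)$. Passing to polar coordinates $x=r\theta$ and applying Fubini, for a.e.\ $\theta\in\S^{d-1}$ the set of $r\in(0,\delta)$ with $|g(r\theta)-L|<\epsilon$ has full measure in $(0,\delta)$, hence is nonempty; since $g(r\theta)=\bar g(\theta)$, this gives $|\bar g(\theta)-L|<\epsilon$ for a.e.\ $\theta$. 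Taking $\epsilon=1/n$ and intersecting over $n$ shows $\bar g\equiv L$ a.e., so $g=L$ a.e.\ on $\R^d$. Therefore $f=L+h$ a.e., i.e.
\[
f(x)=\int_{\positiveRD} 1\{w^\top x\ge b\}\,\de\mu_+(w,b)+(q+L)
\]
for Lebesgue-a.e.\ $x$, a representation of the claimed form with $\|\mu_+\|_{\TV}\le\|\mu\|_{\TV}<\infty$.

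I expect the main obstacle to be this last step: a degree-$0$ homogeneous function that agrees a.e.\ with a function continuous at the origin need not be continuous anywhere, and homogeneity alone does not let one evaluate $g$ at a chosen point, so the argument must route through the a.e.\ statement. The clean device is the polar-coordinate Fubini decomposition used above, which lets homogeneity ``upgrade'' an a.e.\ bound on a punctured ball to an a.e.\ bound on the sphere, and then to equality. The remaining ingredients---the $k=0$ specialization of Proposition~\ref{prop:Parhi-Nowak-repr}, the measure split, and the continuity of $h$---are routine.
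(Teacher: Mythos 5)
Your proof is correct and follows essentially the same approach as the paper's: split off the $b=0$ part of the representing measure from Proposition~\ref{prop:Parhi-Nowak-repr}, show the $b>0$ part is always continuous at the origin, and use degree-$0$ homogeneity plus a polar/Fubini argument to reduce the remaining piece to an a.e.\ constant. The only differences are cosmetic (continuity from above of $|\mu_+|$ in place of dominated convergence for the ``if'' direction), and your explicit polar-coordinate Fubini step usefully fills in a point the paper states somewhat tersely.
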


\begin{proof}[Proof of Lemma \ref{lem:continuity-at-0}]
    By Proposition \ref{prop:Parhi-Nowak-repr}, there exists $f \in \sf$ such that for all $x$,
    \begin{equation*}
    	\begin{aligned}
    	    f(x) = \int_{\nonnegRD} \one\{w^\T x \geq b\} \de \mu(w,b) + q
    	         = f^{=0}(x) + f^{>0}(x),
    	\end{aligned}
    \end{equation*}
    where
    \begin{equation*}
	    \begin{gathered}
	        f^{>0}(x) =\int_{\positiveRD} \one\{w^\T x \geq b\} \de \mu(w,b) + q, \\
        	f^{=0}(x) = \int_{\zeroRD} \one\{w^\T x \geq 0\} \de \mu(w,b).
	    \end{gathered}
    \end{equation*}

    If $b > 0$, then \smash{$\lim_{x \to 0} \one\{w^\T x \geq b\} = 0$}.
    Thus, by dominated convergence,
    \begin{equation*}
        \lim_{x \to 0} f^{>0}(x) = q(x) = f^{>0}(0).
    \end{equation*}
    Thus, \smash{$f^{>0}$} is continuous at $0$.
    We conclude that $f$ is almost-everywhere equal to a function which is continuous at 0 if and only if \smash{$f^{=0}$} is almost-everywhere equal to a function which is continuous at 0.

    For $x \neq 0$,
    \smash{$w^\T x \geq 0$} if and only if \smash{$w^\T x / \| x \| \geq 0$}.
    Thus, \smash{$f^{=0}(x) = f^{=0}(x/\|x\|)$}.
    Thus, \smash{$f^{=0}$} is almost everywhere equal to a function which is continuous at $0$ if and only if there exists some $c \in \reals$ such that for almost every \smash{$x \in \S^{d-1}$} (almost every with respect to Lebesgue surface measure),
    we have \smash{$f^{=0}(x) = c$}.
    But this implies that \smash{$f^{=0}(x) = c$} for almost every \smash{$x \in \reals^d$}.
    Thus,
    we get that for almost every $x \in \reals^d$,
    \begin{equation*}
        f(x) = f^{>0}(x) + c
               = \int_{\positiveRD} \one\{w^\T x \geq b\} \de \mu(w,b) + q + c.
    \end{equation*}
    The right-hand side is the desired representative.
\end{proof}

Theorem \ref{thm:RBVk-continuous-representatives} in the case $k = 0$ is now a consequence of the following lemma, which additionally gives an explicit form for the radially cone continuous representative of $\sf$.

\begin{lemma}
\label{lem:radial-cone-continuity}
    Assume $\sf \in \RBV^0$ contains a representative which is continuous at $0$.
	    Then it contains a unique representative which is radially cone continuous.
	    This representative can be written in the form \eqref{eq:k=0-pos-b-mixture}
	    where $\mu$ is a signed measure with \smash{$\| \mu \|_{\TV} < \infty$}.
    Conversely, any function of the form \eqref{eq:k=0-pos-b-mixture}
    	where $\mu$ is a signed measure with \smash{$\| \mu \|_{\TV} < \infty$}
    	is the unique radial cone continuous representative of some \smash{$\sf \in \RBV^0$}.
\end{lemma}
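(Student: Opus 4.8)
The plan is to bootstrap off Lemma~\ref{lem:continuity-at-0}. Given $\sf\in\RBV^0$ with a representative continuous at the origin, that lemma produces a representative $f_0$ of the form~\eqref{eq:k=0-pos-b-mixture}, namely $f_0(x)=\int_{\positiveRD} 1\{w^\top x\ge b\}\,\de\mu(w,b)+q$ with $\|\mu\|_{\TV}<\infty$ and $q$ a constant, so the main work is to (a) verify $f_0$ is radially cone continuous, (b) prove uniqueness, and (c) handle the converse. For (a): continuity at $0$ holds because $1\{w^\top x\ge b\}=0$ whenever $b>\|x\|_2$, so $|f_0(x)-q|\le |\mu|\big(\{(w,b):0<b\le\|x\|_2\}\big)\to 0$ as $x\to 0$ by countable additivity. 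For $x\neq 0$, fix a sequence $\epsilon_n\to 0^+$ and $v_n\in\S^{d-1}$ with $v_n\to x/\|x\|_2$. For every $(w,b)$ with $w^\top x\neq b$ the integrand $1\{w^\top(x+\epsilon_nv_n)\ge b\}$ is eventually constant and equal to $1\{w^\top x\ge b\}$; for $(w,b)$ with $w^\top x=b$ we have $b>0$ and $x\neq 0$, and $w^\top(x+\epsilon_nv_n)=b+\epsilon_n w^\top v_n$ with $w^\top v_n\to b/\|x\|_2>0$, so for large $n$ this strictly exceeds $b$ and the integrand equals $1=1\{w^\top x\ge b\}$. (This is the single place where $b>0$ and the convention $t_+^0=1\{t\ge 0\}$ are used: the closed halfspace convention makes the boundary value match the one-sided limit.) The integrand thus converges pointwise in $(w,b)$ and is dominated by $1\in L^1(|\mu|)$, so dominated convergence gives $f_0(x+\epsilon_nv_n)\to f_0(x)$.

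For uniqueness (b), I would show that any two radially cone continuous representatives $f_1,f_2$ of the same class — which agree on a set $A$ of full Lebesgue measure — agree everywhere. For $x\neq 0$ and small $\epsilon_0,\delta>0$, the truncated spherical cone consisting of points $x+\epsilon v$ with $0<\epsilon<\epsilon_0$ and $v\in\S^{d-1}$ within distance $\delta$ of $x/\|x\|_2$ has positive Lebesgue measure (nonempty interior when $d\ge 2$; a nondegenerate interval when $d=1$), hence meets $A$. Shrinking $\epsilon_0,\delta\downarrow 0$ yields points $x_n\in A$ with $x_n=x+\epsilon_nv_n$, $\epsilon_n\to 0^+$, $v_n\to x/\|x\|_2$, so $x_n\to x$, and radial cone continuity gives $f_1(x)=\lim f_1(x_n)=\lim f_2(x_n)=f_2(x)$. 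The same argument at the origin (approaching through $A$, using continuity at $0$) gives $f_1(0)=f_2(0)$. Combined with part (a), $f_0$ is the unique radially cone continuous representative of $\sf$, and it has the claimed form~\eqref{eq:k=0-pos-b-mixture}.

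For the converse (c), take any signed measure $\mu$ with $\|\mu\|_{\TV}<\infty$, any constant $q$, and set $f(x)=\int_{\positiveRD}1\{w^\top x\ge b\}\,\de\mu(w,b)+q$. The computation in part (a) shows $f$ is radially cone continuous, in particular continuous at $0$. That $f$ belongs to $\RBV^0$ with $\|f\|_{\RTV^0}\le\|\mu\|_{\TV}$ is the easy, constructive direction of the correspondence underlying Proposition~\ref{prop:Parhi-Nowak-repr}: it follows from the functional-analytic definition of $\RBV^0$ in \citet{parhi2021banach,parhi2022ridge} together with $\|(w^\top\!\cdot-b)_+^0\|_{\RTV^0}=1$ (which is the content of the identity $R_0\{(w^\top\!\cdot-b)_+^0\}=\tfrac12(\delta_{(w,b)}-\delta_{(-w,-b)})$), the constant $q$ contributing nothing to the seminorm. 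Letting $\sf=[f]$ be the equivalence class of $f$, the first part of the lemma applies, and its unique radially cone continuous representative must be $f$ itself by the uniqueness in part (b).

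I expect the main obstacle to be part (b): one must confirm that the set of admissible approach points in the definition of radial cone continuity is ``fat'' — the truncated cone around $x$ carries positive Lebesgue measure — so that agreement almost everywhere forces agreement everywhere; this is exactly what makes the radially cone continuous representative canonical, and it is also where the $d=1$ versus $d\ge 2$ cases need slightly different justifications. A secondary, more routine matter is making the dominated-convergence step in part (a) fully rigorous on the boundary hyperplanes $\{w^\top x=b\}$, which is precisely where the restriction to $b>0$ and the closed-halfspace convention are indispensable.
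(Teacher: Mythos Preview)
Your proof is correct and follows essentially the same route as the paper: existence of a radially cone continuous representative via Lemma~\ref{lem:continuity-at-0} followed by a dominated-convergence argument on the indicator integrand, and the converse by appealing to the Parhi--Nowak representation. In fact the paper's own proof omits the uniqueness argument entirely, so your part~(b)---showing that the truncated cone of admissible approach points around each $x\neq 0$ has positive Lebesgue measure and therefore meets the almost-everywhere agreement set---fills a gap the paper leaves implicit.
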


\begin{proof}[Proof of Lemma \ref{lem:radial-cone-continuity}]
    Consider \smash{$\sf \in \RBV^0$} which contains a representative which is
    continuous at the origin.  By Lemma \ref{lem:continuity-at-0}, there exists
    $f \in \sf$ which satisfies \eqref{eq:k=0-pos-b-mixture} for all $x\in\R^d$,
    for some constant $q$ and signed measure $\mu$ with \smash{$\| \mu \|_{\TV}
      < \infty$}.  
    
    Fix $x \neq 0$.
    Consider any sequence \smash{$\epsilon_j\in\R$} such that \smash{$\epsilon_j \downarrow 0$}, and \smash{$v_j \in \S^{d-1}$} such that \smash{$v_j \to x / \|x\|$}. Consider any \smash{$(w,b) \in \positiveRD$}. If \smash{$w^\T x < b$}, then \smash{$w^\T (x + \epsilon_j v_j) < b$} eventually because \smash{$x + \epsilon_j v_j \to x$}. Alternatively, \smash{$w^\T x \geq b$} gives \smash{$w^\T x / \|x\| > 0$}, whence \smash{$w^\T v_j > 0$} eventually. In particular, \smash{$w^\T (x + \epsilon_j v_j) > w^\T x > 0$} eventually. Thus, in both cases with have \smash{$\one\{w^\T (x + \epsilon_j v_j) \geq b\} \to \one\{w^\T x \geq b\}$}. By dominated convergence, we get that 
    \begin{equation*}
	    \begin{aligned}
    	    f(x + \epsilon_j v_j) &= \int_{\positiveRD} \one\{w^\T (x+\epsilon_j v_j) \geq b\}\de \mu(w,b) + q
	        \\
            &\to \int_{\positiveRD} \one\{w^\T x \geq b\}\de \mu(w,b) + q = f(x).
    \end{aligned}
    \end{equation*}
    Thus, $f$ is radially cone continuous and can be written in the form of \eqref{eq:k=0-pos-b-mixture}.

    By Theorem 22 in \citet{parhi2021banach} and reversing the construction of $\tilde{\mu}$ from $\mu$ in the proof of Proposition \ref{prop:Parhi-Nowak-repr},
    any function of the form \eqref{eq:k=0-pos-b-mixture} where $\mu$ is a signed measure with $\| \mu \|_{\TV} < \infty$ is a represenative of an equivalence class $\sf \in \RBV^0$. The argument above also shows that it is radially cone continuous. Thus, we have shown the converse statement as well.
\end{proof}

\paragraph{Case $k = 1$.} 

We begin with a lemma.

\begin{lemma}
\label{lem:k-1-derivative}
    For $k \geq 1$ and every \smash{$\sf \in \RBV^k$}, there exists a unique representative $f \in \sf$ which is continuous. This representative is in fact $(k-1)$-times continuously differentiable and can be written in the form in Proposition \ref{prop:Parhi-Nowak-repr}. Moreover, for any multi-index $\alpha$ with \smash{$|\alpha| \leq k-1$}, we have for all \smash{$x \in \reals^d$} that
    \begin{equation}
    \label{eq:deriv-explicit}
        D^\alpha f(x){}
            =
            \frac{k!}{(k-|\alpha|)!}
            \int_{\nonnegRD} w^\alpha (w^\T x - b)_+^{k - |\alpha|} \de \mu(w,b)
            + 
            D^\alpha q(x).
    \end{equation}
    where \smash{$w^\alpha = \prod_{i=1}^d w_i^{\alpha_i}$}.
\end{lemma}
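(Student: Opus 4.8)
The plan is to take the representative $f \in \sf$ furnished by Proposition \ref{prop:Parhi-Nowak-repr}, namely $f(x) = \int_{\nonnegRD}(w^\top x - b)_+^k \, \de\mu(w,b) + q(x)$ with $\|\mu\|_{\TV} = \|\sf\|_{\RTVk}$ and $\deg q \le k$, and to show that this particular $f$ is the (unique) continuous representative, is in fact $C^{k-1}$, and satisfies \eqref{eq:deriv-explicit}. The single structural fact that makes everything work is that, because $\|w\|_2 = 1$ and $b \ge 0$, we have $0 \le (w^\top x - b)_+ \le (\|x\|_2 - b)_+ \le \|x\|_2$ for every $(w,b) \in \nonnegRD$; consequently, on any compact set $K \subset \R^d$ the integrand and all of the ``differentiated integrands'' we will encounter are bounded by constants depending only on $K$ and $k$, and such constants are trivially $|\mu|$-integrable since $|\mu|(\nonnegRD) = \|\mu\|_{\TV} < \infty$.

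First I would check continuity of $f$: for $x_n \to x$ we have $(w^\top x_n - b)_+^k \to (w^\top x - b)_+^k$ pointwise in $(w,b)$, and for large $n$ these are dominated by $(\sup_n \|x_n\|_2)^k$, so dominated convergence gives $\int (w^\top x_n - b)_+^k \, \de\mu \to \int (w^\top x - b)_+^k \, \de\mu$; together with continuity of the polynomial $q$, this shows $f$ is continuous on $\R^d$. Next I would establish \eqref{eq:deriv-explicit} by induction on $|\alpha|$, the case $|\alpha|=0$ being Proposition \ref{prop:Parhi-Nowak-repr} itself. For the inductive step I apply the standard ``differentiation under the integral sign'' theorem one coordinate at a time: if $|\alpha| = j \le k-2$, then the integrand $w^\alpha (w^\top x - b)_+^{k-j}$ of the claimed formula for $D^\alpha f$ is, as the composition of the affine map $x \mapsto w^\top x - b$ with the $C^1$ map $t \mapsto t_+^{k-j}$ (here $k-j \ge 2$), continuously differentiable in $x$, with $\partial_{x_i}$-derivative $(k-j)\, w^{\alpha+e_i}(w^\top x - b)_+^{k-j-1}$; on any compact set this is bounded by a constant (using $|w^{\alpha+e_i}| \le 1$), hence $|\mu|$-integrable, so we may differentiate under the integral. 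Since $\tfrac{k!}{(k-j)!}(k-j) = \tfrac{k!}{(k-j-1)!}$, this yields exactly the formula for $D^{\alpha+e_i}f$, and $D^\alpha q$ supplies the polynomial term.

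It then remains to note that each $D^\alpha f$ with $|\alpha| \le k-1$ is continuous: the exponent $k - |\alpha| \ge 1$ keeps the integrand $w^\alpha(w^\top x - b)_+^{k-|\alpha|}$ continuous in $x$, so the same dominated-convergence argument as above applies (and $D^\alpha q$ is a polynomial). Hence $f \in C^{k-1}(\R^d)$. For uniqueness, if $f_1, f_2 \in \sf$ are both continuous then $f_1 = f_2$ Lebesgue-almost everywhere, so $\{x : f_1(x) = f_2(x)\}$ is closed with Lebesgue-null complement; a null set contains no nonempty open ball, so this set is dense and closed, hence all of $\R^d$, giving $f_1 = f_2$. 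In particular the representative from Proposition \ref{prop:Parhi-Nowak-repr} is the unique continuous one, which also justifies the assertion that it ``can be written in the form in Proposition \ref{prop:Parhi-Nowak-repr}.''

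I do not expect a serious obstacle; the only point needing care is bookkeeping on exponents — the differentiation-under-the-integral step must be invoked only when the relevant truncated power $(w^\top x - b)_+^{k-j}$ is genuinely $C^1$ in $x$, i.e. when its exponent is at least $2$, which is precisely the range $|\alpha| = j \le k-2$, while the top-order derivatives $|\alpha| = k-1$ are obtained as the output of that step and their continuity comes from the dominated-convergence argument rather than from differentiating further.
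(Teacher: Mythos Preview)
Your proposal is correct and follows essentially the same approach as the paper: take the Proposition~\ref{prop:Parhi-Nowak-repr} representative, induct on $|\alpha|$, differentiate under the integral sign using the locally uniform bound coming from $\|w\|_2=1$ and $b\ge 0$, and get continuity of each $D^\alpha f$ by dominated convergence (the paper packages this last step into an auxiliary lemma rather than arguing it inline). Your careful separation of the differentiation step (exponent $\ge 2$) from the continuity step (exponent $\ge 1$) matches exactly the range $|\alpha|\le k-1$ the paper uses, and your uniqueness argument is the same ``two continuous functions equal a.e.\ are equal everywhere'' observation.
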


\begin{proof}[Proof of Lemma \ref{lem:k-1-derivative}]
    For any $\sf \in \RBVk$, let $f \in \sf$ be a representative of the form in
    Proposition \ref{prop:Parhi-Nowak-repr}. We will show that this
    representative $f$ is $(k-1)$-times continuously differentiable by inducting
    on the order $m = |\alpha|$ of the multi-index $\alpha$. 

    The base case is $m = 0$. In this case, the representation of the
    \smash{$0\th$} derivative is equivalent to the representation in the
    proposition. The continuity of $f$ follows by Lemma
    \ref{lem:rbv-ftn-continuous} applied to this representation.  

    Now consider $m \geq 1$, and assume we have established the result for
    $m-1$. Consider $\alpha$ with \smash{$|\alpha| = m$}, and without loss of
    generality, assume $\alpha_1 \geq 1$ and let \smash{$\tilde\alpha =
      (\alpha_1-1,\alpha_2,\ldots,\alpha_d)$}. By the inductive hypothesis,
    \[
        D^{\alpha}f(x)
            = \frac{k!}{(k-|\alpha|+1)!} \partial_{x_1}
              \int_{\nonnegRD} w^{\tilde{\alpha}} (w^\T x - b)_+^{k - |\alpha| + 1} \de \mu(w,b)
            +  D^\alpha q(x).
    \]
    Note that for $x\in\R^d$, we have \smash{$\partial_{x_1} \big(w^{\tilde{\alpha}} (w^\T x -b)_+^{k-|\alpha| +1}\big) = (k-|\alpha|+1) w^\alpha(w^\T x - b)^{k-|\alpha|}$}. This function is integrable in $(w,b)$ with respect to $\mu$ for every $x\in\R^d$, since \smash{$\|w\|_2=1$} and \smash{$b\geq 0$}.
    Moreover, the partial derivative is uniformly bounded on $\{x' : \|x'\| \leq 2\|x\|\}$ by $2^{k-|\alpha|+1}(k-|\alpha|+1) \big(\prod_{i=1}^d |w_i|^{\alpha_i}\big)(2\|x\|)^{k-|\alpha|}$, which is integrable with respect to $\mu$. These conditions allow us to apply the Leibniz rule to conclude that we can exchange differentiation and integration, which yields \eqref{eq:deriv-explicit}.
    The continuity of \smash{$D^\alpha f$} then follows from Lemma \ref{lem:rbv-ftn-continuous}, and hence, the induction is complete.

    Thus we have shown that $f$ is $(k-1)$-times continuously differentiable. All that is left to show is that $f$ is the unique element of $\sf$ which is $(k-1)$-times continuously differentiable. This holds because any two continuous functions which are equal almost everywhere are in fact equal everywhere.
\end{proof}

Theorem \ref{thm:RBVk-continuous-representatives} in the case $k \geq 1$ is now a consequence of the following lemma,
which additionally gives an explicit form for the radially cone continuous representative of $\sf$.

\begin{lemma}
\label{lem:repr-with-boundary-condition}
    For $k \geq 1$, let \smash{$\sf \in \RBVk$} be such that the unique
    continuous representative $f \in \sf$ is $k$-times classically
    differentiable at \smash{$0\in\R^d$} with \smash{$D^\alpha f(0) = 0$} for
    all \smash{$|\alpha| \leq k$}. Then $f$ has representation of the form in
    Proposition \ref{prop:Parhi-Nowak-repr} with $q(x) \equiv 0$. 
\end{lemma}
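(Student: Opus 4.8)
The plan is to begin from the representation supplied by Proposition~\ref{prop:Parhi-Nowak-repr} (the same measure used in Lemma~\ref{lem:k-1-derivative}), writing the unique continuous representative as $f = g + q$ with $g(x) = \int_{\nonnegRD}(w^\T x - b)_+^k\,\de\mu(w,b)$ and $q$ a polynomial of degree at most $k$, and to show that the hypothesis $D^\alpha f(0) = 0$ for all $|\alpha|\le k$ forces $q\equiv 0$. It is convenient to split $g = g_0 + g_+$ according to whether $b=0$ or $b>0$, i.e.\ $g_0(x) = \int_{\zeroRD}(w^\T x)_+^k\,\de\mu(w,b)$ and $g_+(x) = \int_{\positiveRD}(w^\T x - b)_+^k\,\de\mu(w,b)$. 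I will also use the fact, established within the proof of Proposition~\ref{prop:Parhi-Nowak-repr}, that $\mu|_{\zeroRD}$ is odd in $w$ when $k$ is even and even in $w$ when $k$ is odd; equivalently, $g_0(-x) = (-1)^{k+1} g_0(x)$ for all $x$.

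The first step is to show $q$ is homogeneous of degree exactly $k$. From the formula \eqref{eq:deriv-explicit} for the derivatives of $f$ of order $|\alpha| \le k-1$ one reads off $D^\alpha g(x) = \tfrac{k!}{(k-|\alpha|)!}\int_{\nonnegRD} w^\alpha (w^\T x - b)_+^{k-|\alpha|}\,\de\mu(w,b)$, and evaluating at $x=0$ gives $D^\alpha g(0) = 0$, since $(-b)_+ = 0$ for $b\ge 0$ and the exponent $k-|\alpha|$ is at least $1$. Together with $D^\alpha f(0) = 0$ this yields $D^\alpha q(0) = 0$ for all $|\alpha|\le k-1$, so the polynomial $q$ of degree at most $k$ reduces to its homogeneous degree-$k$ part, and in particular $q(-x) = (-1)^k q(x)$.

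The second step is to compare leading-order behaviour at the origin. For $g_+$: its integrand is nonzero only when $0 < b < w^\T x \le \|x\|_2$, on which set it is bounded by $\|x\|_2^k$, so $|g_+(x)| \le \|x\|_2^k\,|\mu|\big(\S^{d-1}\times(0,\|x\|_2)\big)$, which is $o(\|x\|_2^k)$ by continuity of the finite measure $|\mu|$ as $\|x\|_2\downarrow 0$. Meanwhile Taylor's theorem, applicable since $f$ is $(k-1)$-times continuously differentiable and $k$-times differentiable at $0$ with all those derivatives equal to zero, gives $f(x) = o(\|x\|_2^k)$. Hence $g_0(x) = f(x) - q(x) - g_+(x) = -q(x) + o(\|x\|_2^k)$. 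Since $g_0$ is positively homogeneous of degree $k$ and $q$ is homogeneous of degree $k$, evaluating this identity along a ray $x = tv$ and letting $t\downarrow 0$ forces $g_0(v) = -q(v)$ for every $v$, so $g_0 = -q$ identically; in particular $g_0$ is a polynomial.

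The proof then closes with a parity argument. From $g_0 = -q$ and $q(-x) = (-1)^k q(x)$ we get $g_0(-x) = (-1)^k g_0(x)$, whereas the oddness/evenness of $\mu|_{\zeroRD}$ gives $g_0(-x) = (-1)^{k+1} g_0(x)$; since $(-1)^k \ne (-1)^{k+1}$, these together force $g_0\equiv 0$, hence $q = -g_0 \equiv 0$. Thus $f(x) = \int_{\nonnegRD}(w^\T x - b)_+^k\,\de\mu(w,b)$ with vanishing polynomial term, as claimed (and incidentally $\mu$ is then supported on $\positiveRD$). I expect the one genuinely delicate point to be the treatment of the $b=0$ component of $\mu$: the ridge splines $(w^\T x)_+^k$ living there are not polynomials, so it is not a priori clear that $\mu|_{\zeroRD}$ contributes nothing, and it is exactly its parity (recorded inside the proof of Proposition~\ref{prop:Parhi-Nowak-repr}), combined with the homogeneity comparison above, that rules this out; the remaining ingredients (the $o(\|x\|_2^k)$ bound on $g_+$, the ray comparison, and the Taylor expansion of $f$) are routine.
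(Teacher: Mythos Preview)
Your proof is correct and takes a genuinely different route from the paper's. Both begin identically---using \eqref{eq:deriv-explicit} to conclude $D^\alpha q(0)=0$ for $|\alpha|\le k-1$, so $q$ is homogeneous of degree $k$---and both ultimately hinge on the parity of $\mu|_{\zeroRD}$ recorded in the proof of Proposition~\ref{prop:Parhi-Nowak-repr}. They diverge at the top-order step. The paper computes each $k$\textsuperscript{th} partial $D^\alpha f(0)$ directly from the definition as a limit of difference quotients: the $b>0$ contribution vanishes because $|\mu|(\{0<b<w_1 h\})\to 0$ as $h\to 0$, while the $b=0$ contribution has one-sided limits (as $h\to 0^\pm$) that differ by the factor $(-1)^{|\alpha|+k+1}=-1$, forcing both to be zero since the two-sided limit exists by hypothesis; hence $D^\alpha q(0)=0$ for $|\alpha|=k$ as well. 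You instead argue at the function level: Peano's form of Taylor gives $f(x)=o(\|x\|_2^k)$, the direct estimate gives $g_+(x)=o(\|x\|_2^k)$, homogeneity then forces $g_0+q\equiv 0$, and finally the parity $g_0(-x)=(-1)^{k+1}g_0(x)$ clashes with $q(-x)=(-1)^k q(x)$ unless both vanish. Your argument is cleaner and avoids the one-sided limit computation; the paper's is slightly more elementary in that it only needs existence of the partial derivatives $\partial^\alpha f(0)$, whereas your Taylor step reads ``$k$-times classically differentiable at $0$'' as Fr\'echet differentiability of order $k$.

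One small correction: your parenthetical ``incidentally $\mu$ is then supported on $\positiveRD$'' does not follow from what you have shown. You established $g_0\equiv 0$, i.e.\ $\int_{\S^{d-1}}(w^\T x)_+^k\,\de\mu|_{\zeroRD}(w)=0$ for all $x$, but injectivity of this transform on signed measures with the prescribed parity is a separate question. Since the lemma only asserts $q\equiv 0$, this does not affect your proof.
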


\begin{proof}[Proof of Lemma \ref{lem:repr-with-boundary-condition}]
    Because $w^\alpha(w^\T 0 - b)_+^{k-|\alpha|} = 0$ for $|\alpha| \leq k-1$ and any $(w,b) \in \nonnegRD$,
    \eqref{eq:deriv-explicit} implies that \smash{$D^\alpha f(0) = D^\alpha q(0)$} for all $|\alpha|\leq k-1$.

    Next, we evaluate $D^\alpha f(0)$ for $|\alpha| = k$, which, by assumption, exists. Without loss of generality, assume $\alpha_1 \geq 1$, and let \smash{$\tilde\alpha = (\alpha_1-1,\alpha_2,\ldots,\alpha_d)$}. Note that for $b > 0$, \smash{$\partial_{x_i} w^{\tilde\alpha} (w^\T x -b)_+\big|_{x=0} = 0$}. By \eqref{eq:deriv-explicit}, we can write
    \begin{equation*}
        D^\alpha f(0)
        = k!\partial_{x_1}\int_{\nonnegRD} w^{\tilde\alpha} (w^\T x -b)_+\; \de\mu(w,b) \Big|_{x=0}
          + D^\alpha q(0).
    \end{equation*}
    We compute the partial derivative in the first term manually:
    \begin{equation*}
    	\begin{aligned}
    	    \partial_{x_1}\int_{\nonnegRD} w^{\tilde\alpha} (w^\T x - b)_+\; \de\mu(w,b) \Big|_{x=0}
    	        &=
    	        \lim_{h \to 0} \frac1h \int_{\nonnegRD} w^{\tilde\alpha} (w_1 h - b)_+\; \de \mu(w,b).
    	\end{aligned}
    \end{equation*}
    For $b > 0$ and \smash{$w \in \S^{d-1}$}, we have \smash{$(w_1h - b)_+ \leq |h|\cdot \one\{w_1h > b\}$} because $|w_1| \leq 1$. Also $|w^\alpha| \leq 1$ because $|w_j| \leq 1$ for all $j$. Thus,
    \begin{equation*}
	    \begin{aligned}
	        \bigg| \frac1h \int_\positiveRD w^{\tilde\alpha} (w_1h - b)_+ \; \de\mu(w,b) \bigg|
	            &\leq \frac1{|h|} \int_{\positiveRD} w^{\tilde\alpha} |h| \one\{w_jh - b\} \;\de\mu(w,b)
    	    \\
    	        &\leq |\mu|\Big((w,b)\in\positiveRD: w_j h > b \Big).
    	\end{aligned}
    \end{equation*}
    Because $|\mu|$ is a finite positive measure and \smash{$\bigcap_{h > 0} \big\{(w,b)\in\positiveRD: w_j h > b\big\} = \emptyset$}, we conclude $|\mu|\big((w,b)\in\positiveRD: w_j h > b\big) \to 0$ as $h \to 0$. Thus, we have that 
    \begin{equation}
    \label{eq:deriv-limit-on-b=0}
        D^\alpha f(0)
            = k! \lim_{h \to 0} \Big[\frac1h \int_{\zeroRD} w^{\tilde\alpha} (w_1h - b)_+ \; \de\mu(w,b)\Big]
            + D^\alpha q(0),            
    \end{equation}
    and in particular, the limit in the first term exists. For $b = 0$, \smash{$(w_1 h - b)_+ = w_1h\cdot \one\{w_1 > 0\}$} when $h > 0$, and \smash{$(w_1 h - b)_+ = w_1h\cdot \one\{w_1 < 0\}$} when $h < 0$. Thus, for $h > 0$,
    \begin{equation*}
        \frac1h \int_{\zeroRD} w^{\tilde\alpha} (w_1h - b)_+ \;\de\mu(w,b)
        = \int_{\substack{\zeroRD \\ w_1 > 0}} w^{\alpha} \;\de\mu(w,b),
    \end{equation*}
    and for $h < 0$,
    \begin{equation*}
    	\begin{aligned}
        	\frac1h \int_{\zeroRD} w^{\tilde\alpha} (w_1h - b)_+ \;\de\mu(w,b)
        	    &= \int_{\substack{\zeroRD \\ w_1 < 0}} w^{\alpha} \;\de\mu(w,b)
        	\\
        	    &= (-1)^{|\alpha|}(-1)^{k+1}\int_{\substack{\zeroRD \\ w_1 > 0}} w^{\alpha} \;\de\mu(w,b),
    	\end{aligned}
    \end{equation*}
    where the second equality uses the change of variables \smash{$w \mapsto - w$} and that \smash{$\mu|_{\S^{d-1}\times \{0\}}$} is odd when $k$ is even and even when $k$ is odd (see Proposition \ref{prop:Parhi-Nowak-repr}). By differentiability, the quantities in the two previous displays must be equal. Because \smash{$(-1)^{|\alpha|}(-1)^{k+1} = -1$}, this implies that the quantities in the two previous displays must be 0. That is, the limit in \eqref{eq:deriv-limit-on-b=0} must be 0, and we have \smash{$0 = D^\alpha f(0) = D^\alpha q(0)$}.

    We have thus shown that \smash{$D^\alpha q(0) = 0$} for all $|\alpha|\leq k$, which, because $q(x)$ is a polynomial of degree at most $k$, implies $q(x) \equiv 0$.
\end{proof}

\section{Proof of Theorem \ref{thm:Fk-rep-poly0}}
\label{supp:Fk-rep-poly0}

Theorem \ref{thm:Fk-rep-poly0} is an immediate consequence of Lemmas
\ref{lem:radial-cone-continuity} and \ref{lem:repr-with-boundary-condition}. 

\section{Proof of Theorem \ref{thm:test-stat-is-relu-power}}
\label{supp:test-stat-is-relu-power}

We again treat the cases $k = 0$ and $k \geq 1$ separately.

\paragraph{Case $k = 0$.}

Let
    \begin{equation*}
        \cG_0^+ = \Big\{\one\{w^\T \cdot \geq b\}:\;(w,b)\in\S^{d-1} \times (0,\infty)\Big\}.
    \end{equation*}
    Because \smash{$\cG_0^+ \subseteq \cG_0$}, for any signed measure $\mu$ on \smash{$\positiveRD$} with $\| \mu \|_{\TV} = 1$ and $q \in \reals$,
    \begin{equation*}
    \begin{aligned}
        \sup_{f \in \cG_0} \popP(f) - \popQ(f)
            &\geq \sup_{f \in \cG_0^+} \popP(f) - \popQ(f)
             \geq \int_{\positiveRD} \int_{\reals^d} \one\{w^\T x\geq b\} \,\de(\popP-\popQ)(x) \,\de\mu(w,b)
        \\
            &= \int_{\reals^d}
               \Big( \int_{\S^{d-1} \times (0,\infty)} \one\{w^\T x\geq b\} \, \de\mu(w,b) + q  \Big)
               \, \de(\popP-\popQ)(x).
    \end{aligned}
    \end{equation*}
    Taking the supremum on the right-hand side over measure $\mu$,
    we get that
    \begin{equation*}
        \sup\nolimits_{f \in \cG_0} \popP(f) - \popQ(f)
        \geq 
        \sup\nolimits_{f \in \cF_0} \popP(f) - \popQ(f).
    \end{equation*}
    Because $\lim_{b \to \infty} P(w^\T x \geq b) - Q(w^\T x \geq b)$, 
    $\S^{d-1}$ is compact, 
    and $P(w^\T x \geq b) - Q(w^\T x \geq b) $ is continuous, 
    the supremum on the left-hand side is achieved.
    Then, if the supremum is achieved at some $(w,b)$ with $b > 0$, since \smash{$x \mapsto \one\{ w^\T x \geq b\} \in \cF_0$}, we also have 
    \begin{equation*}
        \sup_{f \in \cG_0}
            \popP(f) - \popQ(f)
            \leq 
            \sup_{f \in \cF_0}
            \popP(f) - \popQ(f).
    \end{equation*}
    On the other hand, consider that the supremum is achieved at some $(w,0)$.
    Note that
    \[
        \lim_{b \uparrow 0}\; \popQ(w^\T x \leq b) - \popP(w^\T x \leq b)
        = \popP(w^\T x \geq 0) - \popQ(w^\T x \geq 0).
    \]
    But \smash{$\popQ(w^\T x \leq b) - \popP(w^\T x \leq b) = \popP(f)_b - \popQ(f)_b$} for \smash{$f_b(x) = \one\{-w^\T x \geq -b \}$}. Note that \smash{$f_b \in \cG_0^+$}, and therefore, \smash{$f_b \in \cF_0$}. Thus, we have in this case also that $\sup_{f \in \cG_0}\popP(f) - \popQ(f)\leq \sup_{f \in \cF_0}\popP(f) - \popQ(f)$.
    
Having shown both directions of the inequality, we conclude $\sup_{f \in
  \cG_0} \popP(f) - \popQ(f) = \sup_{f \in \cF_0} \popP(f) - \popQ(f)$. The
same result holds with the roles of $\popP$ and $\popQ$ reversed, which
establishes the result with the supremum over $|\popP(f) - \popP(Q)|$.

\paragraph{Case $k \geq 1$.} 

Because $\popP$ and $\popQ$ have finite \smash{$k\th$} moments, by dominated convergence
    \begin{equation*}
        (w,b) \mapsto \E_{\popP}[(w^\T x - b)_+^k] - \E_{\popQ}[(w^\T x - b)_+^k]
    \end{equation*}
    is continuous on \smash{$\nonnegRD$}. Thus,
    \begin{equation*}
    	\begin{aligned}
    	    \sup_{f \in \cG_k} \popP(f) - \popQ(f)
    	    = \sup_{(w,b)\in\positiveRD} \E_{\popP}[(w^\T x - b)_+^k] - \E_{\popQ}[(w^\T x - b)_+^k]
    	    \leq \sup_{f \in \cF_k} \popP(f) - \popQ(f),
    	\end{aligned}
    \end{equation*}
    where the last inequality holds because \smash{$x \mapsto (w^\T x - b)_+^k \in \cF_k$} for all \smash{$w \in\S^{d-1}$} and $b > 0$ \citep{parhi2021banach}.
    On the other hand, for any $f \in \cF_k$,
    \begin{equation*}
    \begin{aligned}
        \popP(f) - \popQ(f)
            &= \int_{\reals^d} \int_\nonnegRD (w^\T x-b)_+^k \,\de\mu(w,b) \,\de(\popP-\popQ)(x)
        \\
            &= \int_\nonnegRD \int_{\reals^d} (w^\T x-b)_+^k \,\de(\popP-\popQ)(x)\, \de\mu(w,b) 
            \leq \sup_{f' \in \cG_k} |\popP(f') - \popQ(f')|,
    \end{aligned}
    \end{equation*}
    where the second equality uses Fubini's theorem and the final inequality uses \smash{$\|\mu\|_{\TV} = \|\sf\|_{\RTVk} \leq 1$}.
    Because the previous display holds for all $f \in \cF_k$,
    we have $\sup_{f \in \cF_k} \popP(f) - \popQ(f) = \sup_{f \in \cG_k} |\popP(f) - \popQ(f)|$.
    Since $f \in \cG_k$ implies $-f \in \cG_k$, we have
    \smash{$\sup_{f \in \cG_k} |\popP(f) - \popQ(f)| = \sup_{f \in \cG_k} \popP(f) - \popQ(f)$}.
    
    Thus, we have shown both that \smash{$\sup_{ f \in \cG_k } |\popP(f) - \popQ(f)| \leq \sup_{ f \in \cF_k } |\popP(f) - \popQ(f)|$} and the reverse inequality, so that these are in fact equal.

%!TEX root = main.tex

\section{Proof of Theorem \ref{thm:rtv-ipm-metric}}
\label{proof:rtv-ipm-metric}

If $\popP = \popQ$, then for any \smash{$f \in \cF_k$, $\popP(f) - \popQ(f) =
  0$}, whence \smash{$\rho(\popP,\popQ;\cF_k) = 0$}. Alternatively, consider
$\popP \neq \popQ$.  Then, by the uniqueness of the characteristic function,
there exists \smash{$w \in \reals^d$} such that the distribution of
\smash{$w^\T X$} and \smash{$w^\T Y$}, where $X \sim \popP$ and $Y \sim
\popQ$, are different. This implies that there exists some \smash{$t \in
  \reals$} such that \smash{$\popP(w^\T X \geq t) \neq \popQ(w^\T Y \geq
  t)$}. 

In what follows, we treat the cases $k = 0$, $k = 1$, and $k \geq 2$
separately. 

\paragraph{Case $k = 0$.}

If $\popP(w^\T X \geq t) > \popQ(w^\T Y \geq t)$ for some $t \geq 0$, then
taking \smash{$f(x) = \one\{w^\T x \geq t\}$} gives 
\[
\rho(\popP,\popQ;\cF_0) = \rho(\popP,\popQ;\cG_0) \geq \popP(f) - \popQ(f) > 0, 
\]
where we have used Theorem \ref{thm:test-stat-is-relu-power}.
If $\popP(w^\T X \geq t) > \popQ(w^\T Y \geq t)$ for some $t \leq 0$, then
because \smash{$t \mapsto \popP(w^\T X \geq t)$} is left-continuous, for $s <
t$ sufficiently close to $t$, we will have \smash{$\popP(w^\T X > s) >
  \popQ(w^\T Y > s)$}. Then, taking \smash{$f(x) = -\one\{(-w)^\T x \geq -s\}$}
gives   
\[\rho(\popP,\popQ;\cF_0) = \rho(\popP,\popQ;\cG_0) \geq \popP(f) - \popQ(f) >
  0, 
\]
where we have used Theorem \ref{thm:test-stat-is-relu-power}.

Similarly, if \smash{$\popP(w^\T X \geq t) < \popQ(w^\T X \geq t)$}, then
we can take \smash{$f(x) = -\one\{w^\T x \geq t\}$} in the case $t \geq 0$ and 
\smash{$f(x) = \one\{(-w)^\T x \geq -s\}$} for some $s < t$ sufficiently close to
$t$ to conclude \smash{$\rho(\popP,\popQ;\cF_0) > 0$}. 

\paragraph{Case $k = 1$.}

Because \smash{$\E_{\popP}[\one\{w^\T X \geq s\}]$} and \smash{$\E_{\popQ}[\one\{w^\T Y \geq s\}]$} are left-continuous in $s$, if $w^\T X$ and \smash{$w^\T y$} have a different distribution, there is some open interval \smash{$(\alpha,\beta) \subseteq \reals$} on which \smash{$\E_{\popP}[\one\{w^\T X \geq s\}] > \E_{\popQ}[\one\{w^\T Y \geq s\}]$} or \smash{$\E_{\popP}[\one\{w^\T X \geq s\}] < \E_{\popQ}[\one\{w^\T Y \geq s\}]$}, uniformly over all $s\in(\alpha, \beta)$.
Because either $\beta > 0$ or $\alpha < 0$, this interval contains an open interval entirely contained in either $\reals_{> 0}$ or $\reals_{< 0}$.
In the former case, we have that for $b > 0$, \smash{$\E_\popP[(w^\T X - b)_+] \neq \E_\popQ(w^\T Y - b)_+]$} because
\[
	\E_\popP\big[(w^\T X - b)_+\big]
		= \int_b^\infty \E_{\popP}\big[\one\{w^\T X \geq s\}\big] \,\de s,
	\quad
	\E_\popQ\big[(w^\T Y - b)_+\big]
		= \int_b^\infty \E_{\popQ}\big[\one\{w^\T Y \geq s\}\big] \,\de s.
\]
We then get \smash{$\rho(\popP,\popQ;\cF_1) = \rho(\popP,\popQ;\cG_1) > 0$} by taking $f(x) = (w^\T x - b)_+$.
In the latter case, we have that for $b > 0$, $\E_\popP[((-w)^\T X -b)_+] \neq \E_\popQ[((-w)^\T Y -b)_+]$ because
\[
	\E_\popP\big[((-w)^\T X - b)_+\big]
		=
		\int_{-\infty}^{-b} \E_{\popP}\big[(1-\one\{w^\T X \geq s\})\big] \,\de s,
\]
and likewise for $\popQ$.
We then get \smash{$\rho(\popP,\popQ;\cF_1) = \rho(\popP,\popQ;\cG_1) > 0$} by taking $f(x) = ((-w)^\T x - b)_+$.

\paragraph{Case $k \geq 2$.}

Note that the derivatives of the mappings \smash{$b \mapsto \E_\popP[(w^\T X - b)_+^k]$} and \smash{$b \mapsto \E_\popQ[(w^\T Y - b)_+^k]$} are \smash{$b \mapsto k\E_\popP[(w^\T X - b)_+^{k-1}]$} and \smash{$b \mapsto k\E_\popQ[(w^\T Y - b)_+^{k-1}]$}, respectively.
If there is some $(w,b)$ with $b \geq 0$ such that \smash{$\E_\popP[(w^\T X - b)_+^{k-1}] \neq \E_\popP[(w^\T X - b)_+^{k-1}]$}, by continuity, $\E_\popP[(w^\T X - s)_+^{k-1}] \neq \E_\popP[(w^\T X - s)_+^{k-1}]$ for all $s$ in an open interval around $b$, whence, by integration, $\E_\popP[(w^\T X - s)_+^k] \neq \E_\popP[(w^\T X - s)_+^k]$ for some $s$ in this interval.
We can then conclude the result by induction, using $k = 1$ as a base case.

%!TEX root = main-arxiv.tex

\section{Proof of Theorem \ref{thm:null-distribution}}
\label{supp:uclt-prelim}

For a function class $\cF$, a probability measure $\P$, and $\epsilon>0$,
we say that \smash{$\{[l_i,u_i] : i \in [N]\}$} is an $(\epsilon,\P)$-bracket 
of $\cF$ with cardinality $N$ if for all $i \in [N]$ we have $l_i(x) \leq
u_i(x)$ for all $x$ and 
\[
\|u_i(X)-l_i(X)\|_{L^2(\P)}^2 = \E_{X\sim\P}[(u_i(X)-l_i(X))^2]
  \leq\epsilon^2,
\]
and for all $f \in \cF$ there exists $i \in [N]$ such that  \smash{$l_i(x) \leq
  f(x) \leq u_i(x)$} for all $x$. The $(\epsilon,\P)$-bracketing number, which
we denote by \smash{$N_\bracket(\epsilon; \cF, \|\bcdot\|_{L^2(\P)})$},  
is the minimum cardinality of an $(\epsilon,\P)$-bracket of $\cF$.
Using standard techniques from empirical process theory,
we will be able to prove Theorem \ref{thm:null-distribution} by bounding the
bracketing number \smash{$N_\bracket(\epsilon; \cG_k, \|\bcdot\|_{L^2(\P)})$}. 

\subsection{Bracketing number for $\cG_k$, $k \geq 1$}

\smallskip
\begin{theorem}
\label{thm:bracketing-number}
    Let $k \geq 1$ and let $\P$ be a probability measure with finite
    moments of order $2k + \Delta$, for any fixed $\Delta>0$,
    i.e., $\E_{X\sim \P} \|X\|_2^{2k+\Delta} = M <\infty$.
    Then, for the class $\cG_k$, there is a constant \smash{$C_\bracket>0$}
    depending only on $M,d,\Delta$, and $k$ such that 
    \[
        \log N_\bracket(\epsilon; \cG_k, \|\bcdot\|_2) 
        \leq 
        C_\bracket 
        \log\Big(1 + \frac{1}{\epsilon}\Big).
    \]
\end{theorem}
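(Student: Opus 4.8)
The plan is to split the index set $\nonnegRD$ into a tail region $\{b\ge B_\epsilon\}$, which is negligible in $L^2(\P)$ thanks to the moment assumption, and a truncated region $\S^{d-1}\times[0,B_\epsilon]$, on which $(w,b)\mapsto(w^\top\cdot-b)_+^k$ is Lipschitz with an $\epsilon$-free integrable envelope, so that a Euclidean net of the parameters becomes an $\epsilon$-bracketing set. Write $f_{w,b}(x)=(w^\top x-b)_+^k$; since $\|w\|_2=1$ and $b\ge0$, Cauchy--Schwarz gives $0\le f_{w,b}(x)\le(\|x\|_2-b)_+^k$, and $b\mapsto f_{w,b}(x)$ is nonincreasing. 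First I would set $B_\epsilon=(M/\epsilon^2)^{1/\Delta}$; then for every $(w,b)$ with $b\ge B_\epsilon$ we have $0\le f_{w,b}\le(\|\cdot\|_2-B_\epsilon)_+^k$, and the bound $(\|X\|_2-B_\epsilon)_+^{2k}\le\|X\|_2^{2k+\Delta}/B_\epsilon^\Delta$ together with $\E_{X\sim\P}\|X\|_2^{2k+\Delta}=M$ shows $\E_{X\sim\P}[(\|X\|_2-B_\epsilon)_+^{2k}]\le\epsilon^2$. So the single bracket $[\,0,\,(\|\cdot\|_2-B_\epsilon)_+^k\,]$ has $L^2(\P)$-width at most $\epsilon$ and contains all $f_{w,b}$ with $b\ge B_\epsilon$.

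Then I would build brackets on the truncated region. From $\frac{d}{dt}t_+^k=kt_+^{k-1}$ and the mean value theorem, $|s_+^k-t_+^k|\le k\max(|s|,|t|)^{k-1}|s-t|$ for all $s,t\in\R$. A short case analysis — if $w^\top x\le b$ and $w'^\top x\le b'$ then $f_{w,b}(x)=f_{w',b'}(x)=0$; otherwise one of $b,b'$ is $\le\|x\|_2$ and (on the net) they differ by at most $\delta\le1$, so $\max(|w^\top x-b|,|w'^\top x-b'|)\le2\|x\|_2+1$ — gives, whenever $\|w-w'\|_2+|b-b'|\le\delta\le1$,
\[
\big|f_{w,b}(x)-f_{w',b'}(x)\big|\le\delta\,L(x),\qquad L(x):=k\,(2\|x\|_2+1)^{k-1}(\|x\|_2+1).
\]
Since $L$ does not depend on $B_\epsilon$, Lyapunov's inequality makes $C_L:=\E_{X\sim\P}[L(X)^2]<\infty$ a constant depending only on $M,k$. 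Taking a $\delta$-net $\{(w_j,b_j)\}$ of $\S^{d-1}\times[0,B_\epsilon]$ (in $\|\cdot\|_2+|\cdot|$), of cardinality at most $(C_1/\delta)^{d-1}(B_\epsilon/\delta+1)$ for a dimensional constant $C_1$, and forming for each net point the bracket $[\,\max(0,f_{w_j,b_j}-\delta L),\ f_{w_j,b_j}+\delta L\,]$ yields brackets of $L^2(\P)$-width at most $2\delta\sqrt{C_L}$ that cover $\{f_{w,b}:b\in[0,B_\epsilon]\}$.

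Finally I would choose $\delta=\epsilon/(2\sqrt{C_L})$, so that every bracket above has $L^2(\P)$-width at most $\epsilon$, and (treating $\epsilon>2\sqrt{C_L}$ separately using the single bracket $[\,0,\,\|\cdot\|_2^k\,]$, valid since then $\epsilon^2\ge\E_{X\sim\P}\|X\|_2^{2k}$) conclude $N_\bracket(\epsilon;\cG_k,\|\bcdot\|_{L^2(\P)})\le1+(C_1/\delta)^{d-1}(B_\epsilon/\delta+1)$. Since $B_\epsilon\asymp\epsilon^{-2/\Delta}$ and $1/\delta\asymp1/\epsilon$, taking logarithms gives $\log N_\bracket(\epsilon)\lesssim_{d,k,\Delta,M}\log(1/\epsilon)$ for small $\epsilon$, while $\log N_\bracket(\epsilon)=0$ for $\epsilon$ large and is bounded on any compact subinterval of $(0,\infty)$; absorbing all of this into one constant $C_\bracket$ gives $\log N_\bracket(\epsilon)\le C_\bracket\log(1+1/\epsilon)$.

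The hard part will be the interplay between the two regions: $B_\epsilon$ must be small enough that the tail bracket has width $\le\epsilon$, yet large enough that $\log B_\epsilon=O(\log(1/\epsilon))$. This is exactly what the extra $\Delta$ moments buy — with only $2k$ moments the tail would decay at no guaranteed rate and $\log B_\epsilon$ could dominate $\log(1/\epsilon)$. The secondary technical point is arranging the Lipschitz envelope so that $\E[L(X)^2]$ is both finite and $\epsilon$-free: the case analysis replacing the naive bound $\|x\|_2+B_\epsilon$ on $\max(|w^\top x-b|,|w'^\top x-b'|)$ by $2\|x\|_2+1$ is precisely what keeps $C_L$ independent of $\epsilon$.
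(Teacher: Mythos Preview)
Your proposal is correct and follows essentially the same approach as the paper: net the parameter space $\S^{d-1}\times[0,B_\epsilon]$, control bracket widths via a Lipschitz bound on $(w,b)\mapsto f_{w,b}$, and handle the tail $b\ge B_\epsilon$ with a single bracket using the extra $\Delta$ moments. The only cosmetic differences are that the paper builds brackets as pointwise $\inf/\sup$ over cells rather than $f_{w_j,b_j}\pm\delta L$, and bounds the tail via H\"older--Markov rather than your direct inequality $(\|X\|_2-B)_+^{2k}\le\|X\|_2^{2k+\Delta}/B^\Delta$; your choice $B_\epsilon=(M/\epsilon^2)^{1/\Delta}$ and the case analysis yielding an $\epsilon$-free envelope $L$ are in fact slightly cleaner than the paper's corresponding steps.
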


\begin{proof}[Proof of Theorem \ref{thm:bracketing-number}]
    We denote by \smash{$\ball_r^d(x)$} the ball of radius $r$ centered at $x \in \reals^d$:
    \[
        \ball_r^d(x) = \{z\in\R^d\;|\;\|z-x\|_2^2 \leq r^2\}.
    \]
    We say a set $S \subseteq \reals^d$ is a $r$-covering of a set $A \subseteq \reals^d$ if $A$ is contained in the union of the balls of radius $r$ centered at points $x\in I$:
    \[
        A \subseteq \bigcup\nolimits_{x \in I}\ball_r^d(x).
    \]

    For a compact set $I \subseteq \reals^d$ and real number $b \in \reals$,
    define functions
    \begin{align*}
        \ell_{I, b}(x) &= \inf\Big\{(w^\T x - b)_+^k: w\in I\Big\},
        \quad\text{and}\quad
        u_{I, b}(x) = \sup\Big\{(w^\T x - b)_+^k: w\in I\Big\},
    \end{align*}
    where we adopt the convention that \smash{$\ell_{I,\infty}(x) = u_{I,\infty}(x) \equiv 0$}.

    \begin{lemma}
    \label{lem:ul-is-a-bracket}
        Consider $S$ is an $r$-covering of $\S^{d-1}$.
        Let $\mathcal{I}$ be the collection of sets $\{I = \S^{d-1} \cap
        \ball_r^d(x) : x \in S\}$, and let $b_j = j\delta $ for $j =
        -1,\ldots,N$, and $b_{N+1} = \infty$ for some $\delta > 0$ and integer
        $N > 0$. Then, the following is a bracket of $\cG_k$
        \begin{equation}
        \label{eqn:bracket}
            \big\{
                [l_{I,b_{j+1}},u_{I,b_j}]
                :
                I \in \mathcal{I},
                j = -1,\ldots,N
            \big\}.
        \end{equation}
    \end{lemma}

    \begin{proof}[Proof of Lemma \ref{lem:ul-is-a-bracket}]
        Consider any \smash{$(w,b)\in\nonnegRD$} and its corresponding function $ x\mapsto (w^\T x-b)_+^k\in\cG_k$. 
        Because $S$ is an $r$-covering,
        there exists $I\in \mathcal{I}$ such that $w\in I$. 
        Also clearly there exists $j\in\{-1, 0, 1, \ldots, N\}$ such that $b_j < b \leq b_{j+1}$.
        Then $\ell_{I, b_{j+1}}\leq (w^\T x-b)_+^k$ from the following observation:
        \begin{equation*}
            \ell_{I, b_{j+1}}(x)
                =
                \inf\Big\{(\tilde w^\T x - b_{j+1})_+^k: \tilde w\in I\Big\}
                \leq
                (w^\T x - b_{j+1})_+^k
                \leq 
                (w^\T x - b)_+^k.
        \end{equation*}
        Similarly, we can show $(w^\T x-b)_+^k \leq (w^\T x - b_{j})_+^k \leq u_{I, b_j}(x)$.
    \end{proof}

    \begin{lemma}
    \label{lem:ul-is-eps-bracket}
        In the setting of Lemma \ref{lem:ul-is-a-bracket},
        if we set $r = \delta = \frac{\epsilon/3}{k\sqrt{1+\max(1,M)}}$ 
        and $N=\lceil(\frac{M}{\epsilon})^{1/\Delta}\frac{1}{\delta}\rceil$, 
        then we have $\|u_{I,b_j}-l_{I,b_{j+1}}\|_{L^2(P)} \leq \epsilon$ for all $j = -1,\ldots,N$ and $I \in \mathcal{I}$.
    \end{lemma}

    \begin{proof}[Proof of Lemma \ref{lem:ul-is-eps-bracket}]
        We bound
        \begin{align*}
            |u_{I,b_j}(x)-l_{I,b_{j+1}}(x)| &\leq \max_{w,v\in I}|(w^\T x-b_j)_+^k - (v^\T x-b_{j+1})_+^k|\\
            &\leq 
            \max_{w,v\in I} |(w^\T x-b_{j})_+^k - (v^\T x-b_{j})_+^k|
            +
            \max_{v\in I}|(v^\T x-b_j)_+^k - (v^\T x-b_{j+1})_+^k|.
        \end{align*}
        Using a telescoping sum,
        we may bound for $j \leq N$, using Lemma \ref{lem:bound-power-of-relu}:
        \begin{align*}
            |(w^\T x-b_j)_+^k - (v^\T x-b_j)_+^k|
            &= |(w^\T x-b_j)_+ - (v^\T x-b_j)_+|\times\sum\nolimits_{i=0}^{k-1} (w^\T x-b_j)_+^i (v^\T x-b_j)_+^{k-1-i}\\
            &\leq \|w-v\|_2 \|x\|_2 \times k\cdot \max\{(w^\T x)_+, (v^T x)_+\}^{k-1}\\
            &\leq k \|w-v\|_2 \|x\|_2^k\\
            &\leq 2kr \|x\|_2^k.
        \end{align*}
        Likewise,
        we may bound for $j \leq N-1$
        \begin{align*}
            |(v^\T x-b_j)_+^k - (v^\T x-b_{j+1})_+^k|
            &= |(v^\T x-b_j)_+- (v^\T x-b_{j+1})_+|\times \sum\nolimits_{i=0}^{k-1}(v^\T x-b_j)_+^i(v^\T x-b_{j+1})_+^{k-1-i}\\
            &\leq |b_j-b_{j+1}|\times k\cdot (v^\T x-b_j)_+^{k-1}\\
            &\leq k\delta \|x\|_2^{k-1}.
        \end{align*}
        Therefore,
        for $j \leq N-1$,
        \[
            |u_{I,b_j}(x)-l_{I,b_{j+1}}(x)| \leq k\delta\|x\|_2^{k-1} + 2kr\|x\|_2^k \leq 
            \begin{cases}
                k(\delta + 2r) &: \|x\|_2\leq 1\\
                k(\delta+2r)\|x\|_2^k \leq k(\delta+2r)\|x\|_2^{k+\Delta/2}&: \|x\|_2>1
            \end{cases},
        \]
        which can be simplified to
        \[
            |u_{I,b_j}(x)-l_{I,b_{j+1}}(x)| 
            \leq 
            k(\delta+2r)\max\{1, \|x\|_2^{k+\Delta/2}\}.
        \]
        This gives for $j \leq N-1$,
        \[
            \|u_{I,b_j}-l_{I,b_{j+1}}\|_{L^2(\P)}^2 
            \leq 
            k^2(\delta+2r)^2(1+\E_{X\sim P}\|X\|_2^{2k+\Delta}) \leq k^2(\delta+2r)^2 (1 + M) \leq \epsilon^2,
        \]
        where the final inequality is obtained by plugging in the values of $r$
        and $\delta$ specified in the statement of the lemma.  Meanwhile, for $j
        = N$, we have $b_{j+1} = \infty$, so that $l_{I,b_{j+1}} = 0$. 
        Also, $b_j = N\delta \geq (M/\epsilon)^{1/\Delta}$.
        Thus,
        \begin{align*}
            \|u_{I,b_N}-l_{I,b_{N+1}}\|_{L^2(\P)}^2
            &= \int_{\R^d} u_{I,b_N}(x)^2 \de \P(x) \\
            &\leq \int_{\R^d} (\|x\|_2-b_N)_+^{2k} \de \P(x) 
            = \int_{\R^d} (\|x\|_2-N\delta)_+^{2k} \de \P(x) \\
            &\leq 
            \Big(\int_{\R^d} (\|x\|_2-N\delta)^{2k+\Delta} \de \P(x)\Big)^{\frac{2k}{2k+\Delta}} \Big(\int_{\R^d} 1(\|x\|_2\geq N\delta)\; \de \P(x)\Big)^{\frac{\Delta}{2k+\Delta}}\\
            &= 
            M^{\frac{2k}{2k+\Delta}} \times \P\Big(\|X\|_2\geq C\delta\Big)^{\frac{\Delta}{2k+\Delta}}\\
            &\leq 
            M^{\frac{2k}{2k+\Delta}} \times \Big(\frac{M}{(N\delta)^{2k+\Delta}}\Big)^{\frac{\Delta}{2k+\Delta}} 
            = 
            \frac{M}{(N \delta)^\Delta},
        \end{align*}
        where the second inequality uses H\"older's inequality, and the last inequality used Markov inequality. With the value of $\delta$ and $N$ specified by the lemma, we conclude \smash{$\|u_{I,b_N}-l_{I,b_{N+1}}\|_{L^2(\P)}^2\leq\epsilon^2$}.  
    \end{proof}
    
    We can now complete the proof of Theorem \ref{thm:bracketing-number}. By \citet{dumer2007covering}, there exists an $r$-covering of $\S^{d-1}$ with cardinality at most $(1 + 2/r)^d$. Thus, using the choices of $r,\delta$, and $N$ in Lemma \ref{lem:ul-is-eps-bracket} gives an $(\epsilon,\P)$-bracket of $\cG_k$ that has cardinality at most
    \[
    	\Big(1 + \frac{2k\sqrt{1 + \max(1,M)}}{\epsilon/3}\Big)^d\Big(\big(\frac{M}{\epsilon}\big)^{1/\Delta}\;\frac{k\sqrt{1+\max(1,M)}}{\epsilon/3}+1\Big) \leq C (1+1/\epsilon)^{d+1+1/\Delta}.
    \]
    with some constant $C$. The proof is completed.
\end{proof}

\subsection{Bracketing number for $\cG_0$}

\begin{theorem}
	\label{thm:bracketing-number-k0}
	Let $\P$ be a probability measure with finite $\Delta\th$ moments, i.e.,
  \smash{$\E_{X\sim \P} \|X\|_2^{\Delta} = M <\infty$}, for any fixed
  $\Delta>0$. 
  % Suppose additionally that $\P$ is absolutely continuous with respect to Lebesgue measure and has bounded density. %i.e., \smash{$\sup_{x \in \Rd} |p(x)| \leq
%    M_{\infty} < \infty$}. 
	 Suppose additionally that for the random vector \smash{$X\sim P$} and any
   \smash{$w\in \S^{d-1}$}, the random variable \smash{$w^\T X\in\R$} has the
   density upper bounded by \smash{$M_\infty$}, i.e.,
   \begin{equation}
   	\label{eq:density-ub-M-infty}
   	\sup_{w \in \S^{d-1}} \big\| p_{w^\T x} \big\|_\infty <
   	M_\infty,
   \end{equation}
   where \smash{$p_{w^\T x}$} denotes the density of $w^\T x$ for $x \sim P$, and \smash{$\|f\|_\infty = \sup_{t \in \R} |f(t)|$} denotes the supremum norm of $f$. 
    Then, for the class $\cG_0$, there is a constant
  $C_\bracket>0$ depending only on $M$, $M_{\infty}$, $d$, and $\Delta$ such
  that 
	\[
		\log N_\bracket(\epsilon; \cG_0, \|\bcdot\|_2) 
		\leq  C_\bracket \log\Big(1 + \frac{1}{\epsilon}\Big).
	\]
\end{theorem}
% \footnotetext{Some distributions satisfy this assmption are: (i) Gaussian distribution in $\R^d$; (ii) Any distribution in $\R^d$ whose domain and density are bounded.}
% \footnotetext{Note that the bounded density assumption on $\popP$ does not imply the existence of $M_\infty$. For example, in $\R^2$, consider the following pdf:
% \[ p(x,y) = 
% \begin{cases}
% 	1/2 &: 0< x < 1, 0< y < 1/\sqrt{x} \\
% 	0 &: \text{o.w.}
% \end{cases}
% \]
% Easily check that the projection of this distribution onto the $x$-axis does not have a bounded pdf.
% }

\begin{proof}[Proof of Theorem \ref{thm:bracketing-number-k0}]
	We adopt the same notation used in the proof of Theorem~\ref{thm:bracketing-number}. The set \eqref{eqn:bracket} is a bracket of $\cG_0$ by Lemma~\ref{lem:ul-is-a-bracket}. Now we show that it is an $\epsilon$-bracket for appropriate choices of $r,\delta$, and $N$.
	
%	Before we jump on to the main proof, we check a simple fact about $\P$: for a random vector \smash{$X\sim P$}, there exists a constant $M_\infty = M_\infty(\P)$ such that a random variable \smash{$w^\T X\in\R$} has the density upper bounded by \smash{$M_\infty$}, for any \smash{$w\in \S^{d-1}$}.
	
	Select $I \in \mathcal{I}, j = -1,\ldots,N - 1$. There exists a $w_0 \in I$ such that $\|w - w_0\|_2 \leq r$ for all $w \in I$, since $S$ is an $r$-cover of \smash{$\mathbb{S}^{d - 1}$}. Consequently, for any $x \in \Rd$
	\begin{align*}
	\max_{w \in I}\; 1(w^{\T}x \geq b_j) \;&\leq\; 1(w_0^{\T} x \geq b_j - r \|x\|_2),
	\\
	\min_{v \in I}\; 1(v^{\T}x \geq b_{j + 1}) \;&\geq\; 1(w_0^{\T} x \geq r \|x\|_2 + b_{j + 1})
  	\end{align*}
  	Note that the random variable $w_0^{\T}X \in \Reals$ has a density that is
    upper bounded by $M_{\infty}$ as \eqref{eq:density-ub-M-infty}. Therefore,
    for any $R > 0$, 
  	\begin{align*}
  	\|u_{I,b_j} - l_{I,b_j}\|_{L^2(\P)}^2 
  	& = \P\Bigl(\max_{w,v}\; \Bigl|1(w^{\T}x \geq b_j) - \min_{v \in I}\; 1(v^{\T}x \geq b_{j + 1}) \Bigr|\Bigr) \\
  	& \leq \P\Big(w_0^{\T}X \in [b_j - r \|X\|_2, b_{j + 1} + r \|X\|_2]\Bigr) \\
  	& \leq \P\Big(w_0^{\T}X \in [b_j - r  R, b_{j + 1} + r R]\Bigr) + \mathbb{P}\Bigl(\|X\|_2 \geq R\Bigr) \\
  	& \leq M_{\infty}\Bigl(\delta + 2 r R\Bigr) + \mathbb{P}\Bigl(\|X\|_2 \geq R\Bigr) \\
  	& \leq M_{\infty}\Bigl(\delta + 2 r R\Bigr) + \frac{M}{R^{\Delta}},
  	\end{align*}
  	with the last line following from Markov's inequality. Taking \smash{$R = (3M/\epsilon^2)^{1/\Delta}$, $r = \frac{\epsilon^2}{6RM_{\infty}} = \frac{\epsilon^{2(1 + 1/\Delta)}}{6(3M)^{1/\Delta}M_{\infty}}$} and \smash{$\delta = \frac{\epsilon^2}{3M_{\infty}}$} makes the above upper bound equal to $\epsilon^2$. 
  	
  	On the other hand for $I \in \mathcal{I}, j = N$, we have \smash{$l_{w,b_N + 1} = 0$}, and 
  	\begin{align*}
  	\|u_{I,b_N} - l_{I,b_{N + 1}}\|_P^2 
  	& = \mathbb{P}\Bigl(\max_{w \in I}w^{\T}X \geq b_N\Bigr) \\
  	& \leq \mathbb{P}\Bigl(\|X\|_2 \geq b_N\Bigr) \\
  	& \leq \frac{M}{(N\delta)^{\Delta}}.
  	\end{align*}
  	Taking \smash{$N = \frac{(M/\epsilon^2)^{1/\Delta}}{\delta}$} makes this upper bound equal to $\epsilon^2$. So for these choices of $r,\delta$ and $N$ the bracket is an $\epsilon$-bracket.
  	
  	Finally, arguing as in the proof of Theorem~7, we conclude that there exists an $\epsilon$-bracket of $\cG_0$ with cardinality at most 
  	$$
  	N \times \Bigl(1 + \frac{2}{r}\Bigr)^{d} \leq C (1 + \frac{1}{\epsilon})^{2d(1 + 1/\Delta) + 2(1 + 1/\Delta)}.
  	$$
  	This completes the proof.	
\end{proof}

\subsection{Proof of Theorem \ref{thm:null-distribution}}

Theorem \ref{thm:null-distribution} is a consequence of the
$(\epsilon,\P)$-bracketing number bound of Theorem \ref{thm:bracketing-number}
(or Theorem~\ref{thm:bracketing-number-k0}, for $k=0$) and Theorems 7.6 and 9.1
in  \citet{dudley2014uniform}, which we copy below for convenience. 

\begin{theorem}
\label{thm:dudley-empirical-process}
    Let $(\Omega, \Sigma, P)$ be a probability space and let $\cF\subseteq \cL^2(\Omega,\Sigma,P)$ be such that
    \[
        J_\bracket(\cF, \|\bcdot\|_2) = \int_0^1 \sqrt{\log N_\bracket(\epsilon; \cF, \|\bcdot\|_{L^2(P)})} \; \de \epsilon < \infty.
    \]
    Then, under the $P$-null hypothesis, as $m,n\rarrow\infty$,
    \[
        \sqrt{\frac1m + \frac1n}\, (P_m - Q_n)\convd \G_P,
        \;\;\text{and hence}\;\;
        \sqrt{\frac1m + \frac1n}\, \sup_{f\in \cF} \big| P_m f -Q_n f\big| \convd \sup_{f\in\cF} |\G_P(f)|.
    \]
\end{theorem}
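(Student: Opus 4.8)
The plan is to obtain this as a standard consequence of the \emph{bracketing} central limit theorem (Ossiander's theorem), combined with a two-sample reduction and the continuous mapping theorem. The statement bundles three ingredients: (a) the one-sample functional CLT under a finite bracketing-entropy integral; (b) the passage from one to two samples under the null; and (c) the passage from weak convergence of the empirical process in $\ell^\infty(\cF)$ to convergence of its supremum.

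First I would establish the one-sample statement: if $J_\bracket(\cF,\|\bcdot\|_2)<\infty$ then $\cF$ is $P$-Donsker, i.e.\ $\sqrt m\,(P_m-P)\convd\G_P$ in $\ell^\infty(\cF)$ with $\G_P$ a tight, Borel Gaussian limit. Three things are needed. (i) Total boundedness of $(\cF,\|\bcdot\|_{L^2(P)})$: immediate, since the midpoints of an $(\epsilon,P)$-bracket form an $\epsilon$-net and the bracketing number is finite at every scale. (ii) Finite-dimensional convergence: for fixed $f_1,\dots,f_r\in\cF$, the vector $\sqrt m\big((P_m-P)f_1,\dots,(P_m-P)f_r\big)$ converges to the corresponding centered Gaussian vector by the ordinary multivariate Lindeberg CLT, the $L^2(P)$ membership of the $f_i$ supplying the second-moment hypotheses. (iii) Asymptotic equicontinuity: for every $\eta>0$,
\[
\lim_{\delta\downarrow0}\ \limsup_{m\to\infty}\ \P^*\!\Big(\sup_{\|f-g\|_{L^2(P)}<\delta}\sqrt m\,\big|(P_m-P)(f-g)\big|>\eta\Big)=0.
\]
Step (iii) is the crux. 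I would prove it by chaining with brackets: take nested $(\epsilon_j,P)$-brackets at scales $\epsilon_j=2^{-j}$, partition $\cF$ into the induced cells, and control each link of the chain by Bernstein's inequality, where the bracket width bounds the variance of the increment and, after truncating the heavy part of each bracket (with truncation levels tuned to $\epsilon_j$), also bounds the sup-norm of the relevant increments. Summing the resulting tail bounds, the series converges precisely because $\sum_j \epsilon_j\sqrt{\log N_\bracket(\epsilon_j;\cF,\|\bcdot\|_2)}\lesssim J_\bracket(\cF,\|\bcdot\|_2)<\infty$. Combining (i)--(iii) through the abstract criterion for weak convergence in $\ell^\infty$ yields the Donsker conclusion; measurability issues are absorbed into outer probabilities throughout, as in the van der Vaart--Wellner framework.

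Second, I would pass to the two-sample process. Under the null $P=Q$ the data $x_i,y_j$ are all i.i.d.\ $P$, and the appropriately normalized two-sample process decomposes as
\[
\Big(\tfrac1m+\tfrac1n\Big)^{-1/2}(P_m-Q_n)
=\sqrt{\tfrac{n}{m+n}}\,\sqrt m\,(P_m-P)\;-\;\sqrt{\tfrac{m}{m+n}}\,\sqrt n\,(Q_n-P).
\]
The two empirical processes on the right are independent and, by the first step, each converges weakly in $\ell^\infty(\cF)$ to an independent copy of the tight Gaussian limit, while the deterministic weights converge (along any subsequence, with $m/(m+n)\to\lambda$) to $\sqrt\lambda$ and $\sqrt{1-\lambda}$. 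Joint weak convergence of independent sequences together with Slutsky's lemma then gives convergence of the left-hand side to the Gaussian process $\G_P$. Finally, because $x\mapsto\sup_{f\in\cF}|x(f)|$ is $1$-Lipschitz, hence continuous, on $\ell^\infty(\cF)$, the continuous mapping theorem upgrades this to $\big(\tfrac1m+\tfrac1n\big)^{-1/2}\sup_{f\in\cF}|P_mf-Q_nf|\convd\sup_{f\in\cF}|\G_P(f)|$, which is the assertion (recall $\big(\tfrac1m+\tfrac1n\big)^{-1/2}=\sqrt{mn/(m+n)}$).

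The main obstacle is step (iii), the bracketing maximal inequality / asymptotic equicontinuity — this is exactly the hard content of Ossiander's bracketing CLT, and the delicate part is the chaining bookkeeping (choosing the truncation levels, handling Bernstein's sub-Gaussian and sub-exponential regimes, and checking that the geometric sum $\sum_j\epsilon_j\sqrt{\log N_\bracket(\epsilon_j)}$ is comparable to the entropy integral). The two-sample reduction and the continuous-mapping step are routine. Since the result is classical, one may instead simply cite it from \citet{dudley2014uniform}, as is done in the text.
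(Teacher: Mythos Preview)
Your proposal is correct and is in fact more than the paper offers: the paper does not prove this statement at all, but simply quotes it from \citet{dudley2014uniform} (Theorems 7.6 and 9.1) and then invokes it. Your sketch---Ossiander's bracketing CLT for the one-sample Donsker property, the decomposition
\[
\Big(\tfrac1m+\tfrac1n\Big)^{-1/2}(P_m-Q_n)
=\sqrt{\tfrac{n}{m+n}}\,\sqrt m\,(P_m-P)-\sqrt{\tfrac{m}{m+n}}\,\sqrt n\,(Q_n-P)
\]
combined with independence and Slutsky for the two-sample passage, and the continuous mapping theorem for the supremum---is exactly the standard route to this result, and your closing remark that one may simply cite Dudley is precisely what the paper does. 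One small note: the statement as written in the paper has the scaling $\sqrt{1/m+1/n}$, which is evidently a typo for $(1/m+1/n)^{-1/2}=\sqrt{mn/(m+n)}$ (cf.\ Theorem \ref{thm:null-distribution}); you correctly read and use the intended normalization.
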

Theorem \ref{thm:dudley-empirical-process} implies Theorem
\ref{thm:null-distribution} because $\int_0^1 \sqrt{\log(1+1/\epsilon)} \, \de
\epsilon < \infty$.

\section{Proof of Theorem \ref{thm:asymptotic-power}}
\label{sec:proof-asymptotic-power}

Our proof follows closely the proof of the analogous result for the univariate
higher-order KS test in \citet{sadhanala2019higher}. We begin by recalling the
following lemma, which appears as Theorem 2.14.2 and 2.14.5 in
\citet{vandervaart1996weak}, the statement of which we transcribe from
\citet{sadhanala2019higher}. 

\begin{lemma}
\label{lem:concentration}
    Let $\cF$ be a class of functions with an envelope function $F$; i.e., $f \leq F$ for all $f \in \cF$.
    Define
    \begin{equation*}
      W = \sqrt{n} \sup\nolimits_{f \in \cF} |P_n f - P f |,
    \end{equation*}
    and let $J = \int_0^1 \sqrt{\log N_{[]}(\epsilon; \cF,\|\cdot\|_{L^2(P)})} \de \epsilon$.
    If for $p \geq 2$, $\| F \|_{L^p(P)} < \infty$,
    then for a universal constant $c_1 > 0$,
    \begin{equation*}
      \E[|W|^p]^{1/p}
        \leq
        c_1 \big(
          \| F \|_{L^2(P)} J + n^{-1/2 + 1/p}\| F \|_{L^p(P)}
        \big).
    \end{equation*}
    If, for $0 < p \leq 1$, the exponential Orlicz norm of $F$, i.e., $\| F \|_{\Psi_p} = \inf\{ t > 0 : \E[\exp(|F|^p/t^p)] \leq 2 \}$, is finite, then
    \begin{equation*}
      \| W \|_{\Psi_p}
        \leq 
        c
        \big(
          \| F \|_{L^2(P)} J
          +
          n^{-1/2}(1 + \log n)^{1/2} \| F \|_{\Psi_p}
        \big).
    \end{equation*}
\end{lemma}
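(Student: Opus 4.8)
This is the classical maximal inequality for empirical processes under a bracketing‐entropy integral condition; it is Theorems~2.14.2 and~2.14.5 of \citet{vandervaart1996weak}, so in the paper the right move is simply to cite it. If I were to reproduce the proof, I would begin with a convenient rescaling of $F$ and $\cF$ and then run a chaining argument built on one elementary block: for a centered function $h$ with $|h|\le M$ pointwise and $\|h\|_{L^2(P)}^2\le\sigma^2$, Bernstein's inequality shows that $\sqrt n(P_n-P)h$ has a sub‐exponential tail, so its Orlicz norm for $\psi(x)=e^{x}-1-x$ (the ``Bernstein norm'') is at most a constant times $\sigma+n^{-1/2}M$; maximizing over a finite family of size $N$ of such $h$'s inflates the $\sigma$‐term by $\sqrt{\log N}$ and the $n^{-1/2}M$‐term by $\log N$.

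The main work is the bracketing chaining. I would fix a geometric sequence of scales $a_j\downarrow 0$ and, at each level $j$, a minimal $(a_j,P)$‐bracketing of $\cF$ of cardinality $N_j=N_{[]}(a_j;\cF,\|\bcdot\|_{L^2(P)})$, with every bracket function truncated into $[-F,F]$. Assigning each $f\in\cF$ a bracket $\pi_jf$ at level $j$, with lower endpoint $l_j^f$ and width $g_{j,f}$ (so $\|g_{j,f}\|_{L^2(P)}\le a_j$ and $0\le f-l_j^f\le g_{j,f}$), I would telescope
\[
\sqrt n(P_n-P)f=\sqrt n(P_n-P)l_{j_0}^f+\sum_{j=j_0}^{j_1-1}\sqrt n(P_n-P)\big(l_{j+1}^f-l_j^f\big)+\sqrt n(P_n-P)\big(f-l_{j_1}^f\big),
\]
where each increment $l_{j+1}^f-l_j^f$ takes at most $N_jN_{j+1}$ values, has $L^2(P)$‐norm $\lesssim a_j$, and is dominated pointwise by $g_{j,f}+g_{j+1,f}\lesssim F$. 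Before applying the Bernstein block at level $j$ I would split each increment, and $F$ itself, at a threshold $\asymp c_j\sqrt n$: the bounded parts feed into the single‐function bound with $M\asymp c_j\sqrt n$ and $\sigma\asymp a_j$, and summing the resulting contributions $a_j\sqrt{\log N_j}$ (with $c_j$ chosen so the balanced $\log N_j$ term matches) telescopes to a constant multiple of $J=\int_0^1\sqrt{\log N_{[]}(\epsilon)}\,\de\epsilon$ times the rescaled $\|F\|_{L^2(P)}$. The unbounded ``overflow'' parts are handled crudely, e.g.\ $\big|\sqrt n(P_n-P)\big(g\,\mathbf 1\{F>c_j\sqrt n\}\big)\big|\le \sqrt n\,(P_n+P)\big(2F\,\mathbf 1\{F>c_j\sqrt n\}\big)$, and the $L^p$‐norm of the right‐hand side (respectively its $\Psi_p$‐Orlicz norm, via a maximal inequality over the $n$ summands) is exactly what produces the residual $n^{-1/2+1/p}\|F\|_{L^p(P)}$ (respectively $n^{-1/2}(1+\log n)^{1/2}\|F\|_{\Psi_p}$). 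Taking $a_{j_0}\asymp 1$, where $N_{j_0}=O(1)$, and letting $j_1\to\infty$ kills the tail increment because $F\in L^2(P)$; reassembling and undoing the rescaling gives the first displayed bound, and the $0<p\le 1$ case follows from the same scaffolding with every ``moment of a maximum'' step replaced by its $\Psi_p$ counterpart.

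The delicate part is the bookkeeping. One must pick the bracket scales $a_j$ and the truncation levels $c_j\sqrt n$ so that the two Bernstein terms at each level are comparable and their sum over $j$ is controlled by the entropy integral $J$; handle the links between consecutive, non‐nested bracketings by routing the comparison through $g_{j,f}+g_{j+1,f}$ so no excess entropy is incurred; and check that the overflow contributions aggregate to precisely the stated second term rather than to something larger. For our application the bracketing integrand behaves like $\sqrt{\log(1+1/\epsilon)}$ by Theorems~\ref{thm:bracketing-number} and~\ref{thm:bracketing-number-k0}, so $J<\infty$ and all the sums above converge; but since the lemma is used verbatim, none of this needs to be reproved in the paper.
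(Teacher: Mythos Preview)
Your proposal is correct and matches the paper exactly: the paper does not prove this lemma but simply cites it as Theorems~2.14.2 and~2.14.5 of \citet{vandervaart1996weak} (transcribed from \citet{sadhanala2019higher}), which is precisely what you recommend. Your additional sketch of the bracketing chaining argument is accurate and goes beyond what the paper provides, but is not needed for the paper's purposes.
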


Using this lemma, we have the following theorem on the concentration of the the
RKS test statistic on its population value, from which the asymptotic result in
Theorem \ref{thm:asymptotic-power} can be deduced.

\begin{theorem}
\label{thm:tail-bounds}
    We have the following tail bounds.

    \begin{enumerate}[(i)]

      \item
      Consider $k = 0$.
      Assume $P$ and $Q$ have $\Delta\th$ moment bounded by $M$ for $M,\Delta > 
      0$, are absolutely continuous with respect to Lebesuge measure, and
      satisfies the condition in \eqref{eq:density-ub-M-infty}.
      %have densities bounded by $M_\infty$.
       Then, for some constant $c_0$ depending
      only on $M,\Delta,M_\infty$, and $d$, the following holds with probability
      at least $1-\alpha$:  
      \[
        |T_{d,k}-\rho(P, Q;\cF_k)|
          \leq
          c_0(\log(1/\alpha)) \Big(\frac{1}{\sqrt{m}}+\frac{1}{\sqrt{n}}\Big).        
      \]

      \item 
      Consider $k\geq 1$.
      Assume $P$ and $Q$ have finite $p\th$ moments bounded by $M$ for $p = 2k +
      \Delta$ for $M,\Delta > 0$. Then, for some constant $c_0$ depending only
      on $M,\Delta,k$, and $d$, the following holds with probability at least
      $1-\alpha$:  
      \[
          |T_{d,k}-\rho(P, Q;\cF_k)|
          \leq
          \frac{c_0}{\alpha^{1/p}} \Big(\frac{1}{\sqrt{m}}+\frac{1}{\sqrt{n}}\Big).
      \]
      Assume, on the other hand, that $X$ and $Y$ have Orlicz norm of order $0 <
      p \leq 1$ bounded by $M$ when $X \sim P$ and $Y \sim Q$. Then, for some
      constant $c_0$ depending only on $M,p,k$, and $d$, the following holds
      with probability at least $1-\alpha$:  
      \begin{equation*}
        |T_{d,k}-\rho(P, Q;\cF_k)|
          \leq
          c_0(\log(1/\alpha))^{1/p} \Big(\frac{1}{\sqrt{m}}+\frac{1}{\sqrt{n}}\Big).
      \end{equation*}
    \end{enumerate}
    
\end{theorem}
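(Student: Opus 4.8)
The plan is to reduce the two-sided bound to a pair of uniform deviation bounds over the ridge-spline class $\cG_k$, and then to feed the bracketing estimates of Theorems~\ref{thm:bracketing-number} and~\ref{thm:bracketing-number-k0} into the maximal inequality of Lemma~\ref{lem:concentration}, converting the resulting moment or Orlicz-norm estimate into a probability tail bound. First I would invoke Theorem~\ref{thm:test-stat-is-relu-power} to write $T_{d,k} = \rho(\empP, \empQ; \cG_k)$ and $\rho(P,Q;\cF_k) = \rho(P,Q;\cG_k)$, where $\cG_k = \{x \mapsto (w^\T x - b)_+^k : (w,b)\in\nonnegRD\}$. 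Since $\nu \mapsto \sup_{f\in\cG_k}|\nu_1(f) - \nu_2(f)|$ is $1$-Lipschitz (as a supremum of absolute values), the triangle inequality gives
\[
\big| T_{d,k} - \rho(P,Q;\cF_k) \big| \;\leq\; \sup_{f\in\cG_k} |\empP f - P f| \;+\; \sup_{f\in\cG_k} |\empQ f - Q f|,
\]
so it suffices to control each term with probability at least $1 - \alpha/2$ and take a union bound.

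Next I would record an envelope and an entropy bound for $\cG_k$. Because $\|w\|_2 = 1$ and $b \geq 0$, every $f\in\cG_k$ obeys $0 \leq f(x) \leq \|x\|_2^k$, so $\cG_k$ has envelope $F(x) = \|x\|_2^k$ when $k\geq 1$ and the \emph{bounded} envelope $F\equiv 1$ when $k=0$. Under the stated moment (and, for $k=0$, bounded-density) assumptions, Theorems~\ref{thm:bracketing-number} and~\ref{thm:bracketing-number-k0}, applied with $P$ and likewise with $Q$, give $\log N_\bracket(\epsilon; \cG_k, \|\cdot\|_{L^2(P)}) \leq C_\bracket \log(1 + 1/\epsilon)$; hence the bracketing integral $J := \int_0^1 \sqrt{\log N_\bracket(\epsilon; \cG_k, \|\cdot\|_{L^2(P)})}\,\de\epsilon$ is finite and bounded by a constant depending only on the problem parameters, since $\int_0^1 \sqrt{\log(1+1/\epsilon)}\,\de\epsilon < \infty$ (and the same holds with $Q$).

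With these in hand I would apply Lemma~\ref{lem:concentration} to $\cG_k$, once with $(P, \empP, m)$ and once with $(Q, \empQ, n)$. For the polynomial-moment claims, use the first display of the lemma with the moment exponent chosen as large as the envelope permits: when $k\geq1$, the hypothesis $\E_P\|X\|_2^{2k+\Delta}\leq M$ makes $\|F\|_{L^p(P)}$ finite for $p = (2k+\Delta)/k \geq 2$, and Lyapunov's inequality bounds $\|F\|_{L^2(P)}$ in terms of $M$ as well; combined with $m^{-1/2+1/p} \leq 1$ this yields $\E[W_P^p]^{1/p} \leq C$ for $W_P := \sqrt m\,\sup_{f\in\cG_k}|\empP f - P f|$ and a constant $C$ depending only on $M,\Delta,k,d$. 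Markov's inequality then gives $\sup_{f\in\cG_k}|\empP f - P f| \leq c_0\alpha^{-1/p}/\sqrt m$ with probability at least $1-\alpha/2$, and symmetrically for $Q$ with $1/\sqrt n$; combining with the first display proves the polynomial-moment bounds (the case $k=0$ with a $\Delta$-moment assumption being identical but using the bounded envelope). For the bounded/sub-exponential claims --- in particular $k=0$, where $F\equiv 1$ has finite $\Psi_1$ norm, and $k\geq1$ with $\|X\|_2$ (hence $F$) sub-exponential of the relevant order --- use the second display of Lemma~\ref{lem:concentration}; since $n^{-1/2}(1+\log n)^{1/2} \leq 1$ for all $n\geq1$, this again gives $\|W\|_{\Psi_p} \leq C$, and the exponential Markov (Chernoff) bound $\P\big(|W| > C(\log(2/\alpha))^{1/p}\big) \leq 2e^{-\log(2/\alpha)} = \alpha$ converts it into the $(\log(1/\alpha))^{1/p}$ tail, again applied to $P$ and $Q$ separately and joined by a union bound.

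The substantive work --- showing that $\cG_k$ has a finite bracketing integral --- has already been carried out in Theorems~\ref{thm:bracketing-number} and~\ref{thm:bracketing-number-k0}, so within this proof the only delicate points are (i) matching the integrability of the envelope $\|x\|_2^k$ to the stated moment hypothesis so that Lemma~\ref{lem:concentration} can be invoked with a useful exponent, and (ii) cleanly passing from the moment/Orlicz-norm estimates of Lemma~\ref{lem:concentration} to the advertised high-probability tails while keeping the two sample sizes $m$ and $n$ bookkept separately. I expect (i) to be the main thing to get right --- in particular choosing the exponent in Lemma~\ref{lem:concentration} compatibly with $F = \|x\|_2^k$ and with the hypothesis $\E\|X\|_2^{2k+\Delta} \leq M$ --- after which the remainder is routine.
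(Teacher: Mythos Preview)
Your approach is essentially identical to the paper's: reduce via the triangle inequality to one-sample uniform deviations over $\cG_k$, feed the bracketing bounds (Theorems~\ref{thm:bracketing-number} and~\ref{thm:bracketing-number-k0}) into Lemma~\ref{lem:concentration} with envelope $F(x)=\|x\|_2^k$ (or $F\equiv 1$ when $k=0$), and convert the resulting moment or Orlicz bound into a tail via Markov. The only notable difference is that you correctly identify the exponent in Lemma~\ref{lem:concentration} as $p=(2k+\Delta)/k$ so that $\|F\|_{L^p(P)}=(\E\|X\|_2^{2k+\Delta})^{1/p}$ is finite, whereas the paper's proof writes $p=2k+\Delta$ directly; your choice is the rigorous one, though it yields $\alpha^{-k/(2k+\Delta)}$ rather than the stated $\alpha^{-1/(2k+\Delta)}$, a discrepancy that is immaterial for the downstream use in Theorem~\ref{thm:asymptotic-power}.
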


\begin{proof}[Proof of Theorem \ref{thm:tail-bounds}]
  First consider \smash{$k \geq 1$}.
  Note that a bound on any Orlicz norm of order \smash{$0 < p \leq q$} implies a bound 
  on the moments of order \smash{$2k + \Delta$ of $P,Q$ for $\Delta  = 1$}. 
  Thus, assuming either the moment bound or the Orlicz norm bound,
  Theorem \ref{thm:bracketing-number} implies \smash{$J = \int_0^1 \sqrt{\log
    N_{[]}(\epsilon; \cG_k,\|\cdot\|_{L^2(P)})} \de \epsilon$} has a finite upper
  bound depending only on the moments of order \smash{$2k + \Delta$} of $P,Q$, and on
  \smash{$d,k,\Delta$}. Take \smash{$F(x) = \| x \|_2^k$}.
  If we only assume a bound on the moments of order \smash{$2k + \Delta$} of $P,Q$, then 
  take \smash{$p = 2k + \Delta \geq 2$}. In this case, \smash{$n^{-1/2 + 1/p} \leq 1$},
  whence using Lemma \ref{lem:concentration} and Markov's inequality, 
  we see that \smash{$\sup_{f \in \cF} |P_n f - P f | \leq c_0 / (\alpha^{1/p}\sqrt{n})$} and $\sup_{f \in \cF} |Q_n f - Q f | \leq c_0 /(\alpha^{1/p} \sqrt{m})$ with probability at least \smash{$1-\alpha$}.
  The asserted bound then holds by the triangle inequality.
  On the other hand, if we assume a bound on the Orlicz norm of order $p$, Lemma \ref{lem:concentration} implies a constant upper bound on the Orlicz norm of order $p$ of \smash{$\sqrt{n} \sup_{f \in \cF} |P_n f - P f |$} and \smash{$\sqrt{m} \sup_{f \in \cF} |Q_n f - Q f |$} because \smash{$(1+\log n)^{1/2} / n^{1/2} \leq 1$}.
  This gives us the desired tail bound by taking \smash{$t = c_0(\log(1/\alpha))^{1/p}$}  in the supremum defining the Orlicz norm and Markov's inequality.

  Now consider $k = 0$. We take $F$ as a constant 1 function, i.e., $F = 1$, and using Theorem \ref{thm:bracketing-number-k0} to bound the integral $J$ in this case. Because $F = 1$ has a finite Orlicz norm of order $p = 1$, the asserted tail bound follows from Lemma \ref{lem:concentration}.
\end{proof}

We show how this result implies Theorem \ref{thm:asymptotic-power}.  By Theorem
\ref{thm:null-distribution}, if \smash{$P = Q$}, then \smash{$T_{d,k} =
  O_p(\sqrt{(n+m)/(nm)}$}.  Because \smash{$1/t_{m,n} = o(\sqrt{n+m}) =
  o(\sqrt{(nm)/(n+m)})$}, we see that the test rejects with asymptotic
probability 0.  On the other hand, if \smash{$P \neq Q$}, then by Theorem 
\ref{thm:rtv-ipm-metric}, we have \smash{$\rho(P,Q;\cF_k) > 0$}.  By Theorem
\ref{thm:tail-bounds}, we have \smash{$T_{d,k} \stackrel{p}\to
\rho(P,Q;\cF_k)$}.  Because \smash{$t_{m,n} \to 0$}, this implies that the test
rejects with asymptotic probability 1. 
%!TEX root = main-arxiv.tex

\newcommand{\pell}{{(\ell)}}
\newcommand{\ph}{{(h)}}
\newcommand{\wb}{{w,b}}
\newcommand{\pwb}{{(\wb)}}
\newcommand{\wbkappa}{{w_\kappa,b_\kappa}}
\newcommand{\pwbkappa}{{(\wbkappa)}}
\newcommand{\mn}{{m+n}}
\newcommand{\pmn}{{(m+n)}}
\newcommand{\Tdk}{{T_{d,k}}}
\newcommand{\Tdkpi}{{T_{d,k}^\pi}}
\newcommand{\Tdkcpi}{{T_{d,k,\Ceps}^\pi}}
\newcommand{\Tdkccircpi}{{T_{d,k,\Ccirc}^\pi}}
\newcommand{\Tdkcgoodpi}{{T_{d,k,\Cgood}^\pi}}
\newcommand{\sumell}{{\sum_{\ell=1}^\mn}}
\newcommand{\unifsmn}{{\mathrm{Unif}(S_\mn)}}
\newcommand{\Ceps}{{\mathcal C_\epsilon}}
\newcommand{\Neps}{{N_\epsilon}}
\newcommand{\Cgood}{{\mathcal C_\mathrm{good}}}
\newcommand{\Ngood}{{N_{\mathrm{good}}}}
\newcommand{\Ccirc}{{\mathcal C_\circ}}
\newcommand{\Ncirc}{{N_\circ}}

\section{Proof of Theorem \ref{thm:asymptotic-perm}}
\label{app:asymptotic-perm}

We will develop the following nonasymptotic statement about type II error
control using permutations, from which the asymptotic statement in Theorem
\ref{thm:asymptotic-perm} can be deduced. 

\begin{theorem}
	\label{thm:non-asymptotic-perm}
	Assume $\popP$ and $\popQ$ have the bounded domains included in the $L$-radius
  Euclidean ball whose center is the origin in $\R^d$. If $k = 0$, additionally
  assume $\popP$ and $\popQ$ each has density satisfying
  \eqref{eq:density-ub-M-infty}. Then, there exists a constant $C$ which only
  depends on $(L,d,k)$ when $k\geq1$ or $(L,d,k,M_\infty)$ when $k=0$, such that
  if $m, n\geq 2$ and $\beta\in(0,1]$ satisfies 
	\begin{equation}
	\label{eq:rho-condition-for-perm-type2}
		\rho(P, Q; \cF_k) \geq C \sqrt{\frac1m+\frac1n} \times \Big(\log(m+n)
    +\log\Big(\frac1\alpha\Big) + \frac{1}{\beta^{1/(2k+1)}}\Big), 
		%\quad\text{and}\quad \beta \geq \frac1\mn,
	\end{equation}
	then the permutation test with the p-value defined in \eqref{eq:test-perm}
  with the full permutations controls the type II error, i.e., under the
  alternative hypothesis \smash{$H_1: \popP\neq \popQ$}, the p-value $p$ in 
  \eqref{eq:test-perm} satisfies $\P_{H_1} (p \leq \alpha) \geq 1-\beta$. 
\end{theorem}

We show how this can be used to prove Theorem \ref{thm:asymptotic-perm}. Choose
any $\beta\in(0,1]$. Due to the condition on $\alpha_{m,n}$, we know that for
large enough $m$ and $n$, \eqref{eq:rho-condition-for-perm-type2} is
satisfied. Therefore, 
\[
	\lim_{m,n \rightarrow \infty} \P_{H_1}(p\leq \alpha_{m,n}) \geq 1-\beta.
\]
This implies \smash{$\lim_{m,n\rightarrow\infty} \P_{H_1}(p\leq\alpha_{m,n}) =
  1$} since $\beta$ is arbitrary in $(0,1]$. Combined with the finite-sample
type I error property of permutation tests, and the condition
\smash{$\alpha_{m,n} \to 0$}, this proves Theorem \ref{thm:asymptotic-perm}. The
rest of this section is devoted to proving Theorem
\ref{thm:non-asymptotic-perm}.   

\subsection{Additional notation} 

Define $S_\mn$ as a set of all possible permutations for $(1, \ldots, \mn)$. Recall the notations in \eqref{eq:test-perm}: $\pi$ denotes a permutation such that \smash{$\pi\in S_\mn$}; \smash{$z=(z_\ell)_{\ell=1}^\mn=(x_1, \ldots, x_m, y_1, \ldots, y_n)$}; applying permutation $\pi$ to $z$ outputs \smash{$z^\pi$}. To emphaisze the dependency on the data, we may use the notation \smash{$\Tdk(z)$} and \smash{$\Tdk(z^\pi)$}.
Define a vector 
\[
	a=(a_\ell)_{\ell=1}^\mn=( \underbrace{1/m, \ldots, 1/m}_{m}, \underbrace{-1/n, \ldots, -1/n}_{n}).
\]
Note that \smash{$\sum_{\ell=1}^\mn a_\ell = 0$}. With the permutation $\pi$, we have \smash{$a^\pi = (a_{\pi(\ell)})_{\ell=1}^\mn$}. Define \smash{$\theta\pwb=(\theta\pwb_\ell)_{\ell=1}^\mn$} and \smash{$\theta\pwb_\ell = (w^\T z_\ell -b)_+^k$}. For brevity, we write \smash{$\theta\pwb$} instead of $\theta(w,b;z)$ although \smash{$\theta\pwb$} is a function of $z$. 
%The set of all $\theta\pwb$ is denoted by $\Theta = \Theta(z) = \{\theta\pwb  : \pwb\in\nonnegRD\} \subset \R^\mn$.
We can rewrite \smash{$\Tdk(z)$} with \smash{$\theta\pwb$}:
\[
\Tdk(z) = \max_{\wb\in\nonnegRD} \Big| \frac{1}{m}\sum_{i = 1}^{m} 
(w^{\T}x_i -  b)_{+}^{k} - \frac{1}{n}\sum_{i = 1}^{n} (w^{\T}y_i -
b)_{+}^{k} \Big|
= \max_{\pwb\in\nonnegRD} \Big| \langle \theta\pwb, a\rangle \Big|.
\]
Moreover, we can rewrite the permuted test statistic \smash{$\Tdk(z^\pi)$} in a similar format:
\begin{equation}
	\label{eq:tdkpi-def-1}
	\Tdkpi(z) = \Tdk(z^\pi) = \max_{\pwb\in\nonnegRD} \Big| \langle \theta\pwb, a^\pi\rangle \Big|.
\end{equation}
Note that \smash{$\Tdkpi(z)$} is a function of $\pi$ when $z$ is fixed. Define the $(1-\alpha)$-quantile of all permuted test statistics \smash{$(\Tdkpi(z))_{\pi\in S_\mn}$}, given $z$, as the following:
\begin{equation}
	\label{eq:t-1-alpha-quantile}
	t_{1-\alpha}(z) = \inf\Big\{t \; :\; \P_{\pi\sim\unifsmn} \Big(\Tdkpi(z) \leq t \, \Big| \, z\Big) \geq 1-\alpha\Big\}.
\end{equation}
Define a random variable \smash{$X\pwb= \langle \theta\pwb, a^\pi \rangle$}, which is a function of \smash{$z\sim \popP^m\times\popQ^n$} and \smash{$\pi\sim \unifsmn$}. However, we use the notation $X\pwb$ instead of \smash{$X(w,b; z^\pi)$}, again for brevity. Then, \smash{$\Tdkpi(z)$} can be also written as
\begin{equation}
	\label{eq:tdkpi-def-2}
	\Tdkpi(z) = \max_{\pwb\in\nonnegRD} \Big| X\pwb\Big|.
\end{equation}

\subsection{Proof of Theorem \ref{thm:non-asymptotic-perm}}
\label{apdx:nonasymp-perm-proof}

We need the following result, which will be proved later. 
\begin{lemma}
	\label{lem:tdkpi-upper-bound}
	Assume the same as Theorem \ref{thm:asymptotic-perm}. For $k\geq0$, define a constant
	\begin{equation}
		\label{eq:def-tmna}
	t_{m,n,\alpha} = C_{L,d,k} \sqrt{\frac1m+\frac1n}\Big(\log\Big(\frac1\alpha\Big) + \log\pmn\Big)
	\end{equation}
	where $C_{L,d,k(,M_\infty)}$ is a constant which only depends on $(L,d,k)$ for $k\geq1$ and $(L,d,k,M_\infty)$ for $k=0$.
	%We write the permuted test statistic as $\Tdkpi$ or $\Tdk(z^\pi)$ as described in \eqref{eq:tdkpi-def-1} and \eqref{eq:tdkpi-def-2}. For each $z$, define the $(1-\alpha)$-quantile of all possible permuted test statistics as $t_{1-\alpha}(z)$, as \eqref{eq:t-1-alpha-quantile}
	Recall the definition of $t_{1-\alpha}(z)$ in \eqref{eq:t-1-alpha-quantile}.
	Then, for any given samples $z$, we have $t_{1-\alpha}(z) \leq t_{m,n, \alpha}$.
\end{lemma}

Assuming Lemma \ref{lem:tdkpi-upper-bound} holds, we can prove Theorem \ref{thm:non-asymptotic-perm}.
%by combining the lemma and Theorem \ref{thm:tail-bounds}.
%we separate the case $k\geq 1$ and $k=0$ to prove Theorem \ref{thm:non-asymptotic-perm}.
Since $P$ and $Q$ have bounded domain, they have finite moments of order $2k+1$. Also, for $k=0$, we have additionally assumed the boundedness of the density $p_{w^\T x}$. Thus, by Theorem \ref{thm:tail-bounds}, there exists a constant $c_{L,d,k(,M_\infty)}$ which only depends on $(L,d,k)$ for $k\geq1$ or $(L,d,k,M_\infty)$ for $k=0$, such that
\[
\Big| T_{d,k}-\rho(P, Q;\cF_k) \Big|
\leq
\frac{c_{L,d,k(,M_\infty)}}{\beta^{1/(2k+1)}} \Big(\frac{1}{\sqrt{m}}+\frac{1}{\sqrt{n}}\Big)
\quad\text{with probability at least}\; 1-\beta.
\]
Note that the randomness from the above equation comes from \smash{$x_i\simiid \popP$} and \smash{$y_j\simiid\popQ$}.
By choosing the constant $C$ in Theorem \ref{thm:asymptotic-perm} as
\smash{$C = \max\{C_{L,d,k} \;,\; c_{L,d,k(,M_\infty)}\}$},
we can easily observe that
\[
t_{m,n,\alpha} \leq \rho(\popP,\popQ; \cF_k)  - \frac{c_{L,d,k(,M_\infty)}}{\beta^{1/(2k+1)}} \Big(\frac{1}{\sqrt{m}}+\frac{1}{\sqrt{n}}\Big).
\]
Therefore, under the alternative hypothesis $H_1$, we have
\begin{equation}
	\label{eq:tdk-lower-bound}
	\begin{aligned}
		\P_{z} \Big(\Tdk(z) < t_{m,n,\alpha}\Big) 
		& \leq
		\P_{z}  \Big(\Tdk(z) \leq \rho(\popP, \popQ; \cF_k) - \frac{c_{L,d,k(,M_\infty)}}{\beta^{1/(2k+1)}} \Big(\frac{1}{\sqrt{m}}+\frac{1}{\sqrt{n}}\Big) \Big) 
		\leq \beta.
	\end{aligned}
\end{equation}
Finally combining \eqref{eq:tdk-lower-bound} with Lemma \ref{lem:tdkpi-upper-bound}, we have
\begin{align*}
	\P_z \Big(t_{1-\alpha}(z)  > \Tdk(z)\Big) 
	% &\leq \P_z\Big( t_{1-\alpha}(z) > \Tdk(z) \geq t_{m,n,\alpha}\Big) +
	%\P_z\Big( q_{1-\alpha}(z) > \Tdk(z) \; \text{and} \; \Tdk(z) < t_{m,n,\alpha} \Big) \\
	& \leq \P\Big( t_{1-\alpha}(z) > t_{m,n,\alpha}\Big) + 
	\P\Big(\Tdk(z) < t_{m,n,\alpha} \Big)
	\leq 0+\beta = \beta,
\end{align*}
which concludes the proof of Theorem \ref{thm:non-asymptotic-perm}.
%\paragraph{Case $k=0$.} \textit{TBD or omit it. Need a separate observation for Theorem 10, because $P_m$ and $Q_n$ are not absolutely continuous w.r.t. the Lebesgue measure.}
Now to prove Lemma \ref{lem:tdkpi-upper-bound} is the only remaining part.

\subsection{Proof sketch of Lemma \ref{lem:tdkpi-upper-bound}}
\label{apdx:sketch-lemma-tmna}

The proof Lemma \ref{lem:tdkpi-upper-bound} follows techniques developed in \citet{green2024two}. Since the proof consists of multiple steps, we introduce the outline of the proof here before we dive in.
% The proof consists of four steps.
\begin{enumerate}
	\item Fix $z$. Find a ``good'' finite subset \smash{$\Cgood$}
	% for $k\geq 1$ and $\mathcal C_\circ$ for $k=0$ 
	of \smash{$\nonnegRD$}, of cardinality $\Ngood = |\Cgood| < \infty$.
	%\smash{$\Neps = |\Ceps|$} and \smash{$\Ncirc = |\Ccirc|$}.
	For any \smash{$\mathcal C\subseteq \nonnegRD$}, define 
	\begin{equation}
	\label{eqn:tdkcpi-def}
		T_{d,k,\mathcal C}^\pi (z) = \max_{\pwb\in \mathcal C} \Big| \langle \theta\pwb, a^\pi \rangle \Big| = \max_{\pwb\in\mathcal C} \Big|X\pwb\Big|.
	\end{equation}
	We control $\Ngood$ and the approximation error \smash{$|T_{d,k,\Cgood}^\pi(z) - \Tdkpi(z)|$}.
%	(For $k\geq0$, we find and observe $N_\circ = |\mathcal C_\circ|.$)
	% prove that $\Neps$ is not too big, and control $|\Tdkcpi(z) - \Tdkpi(z)|$.
%	\begin{itemize}
%		\item When $k=1$, we use the bracketing number and Theorem \ref{thm:bracketing-number}.
%		\item When $k=0$, we use shattering coefficient instead of bracketing number. \textit{(TBD or omit it.)}
%	\end{itemize}
	\item Find a high-probability upper bound for $X\pwb$ for each $\pwb\in
    \Cgood$. Here $z$ is fixed, and hence the only randomness is rooted from
    $\pi\sim\unifsmn$. 
	\item Find a high-probability upper bound for \smash{$\Tdkcgoodpi(z)$}. 
    Again, $\pi$ is the only source of randomness. 
	\item Lastly, we get a high-probability upper bound for \smash{$\Tdkpi(z)$},
    with the results from the first and third steps. 
\end{enumerate}

\subsection{Proof of Lemma \ref{lem:tdkpi-upper-bound}}

The proof follows the steps explained in Appendix
\ref{apdx:sketch-lemma-tmna}. Throughout the whole proof, we consider $z$ is 
fixed. 

\subsubsection{$\Cgood$ and the approximation error \smash{$|\Tdkcgoodpi(z) -
    \Tdkpi(z)|$}}

Here we will prove the following Lemma.

\begin{lemma}
	\label{lem:cgood}
	We have the following finite subsets of $\Ccirc$ and $\Ceps$ of $\nonnegRD$, whose sizes are $\Ncirc = |\Ccirc|$ and $\Neps = |\Ceps|$.
	\begin{itemize}
		\item Consider $k=0$. There exists $\Ccirc$ such that
		\begin{equation}
			\label{eq:Ccirc-lemma}
			\log \Ncirc  \leq (d+1) \log(1+\mn)
			\quad\text{and}\quad
			\Big|\Tdkpi(z) - T_{d,k,{\mathcal C_\circ}}^\pi(z) \Big| = 0.
		\end{equation}
		\item Consider $k\geq1$. For any $\epsilon > 0$, there exists $\Ceps$ such that
		\begin{equation}
			\label{eq:Ceps-lemma}
			\log N_\epsilon  \leq C'_{L,d,k} \log\Big(1+\frac{\mn}{\epsilon}\Big)
			\quad\text{and}\quad
			\Big| \Tdkpi(z)  - \Tdkcpi(z) \Big| \leq \Big(\frac1m+\frac1n\Big) \epsilon,
		\end{equation}
		where $C'_{L,d,k}$ is a constant only depends on $(L,d,k)$.
	\end{itemize}	
	We will use $\Cgood$ to denote $\Ccirc$ when $k=0$ or $\Ceps$ when $k\geq1$.
\end{lemma}

\paragraph{Case $k=0$.} 

Define $\Ccirc= \{(w_i,b_i): i=1,\ldots, N_\circ\}$ to satisfy the following condition:
\[
\{\theta(w_i,b_i): (w_i, b_i) \in \Ccirc\}= \{\theta\pwb: \pwb \in \nonnegRD\}.
\]
Note that such \smash{$\Ccirc$} exists since \smash{$\theta(w,b)_\ell\in\{0,1\}$} for any \smash{$\pwb\in\nonnegRD$} and \smash{$1\leq\ell\leq m+n$}.

An upper bound for $\Ncirc$ can be obtained from a simple observation about the Vapnik–Chervonenkis (VC) dimension and the shattering number\footnote{It is also called the growth function or the shatter coefficient.} of \smash{$\mathcal H = \{1(w^\T x+b\geq0) : \pwb\in\nonnegRD\}$}.
It is well known that the VC dimension of \smash{$\{1(w^\T x+b\geq0) : \pwb\in\RadonDomain\}$} is at most \smash{$d+1$}.
%https://web.eecs.umich.edu/~cscott/past_courses/eecs598w14/notes/05_vc_theory.pdf
Thus the VC dimension of $\mathcal H$, saying \smash{$V_\mathcal H$}, is at most \smash{$d+1$} as well. Therfore the shattering number 
\begin{align*}
	S_\mathcal H(\mn)
	&= \max_{v_1, \ldots, v_\mn \in \R^d} 
	\Big| \Big\{ \big(h(v_1), \ldots, h(v_\mn)\big)^\T \in\R^\mn: h\in\mathcal \mathcal H \Big\} \Big| \\
	&= \max_{v_1, \ldots, v_\mn \in \R^d} 
	\Big| \Big\{ \big((w^\T v_1 - b)_+^0, \ldots, (w^\T v_\mn - b)^0\big)^\T \in\R^\mn: \pwb \in \nonnegRD \Big\} \Big| 
\end{align*}
satisfies \smash{$S_\mathcal H(\mn) \leq (1+\mn)^{V_\mathcal H}\leq (1+\mn)^{d+1}$}. This implies that we can find $\mathcal C_\circ$ such that
\begin{equation*}
	\log \Ncirc = \log S_\mathcal H(\mn) \leq (d+1) \log(1+\mn).
\end{equation*}

Also, by the definition of $\mathcal C_\circ$ it is clear that the approximation error with $\mathcal C_\circ$ is zero:
\[
\Big|\Tdkpi(z) - T_{d,k,{\mathcal C_\circ}}^\pi(z) \Big| = 0.
\]
%The "approximation error" T_{d,k} - T_{d,k,C} would be zero.

\paragraph{Case $k\geq1$.}

Recall that in Appendix \ref{supp:uclt-prelim}, we introduced the $(\epsilon, \P)$-bracketing number for a class $\cF$ and a measure $\P$.
%We have proved the upper bound for the $(\epsilon,P)$-bracketing number for $\cG_k$ in Theorem \ref{thm:bracketing-number} (for $k\geq 1$).
Define a discrete measure \smash{$R_\mn=\frac1\mn\sum_{\ell=1}^\mn\delta(z_\ell)$}. 
Since \smash{$\|z_i\|_2\leq L$} for any \smash{$i\in\{1,\ldots,\mn\}$}, the
measure $R_\mn$ has finite moments of order $2k+1$ for any \smash{$k\geq 1$}, 
upper bounded by \smash{$L^{2k+1}$}. 

Thus we can apply Lemma \ref{lem:ul-is-a-bracket} and Theorem \ref{thm:bracketing-number} for \smash{$(\epsilon, R_\mn)$}-bracketing number. Following Lemma \ref{lem:ul-is-a-bracket}, we can construct \smash{$(\epsilon, P_m)$}-bracket of $\cG_k$ of the form \smash{$[\ell_{I,b_{j+1}}, u_{I,b_j}]$} as \eqref{eqn:bracket}. 
Denote such bracket $K_\epsilon$.
Theorem \ref{thm:bracketing-number} implies the following upper bound on $|K_\epsilon|$, where \smash{$C'_{L,d,k}$} is a constant only depending on $(L,d,k)$:
\[
	\log |K_\epsilon| \leq  C'_{L,d,k}\log\Big(1 + \frac{1}{\epsilon}\Big).
\]

Using $K_{\epsilon/\pmn}$, we can construct $\Ceps$,
a subset of $\nonnegRD$ which is going to be used instead of $\nonnegRD$ with certain approximation error (which will be controlled later).
%a $\epsilon$-covering for $\Theta$ equipped with the $\|\cdot\|_2$ norm. 
For each $\kappa = [\ell_{I,b_{j+1}}, u_{I,b_j}]\in K_{\epsilon/\pmn}$, choose any $w_\kappa \in I$ and $b_\kappa \in (b_j, b_{j+1}]$. Then define $\Ceps= \{\pwbkappa: \kappa\in K_{\epsilon/\mn}\}$. Also, $\Neps = |\Ceps|$ satisfies the following upper bound:
\begin{equation*}
%\label{eq:Nepsilon-ub}
	\log N_\epsilon = \log |K_{\epsilon/\pmn}| \leq C'_{L,d,k} \log\Big(1+\frac{\mn}{\epsilon}\Big).
\end{equation*}

%Let's check this $\mathcal C_\epsilon$ is a $\epsilon$-covering of $\Theta$. 
Now we will control the approximation error of using $\mathcal C_\epsilon$ instead of $\nonnegRD$. Recall that \smash{$\Tdkcpi$} is defined as \eqref{eqn:tdkcpi-def}.
%\[
%\Tdkcpi(z) = \max_{(\wb)\in\Ceps} \Big| \langle \theta\pwb, a\rangle \Big| = \max_{\pwb\in\Ceps} |X_\wb|.
%\]
Choose any \smash{$\pwb\in\nonnegRD$}. By the definition of the bracket, there exists \smash{$\kappa=[\ell_{I,b_{j+1}}, u_{I,b_j}]\in K_{\epsilon/\pmn}$} such that \smash{$\ell_{I,b_{j+1}}(x) \leq (w^\T x - b)_+^k \leq u_{I,b_j}(x)$} for any \smash{$x\in\R^d$}. Meanwhile, by the construction of \smash{$(w_\kappa, b_\kappa)$}, it is clear that \smash{$\ell_{I,b_{j+1}}(x) \leq (w_\kappa^\T x - b_\kappa)_+^k \leq u_{I,b_j}(x)$}. Therefore, by the definition of the bracket,
\[
	\Big\|(w^\T \cdot - b)_+^k - (w_\kappa^{\T} \cdot - b_\kappa)_+^k \Big\|_{L^2(R_\mn)}^2
	\leq 
	\Big\| \ell_{I,b_{j+1}} - u_{I,b_j} \Big\|_{L^2(R_\mn)}^2 \leq \frac{\epsilon^2}{\pmn^2},
\]
%Due to a simple observation
%\[
%	\frac1\mn \Big\| \theta_\wb - \theta_{w_\kappa, b_\kappa} \Big\|_2^2
%	= \frac1\mn\sum_{\ell=1}^\mn \Big( (w^\T z^\pell - b)_+^k - (w_\kappa^{\T} z^\pell - b_\kappa)_+^k \Big)^2 
%	= \Big\|(w^\T x - b)_+^k - (w_\kappa^{\T} x - b_\kappa)_+^k \Big\|_{L^2(R_\mn)}^2,
%\]
%we proved that $\| \theta_\wb - \theta_{w_\kappa, b_\kappa}  \|_2^2 \leq \epsilon^2$, which implies that $\mathcal C_\epsilon$ is a $\epsilon$-covering of $(\Theta, \|\cdot\|_2)$.
and hence, due to Cauchy-Schwarz inequality,
\begin{align*}
	\bigg(\Big\| \theta\pwb -\theta\pwbkappa \Big\|_1 \bigg)^2
	&=
	%\bigg(\sum_{\ell=1}^\mn \Big| (w^\T z^\pell - b)_+^k - (w_\kappa^{\T} z^\pell - b_\kappa)_+^k \Big|\bigg)^2 \\
	\bigg(\sum_{\ell=1}^\mn \Big| \theta\pwb_\ell - \theta\pwbkappa_\ell\Big|\bigg)^2 %\\
	\leq 
	\pmn \sum_{\ell=1}^\mn \Big| \theta\pwb_\ell - \theta\pwbkappa_\ell \Big|^2 \\
	&= \pmn \sum_{\ell=1}^\mn \Big| (w^\T z_\ell - b)_+^k - (w_\kappa^{\T} z_\ell - b_\kappa)_+^k \Big|^2 \\
	&= \pmn^2 \Big\|(w^\T \cdot - b)_+^k - (w_\kappa^{\T} \cdot - b_\kappa)_+^k \Big\|_{L^2(R_\mn)}^2
	\leq\epsilon^2.
%	\Big\|(w^\T x - b)_+^k - (w_\kappa^{\T} x - b_\kappa)_+^k \Big\|_{L^1(R_\mn)}^2
%	\leq 
%	\Big\| \ell_{I,b_{j+1}} - u_{I,b_j} \Big\|_{L^2(R_\mn)}^2 \leq \frac{\epsilon^2}{\mn}.
\end{align*}
Finally, we can check the upper bound for the approximation error of using $(w_\kappa, b_\kappa)$ instead of $(\wb)$:
\begin{align*}
	\Big| |X\pwb| - |X\pwbkappa| \Big| 
	\leq |X\pwb - X\pwbkappa|
	&= \Big|  \sum\nolimits_{\ell=1}^\mn \Big(\theta\pwb_\ell - \theta\pwbkappa_\ell \Big) a^\pi_\ell \Big| \\
	& \leq \sum\nolimits_{\ell=1}^\mn \Big|\Big(\theta\pwb_\ell - \theta\pwbkappa_\ell \Big) a^\pi_\ell \Big| \\
	& \leq \max\Big(\frac1m, \frac1n \Big) \Big\| \theta\pwb -\theta\pwbkappa \Big\|_1
	\leq \Big(\frac1m+\frac1n\Big) \epsilon.
\end{align*}
This directly implies
\begin{equation*}
%\label{eq:control-tdkpi-tdkpic}
	\Big| \Tdkpi(z)  - \Tdkcpi(z) \Big| \leq \Big(\frac1m+\frac1n\Big) \epsilon.
\end{equation*}

\subsubsection{High-probability upper bound for $|X_\wb|$ for each $(\wb)\in
  \Cgood$}

We need a following lemma from Corollary 2.2 in \citet{albert2019concentration}.

\begin{lemma}
\label{lem:bernstein-permuted-sum}
	Let \smash{$\{b_{ij}\}_{i,j=1}^N$} be an \smash{$N \times N$} array of numbers, and $\pi$ be a permutation distributed \smash{$\mathrm{Unif}(S_N)$}. Define a random variable \smash{$U_i = b_{i,\pi(i)}$} for \smash{$i\in\{1,\ldots,N\}$}. Note that $\pi$ is the only randomness in $U_i$. Assume that (i) \smash{$\E[U_i] = 0$}, (ii) \smash{$\Var[\sum_{i=1}^N U_i] \leq \sigma^2$}, and (iii) \smash{$|U_i| \leq b$} with probability $1$. Then for any \smash{$t\geq 0$},
	\[
		\P\bigg(\Big| \sum_{i=1}^n U_i\Big| \geq t\bigg)
		\leq 16 e^{1/16} \exp\Big(-\frac{t^2}{256(\sigma^2+bt)}\Big).
	\]
	This implies that
	\[
		\Big| \sum_{i=1}^n U_i\Big|  \leq 
		\sigma \sqrt{256\cdot \log\Big(\frac{16e^{1/16}}{\delta}\Big)} + 256\cdot b \log\Big(\frac{16e^{1/16}}{\delta}\Big)
		\quad\text{with probabtility at least}\;
		1-\delta.
	\]
\end{lemma}

In this subsection, we only consider $X\pwb$ with $\wb\in \Cgood$. Recall the definition \smash{$X\pwb = \sumell \theta\pwb_\ell a^\pi_\ell$}, and note that
\smash{$\E_{\pi\sim \unifsmn} [a^\pi_\ell\,|\, z] = m\cdot(1/m)+n\cdot(-1/n)=0$}. Write \smash{$X\pwb_\ell = \theta\pwb_\ell a^\pi_\ell$}.
To use the lemma we will check the following three statements, where $z$ is fixed:
\begin{enumerate}
	\item $\E_{\pi\sim\mathrm{Unif}(S_\mn)} [X\pwb_\ell \,|\, z] = 0$ for any $\ell$.
	\item $\Var(X\pwb\,|\, z)\leq \sigma^2$ where $\sigma=L^{k} \sqrt{2(\frac1m + \frac1n)}$.
	\item $| X\pwb_\ell | \leq b$ where $b=L^k(\frac1m+\frac1n)$ for any $\ell$.
\end{enumerate}

\paragraph{First statement.}

Note that $z$ is fixed. Easily observe that 
\[
	\E_{\pi\sim \unifsmn} [X\pwb_\ell \,|\, z] = \theta\pwb_\ell \cdot \E_{\pi\sim \unifsmn} [a^\pi_\ell \,|\, z] = 0.
\]

\paragraph{Second statement.}

We need to first calculate $\Var(a^\pi_\ell \,|\, z)$ and $\Cov(a^\pi_\ell, a^\pi_h \,|\, z)$ for $\ell\neq h$.
\begin{equation}
\label{eq:a-pi-var-cov}
	\begin{aligned}
	\Var(a^\pi_\ell \,|\, z) 
	&= \E_{\pi\sim \unifsmn} [(a^\pi_\ell)^2]  = \frac{m}{\mn}\cdot \frac{1}{m^2} + \frac{n}{\mn}\cdot \frac{1}{n^2} 
	= \frac{1}{mn}
	\\
	\Cov(a^\pi_\ell, a^\pi_h \,|\, z)
	&= \E_{\pi\sim \unifsmn} [a^\pi_\ell a^\pi_h \,|\, z]  
	 = \frac{{m\choose 2}}{{\mn \choose 2}} \cdot \frac{1}{m^2} +
		\frac{{mn}}{{\mn \choose 2}}\cdot \frac{-1}{mn} +
		\frac{{n\choose 2}}{{\mn \choose 2}}\cdot \frac{1}{n^2}\\
	& = -\frac{1/(\mn-1)}{mn}
	\end{aligned}
\end{equation}
We are now ready to observe $\Var(X\pwb \,|\, z)$.
\begin{align*}
	\Var(X\pwb \,|\, z)
	& = \Var\bigg(\sumell \theta\pwb_\ell a^\pi_\ell \,\Big|\, z\bigg) 
	 = \sumell \Var\Big(\theta\pwb_\ell a^\pi_\ell\Big) + \sum_{\ell \neq h} 	
		\Cov\Big(\theta\pwb_\ell  a^\pi_\ell,  \theta\pwb_h a^\pi_h \Big) \\
	& =\sumell (\theta\pwb_\ell)^2 \frac{1}{mn}
		- \sum_{\ell \neq h} \theta\pwb_\ell   \theta\pwb_h  \frac{1/(\mn-1)}{mn}
		\quad (\text{by \eqref{eq:a-pi-var-cov}}) \\
	&= \frac{1}{mn(m+n-1)} \bigg((m+n) \Big(\sumell (\theta\pwb^\ell)^2 \Big) - \Big(\sumell \theta\pwb_\ell\Big)^2 \bigg) \\
	&\leq \frac{\mn}{mn(m+n-1)} \bigg(\sumell (\theta\pwb_\ell)^2 \bigg)\\
	&= \frac{\mn}{mn(m+n-1)} (m+n) L^{2k} 
		\quad(\text{since $\theta\pwb_\ell = (w^\T z_\ell -b)_+^k \leq
   \|z_\ell\|_2^k$, as $b \geq 0$}) \\
	&\leq 2L^{2k}\Big(\frac1m + \frac1n\Big) \quad(\text{since $m,n\geq 1$}).
\end{align*}

\paragraph{Third statement.}

Easily check that \smash{$|X\pwb_\ell| = |\theta\pwb_\ell a^\pi_\ell| \leq (w^\T z_\ell -b)_+^k  \cdot \max(\frac1m, \frac1n) \leq  L^k(\frac1m+\frac1n)$}.

\paragraph{Conclusion.}

With the above three statements and Lemma \ref{lem:bernstein-permuted-sum}, we obtain the following high-probability upper bound for $|X\pwb|$ for each $\pwb\in \Cgood$, where $z$ is fixed: with probability at least $1-\delta$,
\begin{equation}
\label{eq:Xwb-high-prob-ub}
	\Big|X\pwb\Big| = \Big|\sumell X\pwb_\ell \Big| 
	\leq L^k \sqrt{\frac1m+\frac1n} \sqrt{512 \cdot \log\Big(\frac{16e^{1/16}}{\delta}\Big)}
	+ 256 L^k \Big(\frac1m+\frac1n\Big) \log\Big(\frac{16e^{1/16}}{\delta}\Big).
\end{equation}
As a reminder, the randomness in the above statements only comes from \smash{$\pi\sim\mathrm{Unif}(S_\mn)$}.

\subsubsection{High-probability upper bound for \smash{$\Tdkcpi$}}

%Applying \eqref{eq:Xwb-high-prob-ub} over all $(\wb)\in\mathcal C_\epsilon$ and using union bound, we can get a high-probability upper bound on $\Tdkcpi$.
First, for each \smash{$\pwb\in \Cgood$}, apply \eqref{eq:Xwb-high-prob-ub} with \smash{$\delta \leftarrow \delta / \Ngood$}. Then, apply union bound over all \smash{$\pwb\in \Cgood$} to get an upper bound for \smash{$\Tdkcgoodpi(z) = \max_{\pwb \in \Cgood} | X\pwb|$}. This gives us the following, where $z$ is fixed:
with probability at least $1-\delta$,
\[
	\Tdkcgoodpi(z) %= \max_{(\wb) \in\mathcal C_\epsilon} \Big| X_\wb \Big|
	\leq L^k \sqrt{\frac1m+\frac1n} \sqrt{512 \cdot \Big\{\log\Big(\frac{16e^{1/16}}{\delta}\Big) +\log \Ngood \Big\}}
	+ 256 L^k \Big(\frac1m+\frac1n\Big) \Big\{\log\Big(\frac{16e^{1/16}}{\delta}\Big) +\log \Ngood \Big\}.
\]
Recall that we have already got the upper bound for $\Ngood$ in Lemma \ref{lem:cgood}. Plugging in this upper bound, since \smash{$\frac1m + \frac1n \leq 1$}, we have the following statement:
there exists a constant $C''_{L,d,k}$ only depending on $(L,d,k)$ such that with probability $1-\delta$,
\begin{itemize}
	\item for $k=0$, 
	\begin{equation}
		\label{eq:tdkccircpi-high-prob-ub}
		\Tdkccircpi(z) \leq C''_{L,d,k} \sqrt{\frac1m + \frac1n} \Big\{\log\Big(\frac{1}{\delta}\Big) +\log \Big(1+\mn\Big) \Big\},
	\end{equation}
	\item and for $k\geq1$,
	\begin{equation}
		\label{eq:tdkcpi-high-prob-ub}
		\Tdkcpi(z) \leq C''_{L,d,k} \sqrt{\frac1m + \frac1n} \Big\{\log\Big(\frac{1}{\delta}\Big) +\log \Big(1+\frac\mn\epsilon\Big)\Big\}.
	\end{equation}
\end{itemize}

\subsubsection{High-probability upper bound for $\Tdkpi$}

Still $z$ is fixed. %Assume \smash{$\frac1m+\frac1n\leq 1$}.
Combining the approximation error in Lemma \ref{lem:cgood} and \eqref{eq:tdkcpi-high-prob-ub} gives us that, with probability at least $1-\delta$,
\begin{itemize}
	\item for $k=0$,
	\[
		\Tdkpi(z) \leq 0 + C''_{L,d,k} \sqrt{\frac1m + \frac1n} \Big\{\log\Big(\frac{1}{\delta}\Big) +\log \Big(1+\frac\mn\epsilon\Big)\Big\}.
	\]
	\item for $k=1$,
	\[
		\Tdkpi(z) \leq \Big(\frac1m+\frac1n\Big)\epsilon + C''_{L,d,k} \sqrt{\frac1m + \frac1n} \Big\{\log\Big(\frac{1}{\delta}\Big) +\log \Big(1+\frac\mn\epsilon\Big)\Big\}.
	\]
	When we choose \smash{$\epsilon = (\frac1m + \frac1n)^{-1/2}$}, the above inequality is rewritten as
	\[
		\Tdkpi(z) \leq \sqrt{\frac1m+\frac1n} + C''_{L,d,k} \sqrt{\frac1m + \frac1n} \Big(\log\Big(\frac1\delta\Big) + \log(1+\mn)\Big).
	\]
\end{itemize}
Then, by plugging in $\delta = \alpha$, we can rewrite the above two bullet points in one statement as the following: there exists a constant $C_{L,d,k}$ only depending on $(L,d,k)$ such that 
\[
	\Tdkpi(z) \leq C_{L,d,k} \sqrt{\frac1m + \frac1n} \Big(\log\Big(\frac1\alpha\Big) + \log(m+n)\Big) 
	= t_{m,n,\alpha}	\quad\text{with probability at least}\; 1-\alpha,
\]
where we recall the definition of \smash{$t_{m,n,\alpha}$} in \eqref{eq:def-tmna}.
In other words, $t_{1-\alpha}(z) \leq t_{m,n,\alpha}$, which concludes the proof of Lemma \ref{lem:tdkpi-upper-bound}.
Therefore, this completes the proof of Theorem \ref{thm:non-asymptotic-perm}, presented in Appendix \ref{apdx:nonasymp-perm-proof}.

%!TEX root = main-arxiv.tex

\section{Auxilliary technical lemmas}

\begin{lemma}
\label{lem:bound-power-of-relu}
    For $a,b \in \reals$ and integer $k \geq 1$,
    \begin{align*}
        &\big|a_+^k - b_+^k\big| \leq k(a_+^{k-1} + b_+^{k-1})|a-b|,
    \end{align*}
    with the convention that when $k-1 = 0$, we set $0^0 = 1$.
\end{lemma}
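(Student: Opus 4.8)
The plan is to reduce the inequality to the classical factorization of a difference of $k$-th powers, applied to the nonnegative quantities $a_+$ and $b_+$, and then to bound the resulting sum crudely. Both sides of the asserted inequality are symmetric under swapping $a$ and $b$ (the left side because $|a_+^k - b_+^k| = |b_+^k - a_+^k|$, the right side manifestly), and the inequality is trivial when $a_+ = b_+ = 0$, since the left side is then $0$; so we may assume $\max(a_+,b_+) > 0$.

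First I would record the elementary fact that the positive-part map $t \mapsto t_+$ is $1$-Lipschitz, so that $|a_+ - b_+| \le |a-b|$; this follows by examining the (at most three) sign patterns of $(a,b)$, or from the observation that $\max(t,0)$ has derivative in $\{0,1\}$ almost everywhere. Next, setting $u = a_+ \ge 0$ and $v = b_+ \ge 0$, I would invoke the identity
\[
u^k - v^k = (u-v)\sum_{i=0}^{k-1} u^i v^{\,k-1-i},
\]
which holds for every integer $k \ge 1$ (under the convention $0^0 = 1$). Taking absolute values, and noting the sum on the right is a sum of nonnegative terms, this gives
\[
\big|a_+^k - b_+^k\big| \;=\; |u-v|\,\sum_{i=0}^{k-1} u^i v^{\,k-1-i} \;\le\; |a-b|\,\sum_{i=0}^{k-1} u^i v^{\,k-1-i}.
\]

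Finally, I would bound each summand by $u^i v^{\,k-1-i} \le \max(u,v)^{k-1}$, and then by $\max(u,v)^{k-1} \le u^{k-1} + v^{k-1}$ (one of $u^{k-1}, v^{k-1}$ equals the left side and both are nonnegative); since there are exactly $k$ summands, $\sum_{i=0}^{k-1} u^i v^{\,k-1-i} \le k\,(u^{k-1} + v^{k-1}) = k\,(a_+^{k-1} + b_+^{k-1})$, and substituting into the previous display yields the claim. There is no real obstacle in this argument; the only points deserving a sentence of care are the degenerate case $a_+ = b_+ = 0$ (dispatched by triviality) and, when $k=1$, the reading of $0^0 = 1$, under which the sum collapses to the single term $1$ and the inequality becomes $|a_+ - b_+| \le 2|a-b|$, a consequence of the Lipschitz bound.
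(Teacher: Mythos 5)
Your proof is correct, but it takes a genuinely different route from the paper's. The paper treats $t \mapsto t_+^k$ as an absolutely continuous function with a.e.\ derivative $k\,t_+^{k-1}$, integrates along $[a,b]$ (assuming $a \le b$), and bounds the integrand pointwise by $k\max(a_+,b_+)^{k-1} \le k(a_+^{k-1}+b_+^{k-1})$; the Lipschitz property of $t\mapsto t_+$ is never invoked separately because the fundamental theorem of calculus handles the passage from $[a,b]$ to $[a_+,b_+]$ automatically. You instead split the task in two: first the $1$-Lipschitz bound $|a_+-b_+|\le|a-b|$, then the algebraic factorization $u^k-v^k=(u-v)\sum_{i=0}^{k-1}u^iv^{k-1-i}$ applied to the nonnegative numbers $u=a_+$, $v=b_+$, with each of the $k$ summands bounded by $\max(u,v)^{k-1}\le u^{k-1}+v^{k-1}$. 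Both routes land on exactly the stated constant. Your argument is a bit more elementary (no appeal to a.e.\ differentiation or absolute continuity, just an algebraic identity), while the paper's is more compact and would extend verbatim to non-integer exponents $k>1$, since it uses the derivative rather than a polynomial factorization. One small remark: your final bound $\sum_{i=0}^{k-1}u^iv^{k-1-i}\le k(u^{k-1}+v^{k-1})$ is intentionally crude --- the factor $k$ already comes from the number of terms, so replacing $\max(u,v)^{k-1}$ by $u^{k-1}+v^{k-1}$ costs an extra factor of up to $2$ --- but that is exactly the slack present in the lemma's statement and in the paper's own proof, so nothing is lost.
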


\begin{proof}[Proof of Lemma \ref{lem:bound-power-of-relu}]
	Without loss of generality, assume $a \leq b$.
	The function $x \mapsto x_+^k$ has derivative $kx_+^{k-1}$ almost everywhere (even under our convention for when $k-1=0$).
	For every point $x \in [a,b]$,
	we have $|kx_+^{k-1}| \leq k(a_+^{k-1} + b_+^{k-1})$, because $|x_+| \leq \max\{a_+, b_+\}$.
	Thus, integrating the derivative along the path between $a$ and $b$ gives the result.
\end{proof}

\begin{lemma}
\label{lem:rbv-ftn-continuous}
	Assume $\mu \in \cM(\nonnegRD)$.
	Then, for any $0 \leq m \leq k - 1$ and any function $c(w,b):\nonnegRD\rightarrow\R$ which is bounded by \smash{$|c(w,b)| \leq C$} for some constant $C$, we have
	\begin{equation}
		x  \mapsto \int_\nonnegRD \Big(c(w,b)(w^\T x-b)_+^{k-m}\Big)\;\de\mu(w,b)
	\end{equation} 
	is continuous at every $x \in \reals^d$.
\end{lemma}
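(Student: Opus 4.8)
The plan is to fix an arbitrary $x_0 \in \reals^d$ and prove continuity of the stated function at $x_0$ by a dominated convergence argument. The first thing to note is that since $0 \leq m \leq k-1$ we have $k-m \geq 1$, so the scalar map $t \mapsto t_+^{k-m}$ is continuous on all of $\reals$ (indeed $(k-m-1)$-times continuously differentiable). Consequently, for each fixed $(w,b) \in \nonnegRD$, whenever $x_j \to x_0$ we get $c(w,b)(w^\top x_j - b)_+^{k-m} \to c(w,b)(w^\top x_0 - b)_+^{k-m}$, i.e.\ pointwise convergence of the integrands in $(w,b)$.

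Next I would produce an integrable dominating function, and this is where the structure of the domain $\nonnegRD$ --- namely $\|w\|_2 = 1$ and $b \geq 0$ --- is used. Restricting attention to $x$ in a bounded ball $\{\|x\|_2 \leq R\}$ containing $x_0$ and the tail of the sequence $x_j$, for any $(w,b) \in \nonnegRD$ we have $w^\top x - b \leq \|w\|_2\|x\|_2 - b = \|x\|_2 - b \leq R$, and therefore $0 \leq (w^\top x - b)_+ \leq R$. Crucially this bound is uniform in $b$, which is exactly what tames the otherwise unbounded $b$-direction of integration. Combined with the hypothesis $|c(w,b)| \leq C$, this gives $\bigl|c(w,b)(w^\top x - b)_+^{k-m}\bigr| \leq C R^{k-m}$ for every such $x$ and every $(w,b)$; and the constant $C R^{k-m}$ is $|\mu|$-integrable since $\mu$ is a finite signed measure, i.e.\ $\|\mu\|_{\TV} < \infty$ (the integrand being $\mu$-measurable, as $c$ is tacitly assumed measurable and $(w,b)\mapsto (w^\top x - b)_+^{k-m}$ is continuous).

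With pointwise convergence of the integrands and this uniform integrable bound in hand, the dominated convergence theorem yields
\[
  \int_\nonnegRD c(w,b)(w^\top x_j - b)_+^{k-m}\,\de\mu(w,b)
  \;\longrightarrow\;
  \int_\nonnegRD c(w,b)(w^\top x_0 - b)_+^{k-m}\,\de\mu(w,b),
\]
which is precisely continuity of the stated function at $x_0$; since $x_0$ was arbitrary, this finishes the proof. I do not anticipate any genuine obstacle here: the argument is a routine application of dominated convergence, and the only two points that require any care are the $b$-uniform estimate $(w^\top x - b)_+ \leq \|x\|_2$ noted above (which handles the unbounded offset variable) and the use of $k-m \geq 1$ to guarantee that the truncated power is continuous --- as opposed to the indicator function one would get if $k - m = 0$.
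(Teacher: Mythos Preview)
Your proof is correct, but it proceeds by a different route than the paper. The paper argues directly by bounding $|f(x)-f(y)|$: it moves the absolute value inside the integral, applies the telescoping estimate $|a_+^{k-m}-b_+^{k-m}|\leq (k-m)(a_+^{k-m-1}+b_+^{k-m-1})|a-b|$ (Lemma~\ref{lem:bound-power-of-relu}), and then uses $\|w\|_2=1$, $b\geq 0$, and $\|\mu\|_{\TV}<\infty$ to obtain an explicit local Lipschitz bound of the form $|f(x)-f(y)|\leq (k-m)C\|\mu\|_{\TV}(\|x\|_2^{k-m-1}+\|y\|_2^{k-m-1})\|x-y\|_2$. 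You instead use dominated convergence: pointwise convergence of the integrand (from continuity of $t\mapsto t_+^{k-m}$ when $k-m\geq 1$) plus the uniform bound $(w^\top x - b)_+ \leq \|x\|_2$ to get a constant dominating function on any ball. Both arguments hinge on the same structural observation about the domain $\nonnegRD$, but the paper's version gives a quantitative modulus of continuity and reuses its auxiliary lemma, whereas yours is more elementary and self-contained, avoiding the telescoping lemma entirely.
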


\begin{proof}[Proof of Lemma \ref{lem:rbv-ftn-continuous}]
	Denote $f(x) = \int_\nonnegRD c(w,b)(w^\T x-b)_+^{k-m}\;\de\mu(w,b)$.
	Let $|\mu|$ be the total variation measure of the signed measure $\mu$. Note that $\|\mu\|_\TV = \|\,|\mu|\,\|_\TV$.
	For $x,y \in \reals^d$, 
	\begin{align*}
	    \big|f&(x)-f(y)\big| \le \int_\nonnegRD |c(w,b)|\, \big|(w^\T x - b)_+^{k-m} - (w^\T y - b)_+^{k-m} \big| \;\de|\mu|(w,b)
	    \\
	    &\le (k-m)\int_\nonnegRD |c(w,b)| \, \big((w^\T x - b)_+^{k-m-1} + (w^\T y - b)_+^{k-m-1}\big)|w^\T(x-y)|\;\de|\mu|(w,b)
	    \\
		&\le (k-m)\| x - y \|\,\int_\nonnegRD |c(w,b)|\, \big((\|w\|_2\|x\|_2)^{k-m-1} + (\|w\|_2\|y\|_2)^{k-m-1}\big) \;\de\mu(w,b) 
		\\
		&\le (k-m)\| x - y \|\, C\, \|\mu\|_\TV\,  \big(\|x\|_2^{k-m-1} + \|y\|_2^{k-m-1}\big) ,
	\end{align*}
	where in the second inequality we have used Lemma \ref{lem:bound-power-of-relu}.
	Since the right-hand side goes to $0$ as $y \rightarrow x$, the proof is complete.
\end{proof}

%!TEX root = main-arxiv.tex

\section{Sensitivity analysis: log transform}
\label{supp:sensitivity-log}

In Equation \eqref{eq:opt-problem}, which is the basis for the experiments in the main  
paper, we use a log transform of the IPM term in the criterion. The current
section compares the performance of (local) optimizers of this problem, which we
call the ``log'' problem, for short, to those of
\begin{equation}
\label{eq:opt-problem-2}
\min_{\substack{f = f_{(a_j,w_j,b_j)_{j=1}^N} \\ b_j \geq 0, \, j = 1,\dots,N}}
\, - \frac{1}{kN} \big| \empP(f) - \empQ(f) \big| 
+ \frac{\lambda}{k} \bigg( \frac{1}{N} \sum_{i=1}^N |a_i| \|w_i\|_2^k \bigg)^2, 
\end{equation}
which we call the ``no-log'' problem, for short. In the above display, we can
see that there is no log transform on the IPM term, but in addition, the penalty
on the \smash{$\RTVk$} seminorm has been squared. This is done because without
such a transformation, one can show that the problem (IPM term plus
\smash{$\RTVk$} seminorm) does not attain its infimum, unless $\lambda$ is
chosen very carefully so that the infimum in zero. By squaring the penalty, the
criterion in \eqref{eq:opt-problem-2} is coercive, which guarantees (since it is
also continuous) that it will attain its infimum. In both
\eqref{eq:opt-problem} and \eqref{eq:opt-problem-2}, we fix $\lambda = 1$, and 
use \texttt{torch.optim.Adam} with \texttt{betas} parameter $(0.9, 0.99)$. We
explore the behavior for different learning rates and numbers of
iterations. Throughout, we focus on the ``var-one'' setting described in 
Table \ref{table:experiments-setup}, and use $N=10$ neurons. The summary of our
findings is as follows.    

\begin{itemize}
\item Optimizing the ``log'' problem typically results in a larger IPM value
  compared to the ``no-log'' problem, especially for larger $k,d$, and
  especially under the null. 

\item Optimizing the ``log'' and ``no-log'' problems generally results in
  similar ROC curves, however, for larger $k,d$, the ``no-log'' ROC curves can
  actually be slightly better. This actually appears to be driven by the fact
  that ``no-log'' optimization is \emph{relatively worse} in terms of the IPM
  values obtained under the null, which gives it a slightly greater separation
  and slightly higher power. 

\item The ``log'' problem is more robust to the choice of learning rate, both
  in terms of the IPM values and ROC curves obtained, especially for larger
  $k,d$. 

\item The ``log'' problem typically shows faster convergence (number of
  iterations required to approach a local optimum), especially for larger
  $k,d$. 
\end{itemize}

Figures \ref{fig:mmd-comparison-lognolog}--\ref{fig:roc-curves-diff-iter}
display the results from our sensitivity analyses. In each one, the figure
caption explains the salient points about the setup and takeaways. We remark
that results for $k=4$ (not included for brevity) show qualitatively similar but
even more extreme behavior.

\begin{figure}[p]
\centering
\vspace{-10pt}
\begin{subfigure}[b]{\textwidth}
\centering
\includegraphics[width=0.75\textwidth]{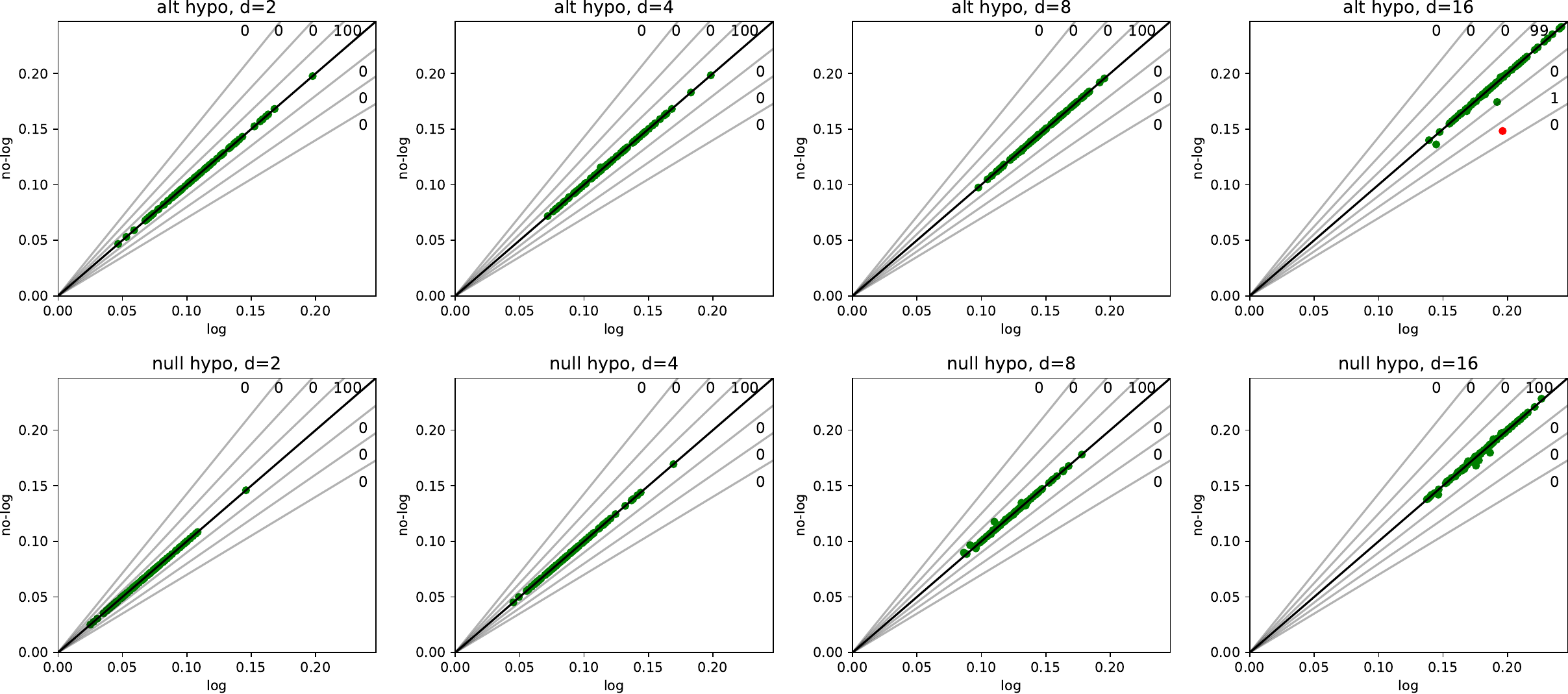}
\caption{$k=1$}
\label{fig:mmd-scatter-lognolog-k1-iter1200}
\end{subfigure}
	
\medskip
\begin{subfigure}[b]{\textwidth}
\centering
\includegraphics[width=0.75\textwidth]{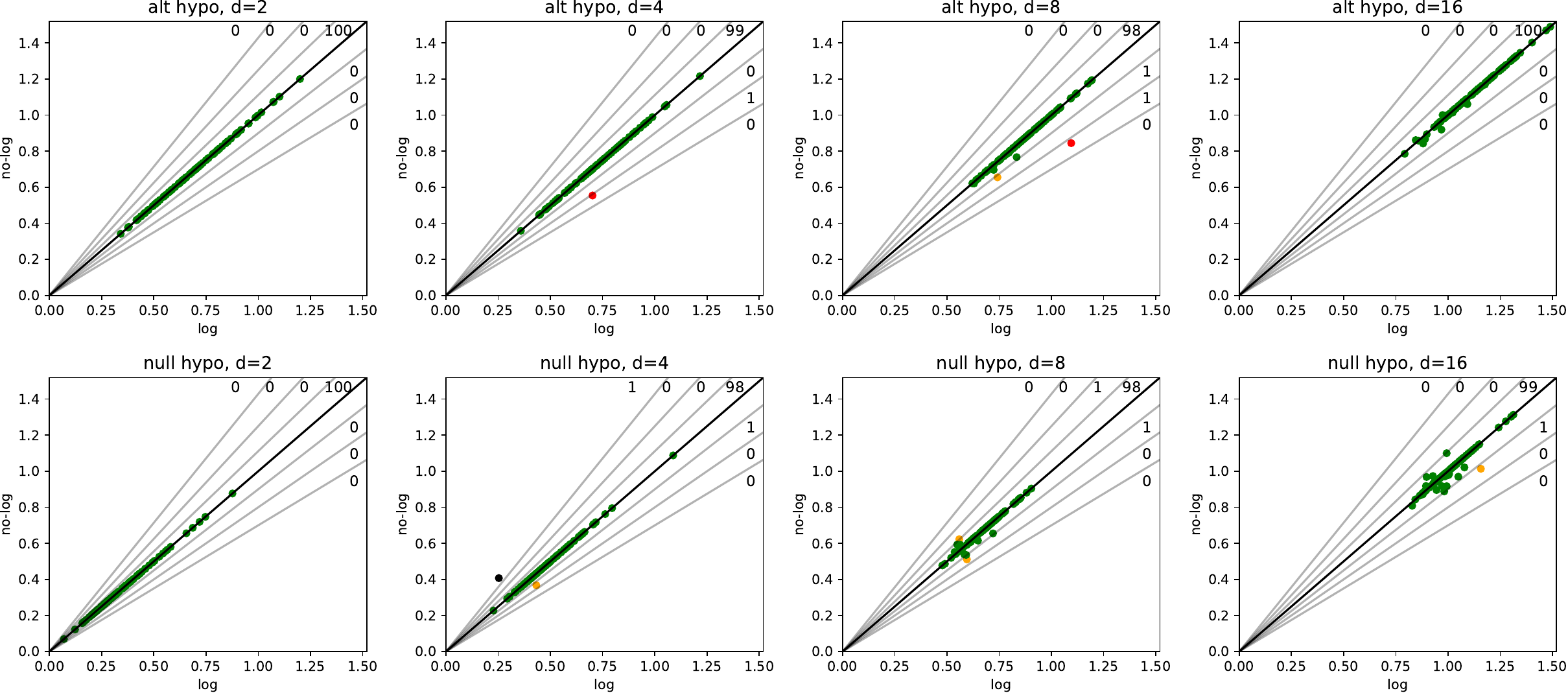}
\caption{$k=3$}
\label{fig:mmd-scatter-lognolog-k3-iter1200}
\end{subfigure}
\caption{IPM values obtained by optimizing the ``log'' (x-axis) and ``no-log''
  (y-axis) problems. Each point represents a set of samples drawn from
  $\popP,\popQ$, and the result of running $T=1200$ iterations with learning
  rate $0.01$, for each criterion. (This learning rate was chosen to be
  favorable to the ``no-log'' problem.) Points below the diagonal mean that the
  ``log'' criterion results in a larger IPM value, which we see is especially
  prominent for larger $k,d$, and more prominent under the null.} 
\label{fig:mmd-comparison-lognolog}

\bigskip
\includegraphics[width=0.75\textwidth]{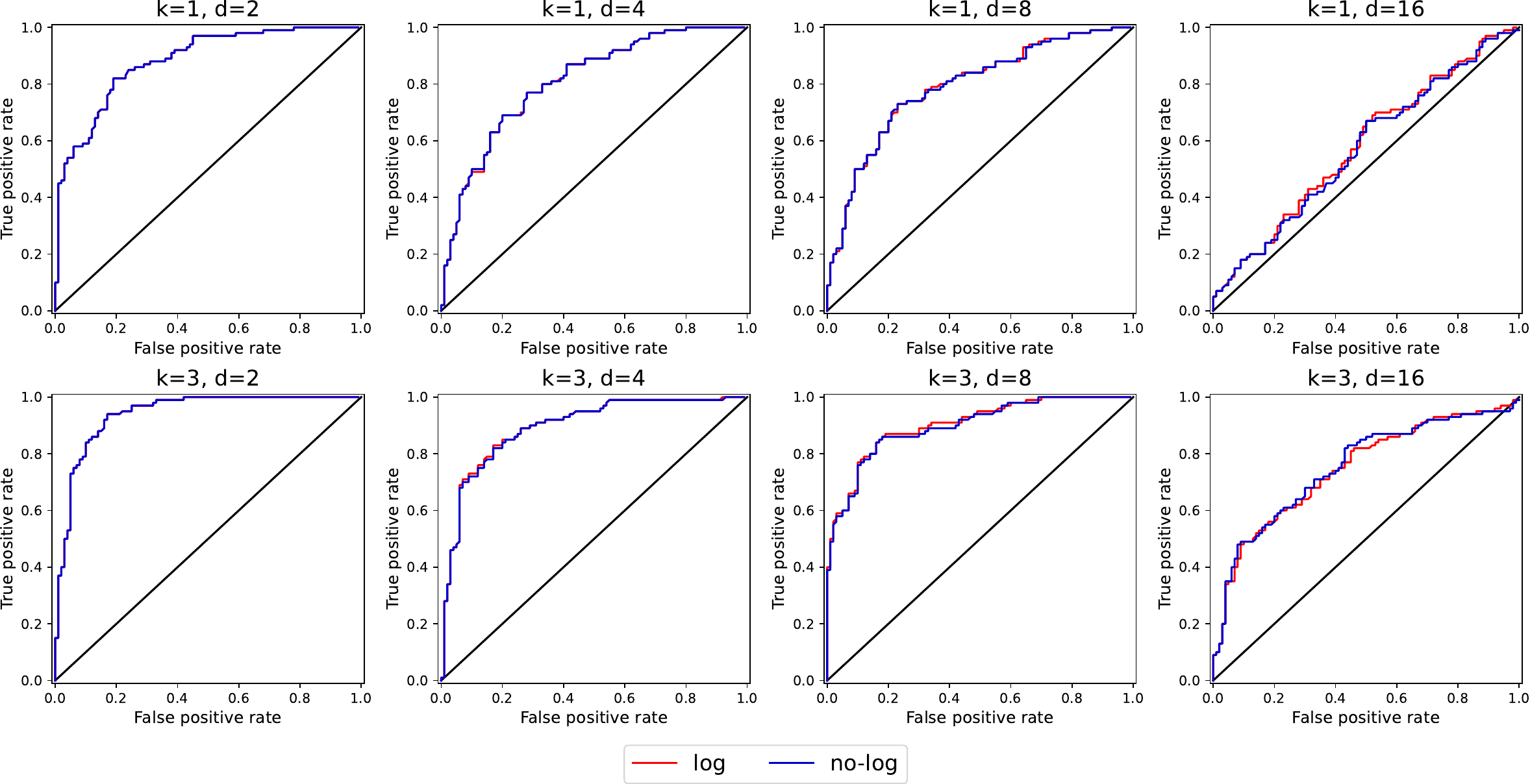}
\caption{ROC curves from the ``log'' and ``no-log'' problems, for the same data
  as in Figure \ref{fig:mmd-comparison-lognolog}. They are very similar except
  for the largest $k,d$ pair, where the ``no-log'' ROC curve is slightly
  better. Inspecting Figure \ref{fig:mmd-scatter-lognolog-k3-iter1200}, this is
  likely due to the fact that the null IPM values here are relatively smaller
  (poorer optimization).}
	\label{fig:roc-lognolog-k13-lr001-iter1200}
\end{figure}

\begin{figure}[p]
\centering
\begin{subfigure}[b]{\textwidth}
\centering
\includegraphics[width=0.8\textwidth]{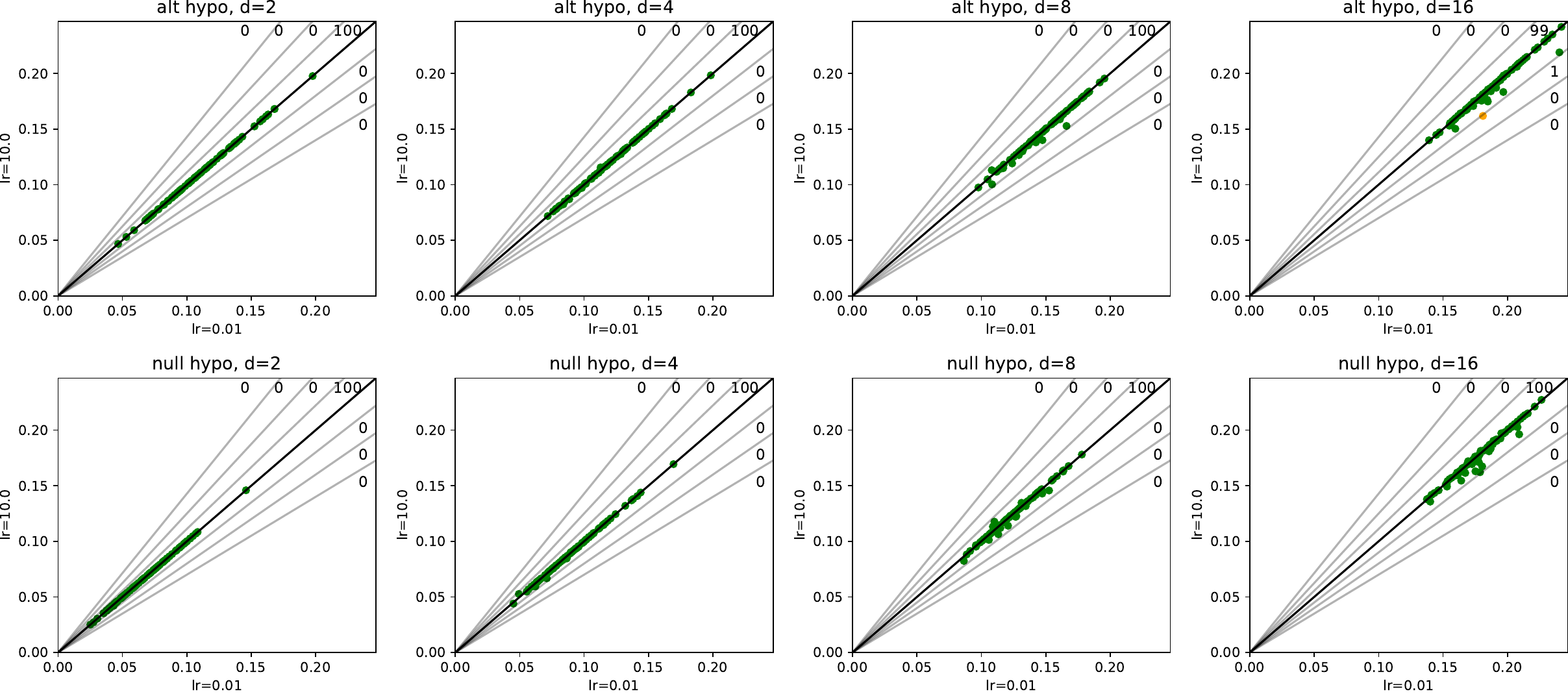}
\caption{$k=1$, log}
\end{subfigure}

\medskip
\begin{subfigure}[b]{\textwidth}
\centering
\includegraphics[width=0.8\textwidth]{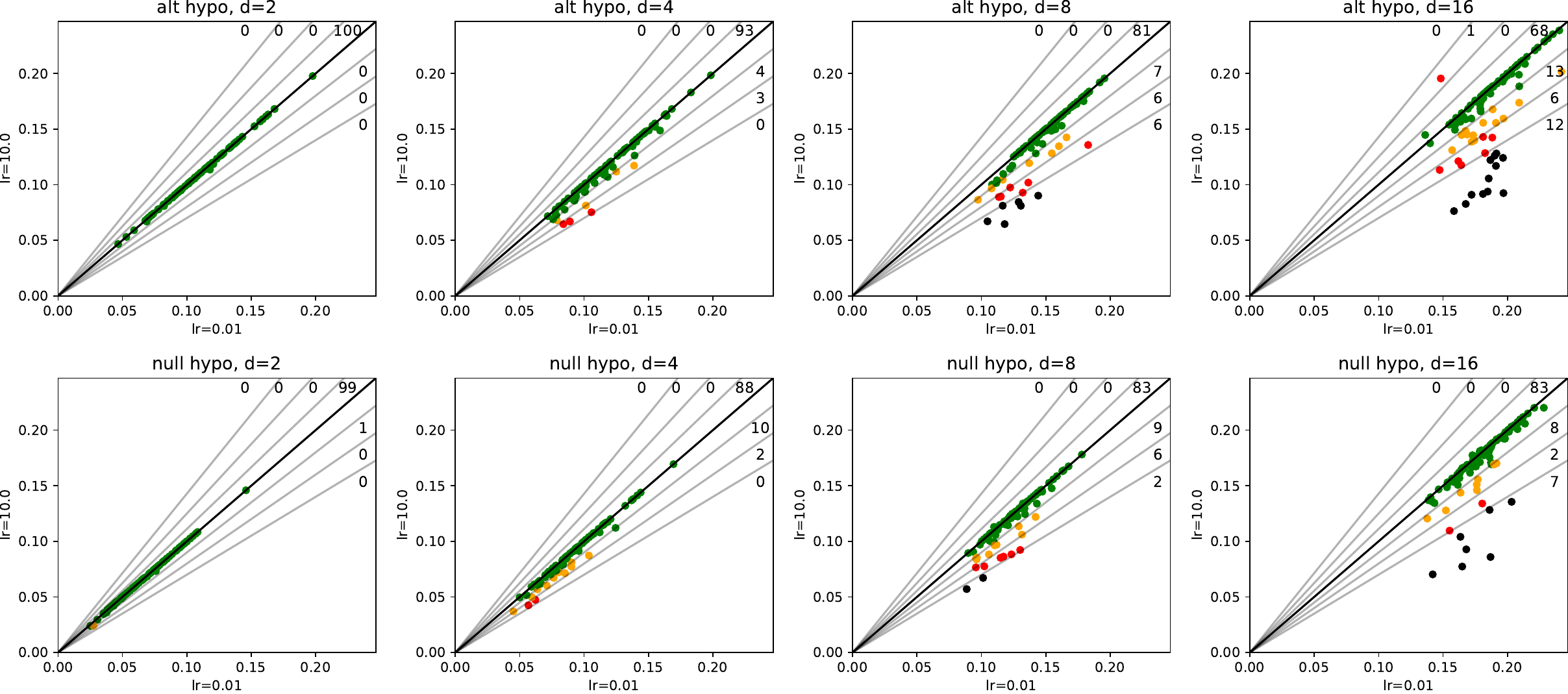}
\caption{$k=1$, no-log}
\end{subfigure}

\caption{IPM values corresponding to learning rates $0.01$ (x-axis) and $10$
  (y-axis), for the same data as in Figure
  \ref{fig:mmd-comparison-lognolog}. There is little to no difference for the
  ``log'' problem (points near the diagonal), and a much bigger difference for
  the ``no-log'' problem, where in most cases the larger learning rate is worse 
  (points below the diagonal).}  
\label{fig:robust-lr-scatter-1}
\end{figure}

\begin{figure}[p]
\begin{subfigure}[b]{\textwidth}
\centering
\includegraphics[width=0.8\textwidth]{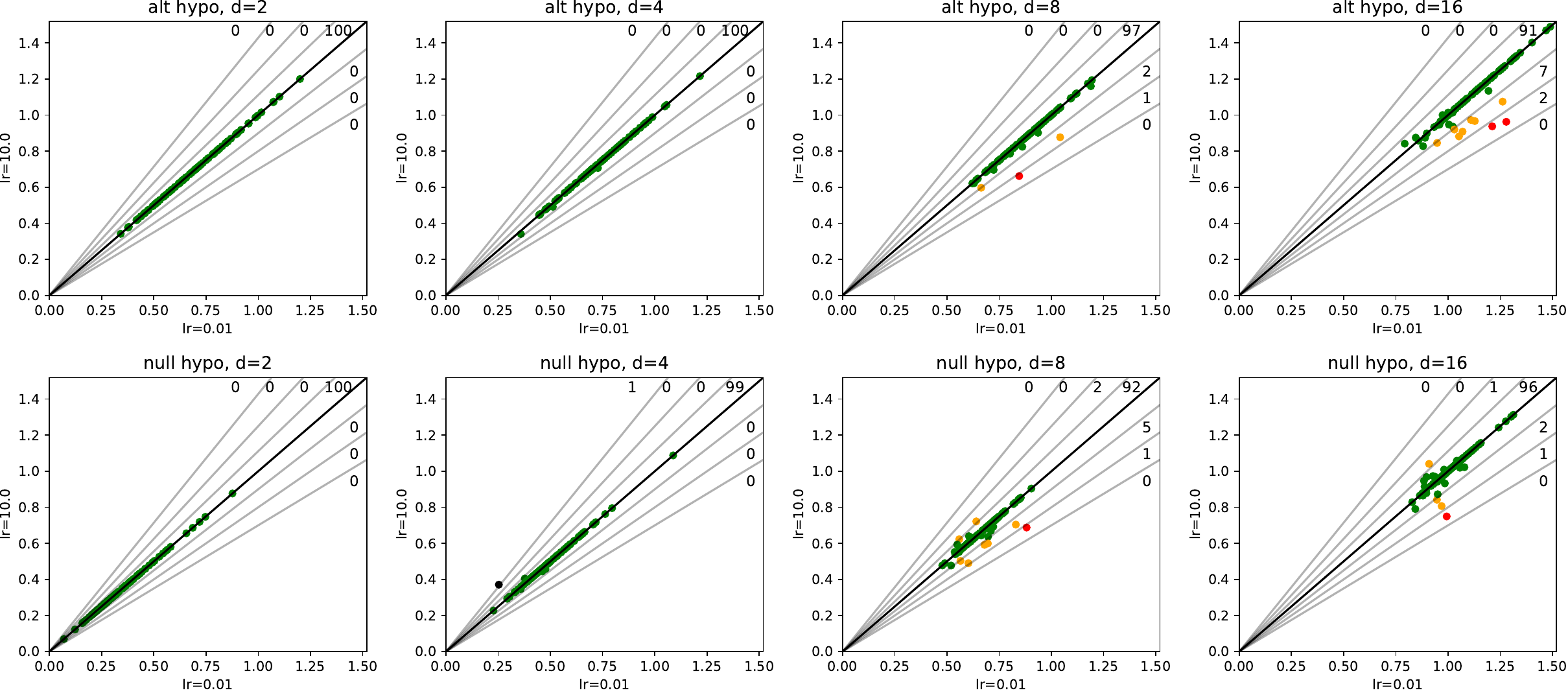}
\caption{$k=3$, log}
\end{subfigure}	

\medskip
\begin{subfigure}[b]{\textwidth}
\centering
\includegraphics[width=0.8\textwidth]{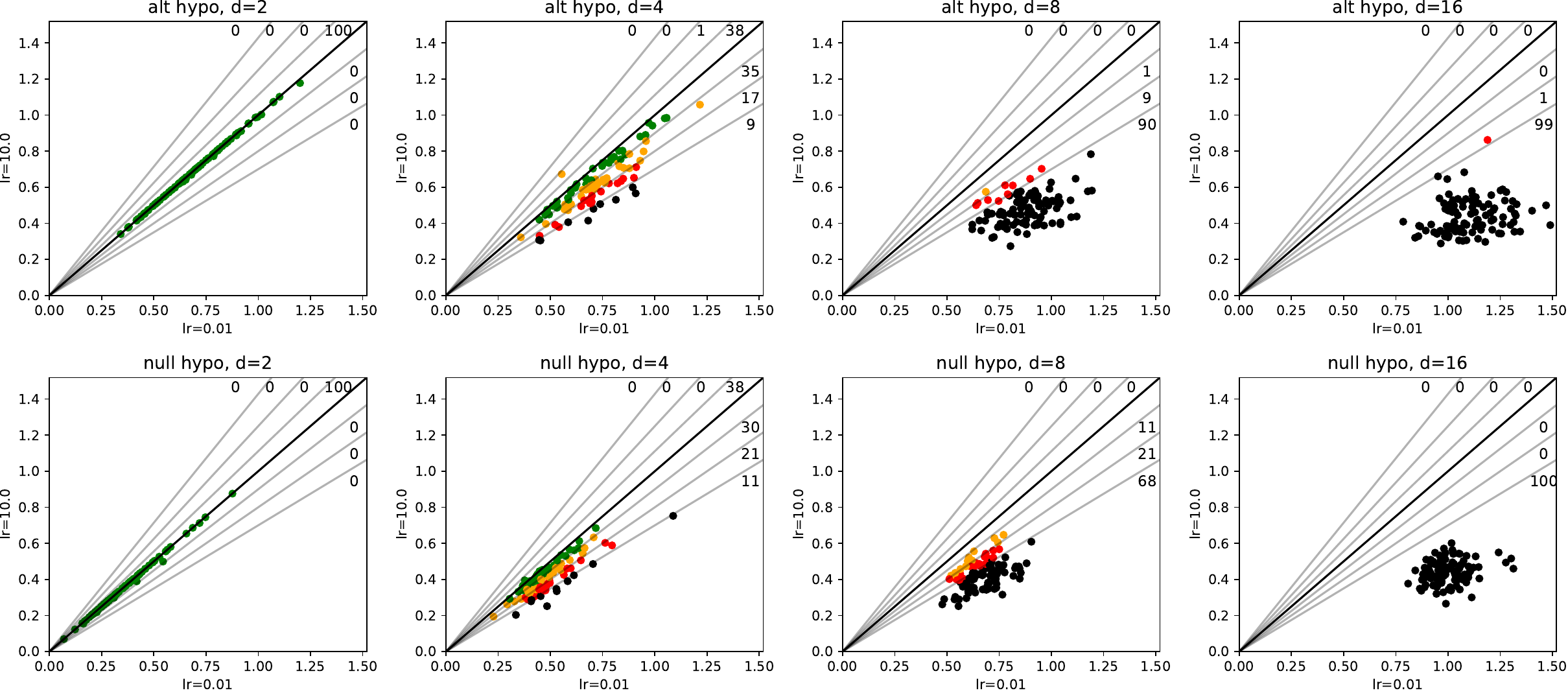}
\caption{$k=3$, no-log}
\end{subfigure}	
	
\caption{IPM values corresponding to learning rates $0.01$ (x-axis) and $10$
  (y-axis), for the same data as in Figure
  \ref{fig:mmd-comparison-lognolog}. There is little to moderate for the ``log''
  problem (points around the diagonal), and a much bigger difference for the
  ``no-log'' problem, where in nearly all cases the larger learning rate is
  clearly much worse (points below the diagonal).}     
\label{fig:robust-lr-scatter-2}
\end{figure}

\begin{figure}[p]
\centering
\begin{subfigure}[b]{\textwidth}
\centering
\includegraphics[width=0.8\textwidth]{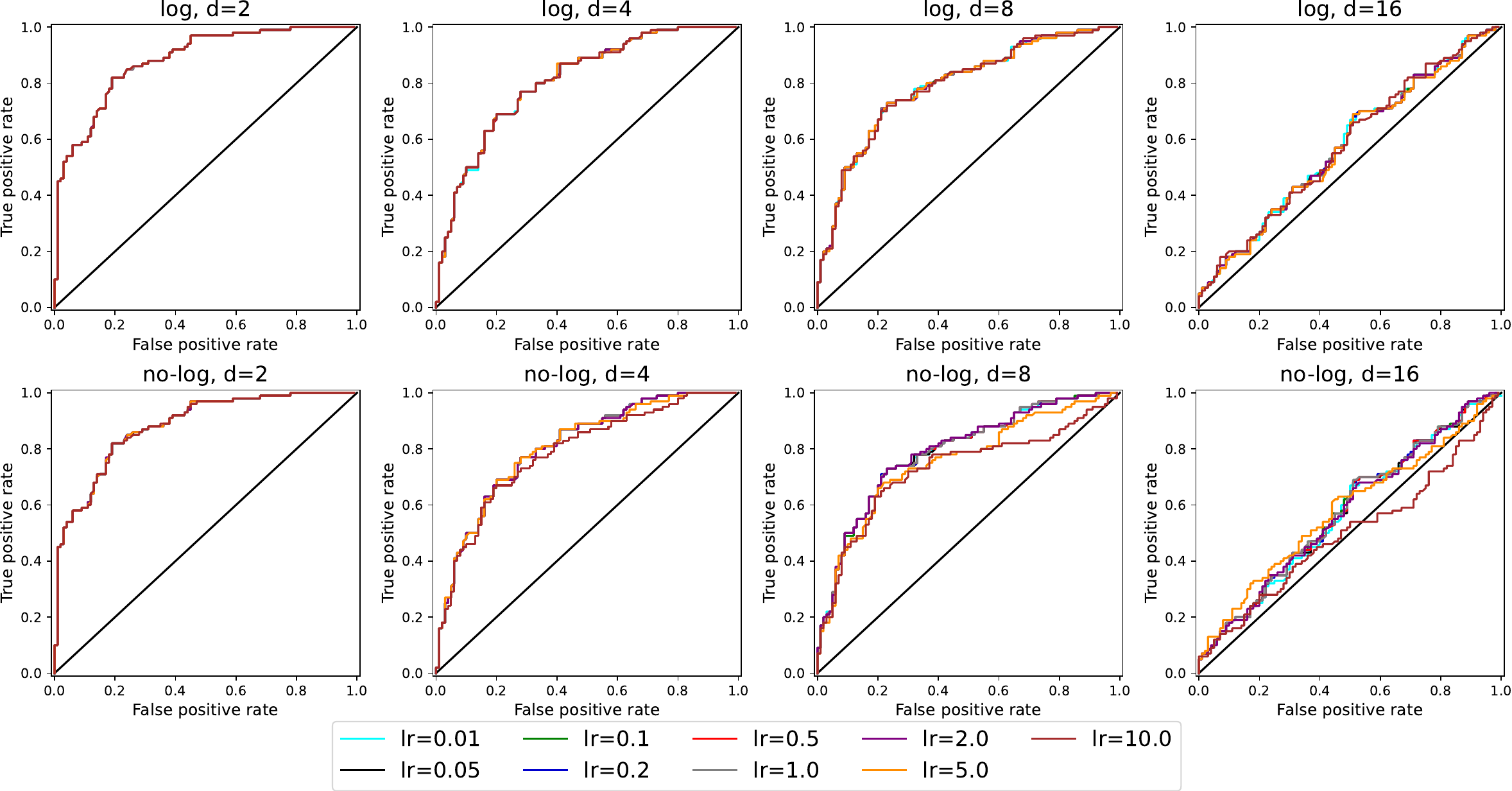}
\caption{$k=1$}
\label{fig:robust-lr-k1-iter1200}
\end{subfigure}
	
\medskip
\begin{subfigure}[b]{\textwidth}
\centering
\includegraphics[width=0.8\textwidth]{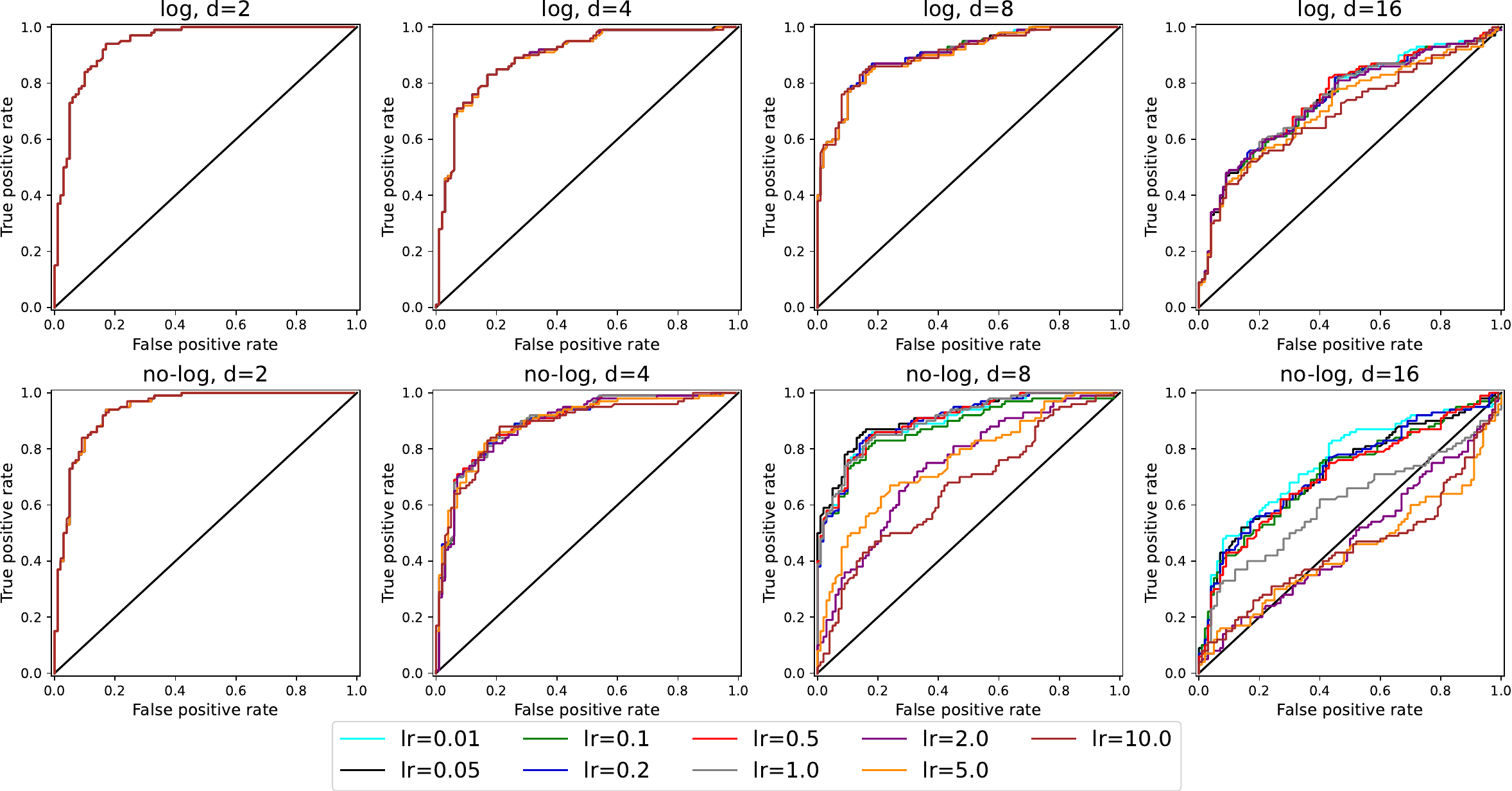}
\caption{$k=3$}
\label{fig:roubst-lr-k3-iter1200}
\end{subfigure}
	
\caption{IPM values corresponding to learning rates from $0.01$ to $10$, for
  the same data as in Figure \ref{fig:mmd-comparison-lognolog}. In all cases,
  the ``log'' problem results in much more stable ROC curves with respect to the
  choice of learning rate, especially for larger $k,d$.}
\label{fig:robust-lr-roc}
\end{figure}

\begin{figure}[p]
\centering
\begin{subfigure}[b]{\textwidth}
\centering
\includegraphics[width=0.8\textwidth]{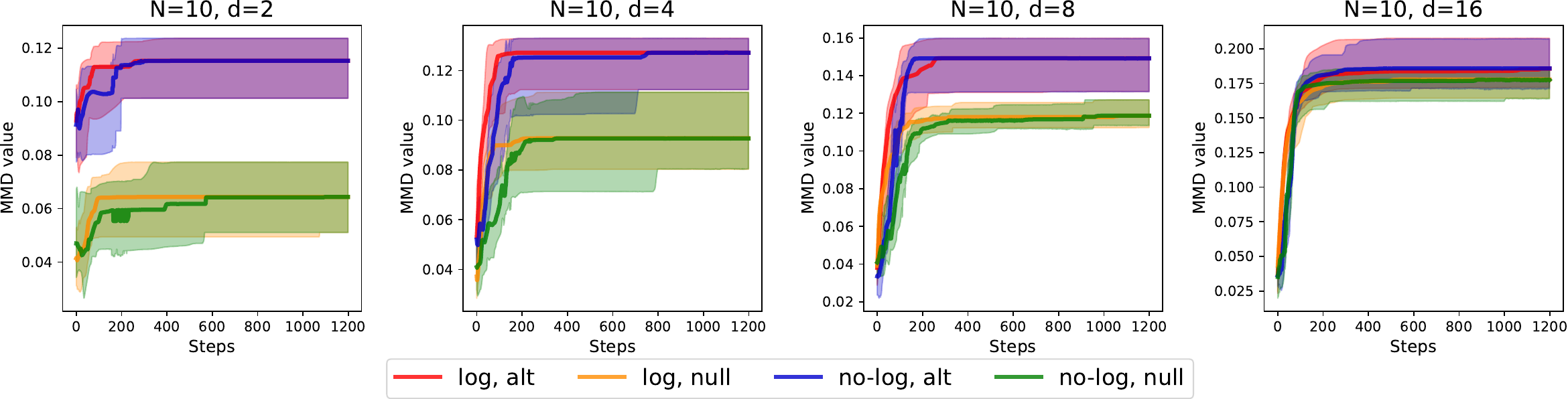}
\caption{$k=1$, $\text{lr}=0.01$}
\end{subfigure}

\medskip
\begin{subfigure}[b]{\textwidth}
\centering
\includegraphics[width=0.8\textwidth]{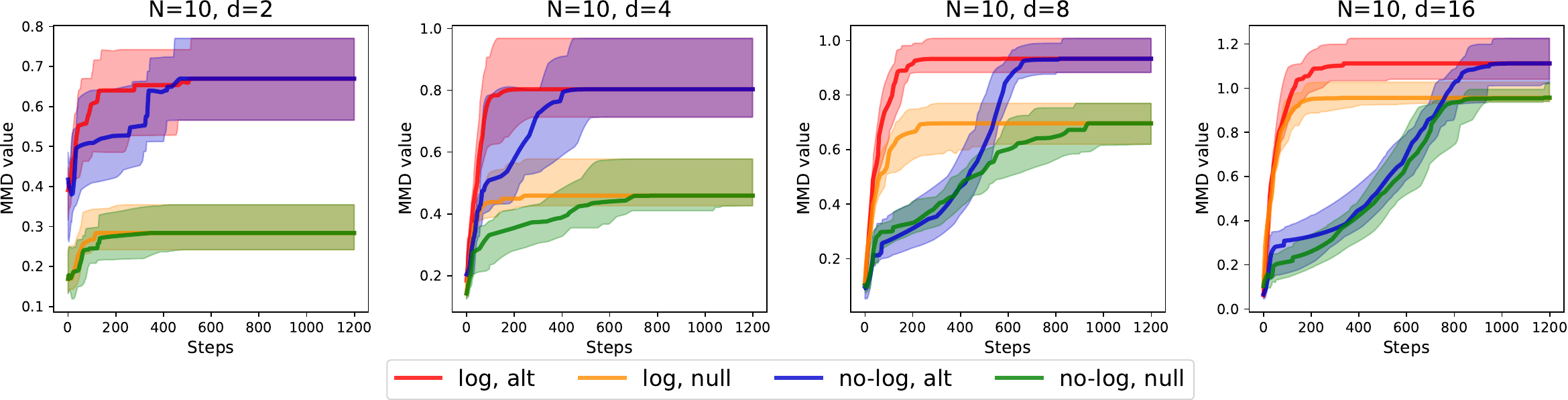}
\caption{$k=3$, $\text{lr}=0.01$}
\end{subfigure}

\medskip
\begin{subfigure}[b]{\textwidth}
\centering
\includegraphics[width=0.8\textwidth]{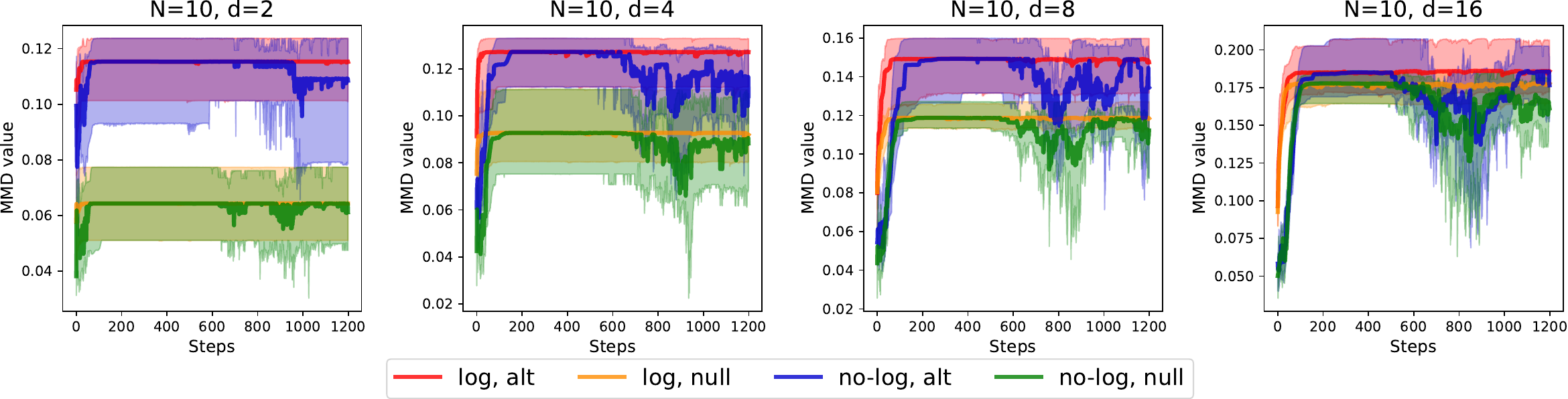}
\caption{$k=1$, $\text{lr}=0.5$}
\end{subfigure}

\medskip
\begin{subfigure}[b]{\textwidth}
\centering
\includegraphics[width=0.8\textwidth]{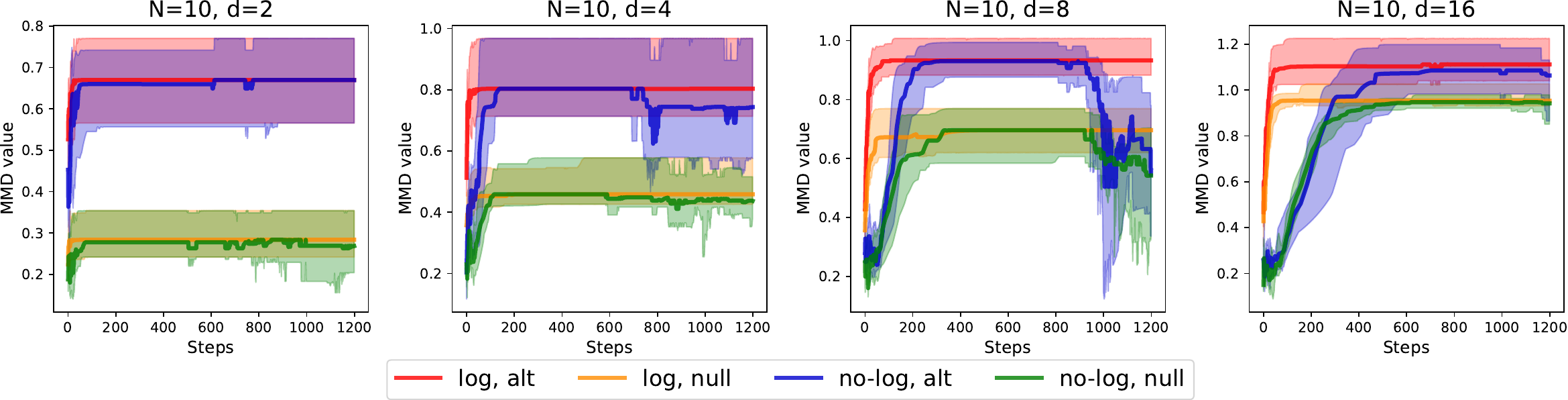}
\caption{$k=3$, $\text{lr}=0.5$}
\end{subfigure}

\caption{IPM value as a function of iteration for the ``log'' and ``no-log''
  problems. (cf. the $y$-axis says the MMD instead of the IPM.) This is done over 20 repetitions (draws of samples from
  $\popP,\popQ$); solid lines represent the median and shaded areas the
  interquartile range (computed over repetitions), at each iteration. For
  smaller $k,d$, the ``log'' and ``no-log'' convergence speeds are fairly
  similar, but for larger $k,d$, the ``no-log'' convergence speed is clearly
  slower. For the larger learning rate, we can also see that in many instances
  the ``no-log'' iterations fail to converge, and bounce up and down in IPM
  value, without staying at a local optimum.}
\label{fig:convergence-curve-k-lr}
\end{figure}

\begin{figure}[p]
\centering
\begin{subfigure}[b]{\textwidth}
\centering
\includegraphics[width=0.8\textwidth]{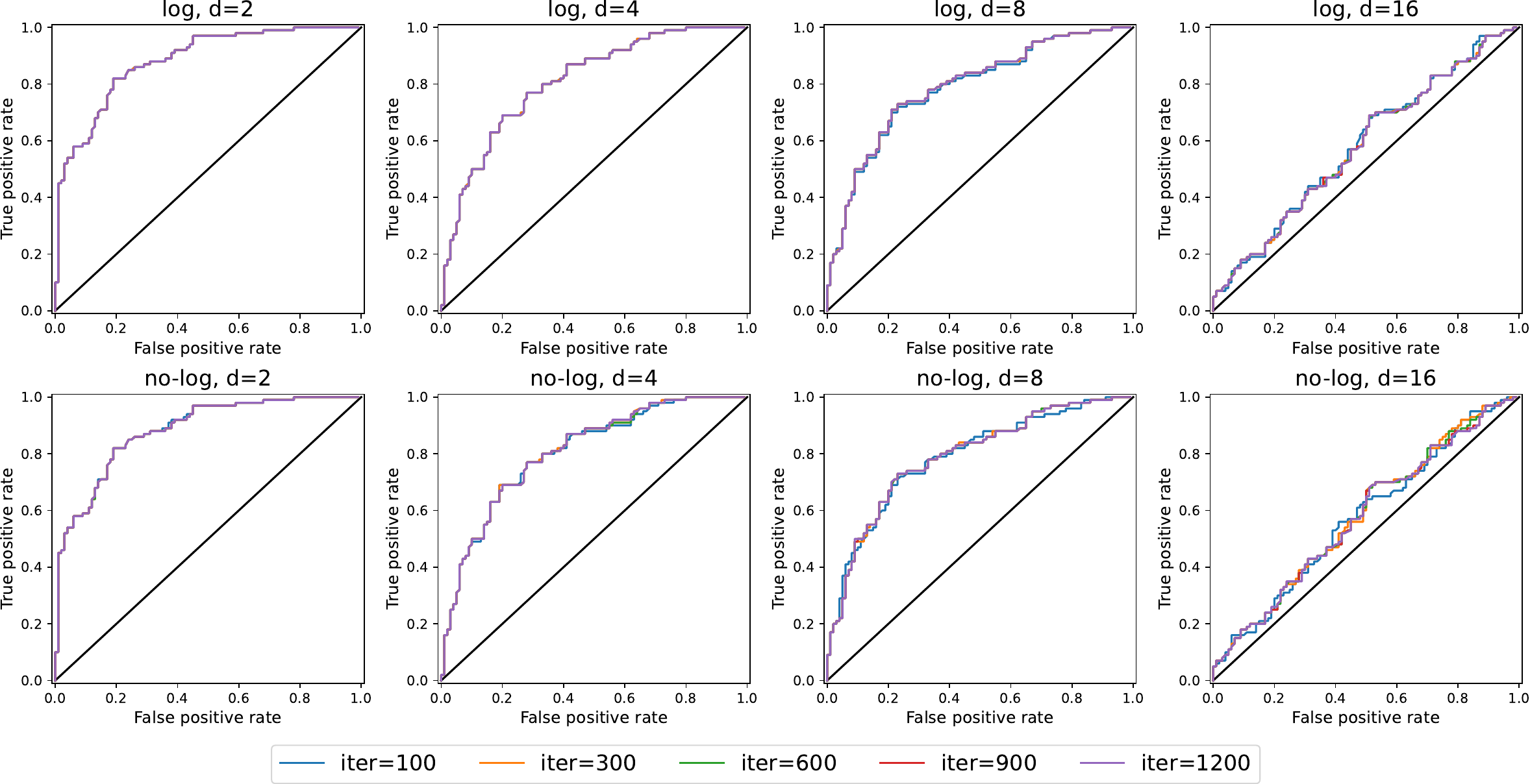}
\caption{$k=1$}
\end{subfigure}

\medskip
\begin{subfigure}[b]{\textwidth}
\centering
\includegraphics[width=0.8\textwidth]{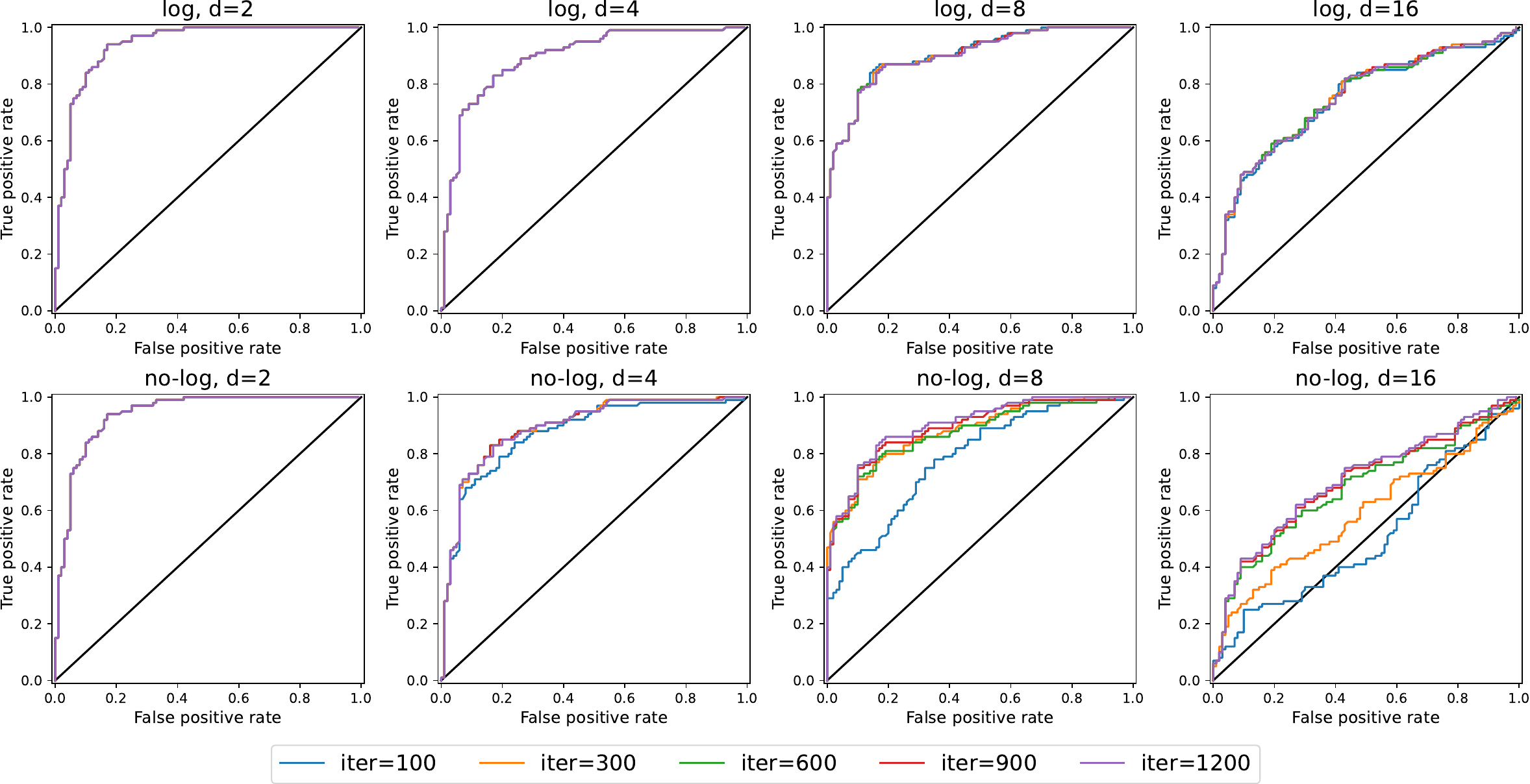}
\caption{$k=3$}
\end{subfigure}

\caption{ROC curves for the ``log'' and ``no-log'' problems, at different
  iteration numbers, over the same data in Figure
  \ref{fig:convergence-curve-k-lr}. The ``log'' curves are quite robust to the
  choice of the number of iterations, whereas the ``no-log'' ones are not, especially for
  larger $k,d$. In other words, the ``log'' problem typically shows faster convergence (number of
  iterations required to approach a local optimum), especially for larger $k,d$. } 
\label{fig:roc-curves-diff-iter}
\end{figure}

\section{Sensitivity analysis: number of neurons}
\label{supp:sensitivity-N}

In this section, we now turn to investigate the role of the number of neurons
$N$. We stick with the ``log'' problem \eqref{eq:opt-problem} as in the
experiments in the main text (with the previous section providing evidence that
the log transform leads to greater robustness in many regards). The setup is
essentially the same as that used in the last section, except that we vary the
number of neurons from $N=1$ to $N=20$. The summary of our findings is as
follows.  

\begin{itemize}
\item Using a larger number of neurons often improves the achieved IPM value,
  especially for larger $k,d$; however, going past $N=10$ does not seem to make
  a big difference in our experiments.

\item Using a larger number of neurons generally results in a better ROC curve, 
  especially for larger $k,d$; again, going past $N=10$ does not seem to make 
  a big difference in our experiments.

\item Using a larger number of neurons can result in faster convergence, though
  not dramatically.

\item Using a larger number of neurons is typically more robust to the choice of 
  learning rate, both in terms of the IPM values and ROC curves obtained,
  especially for larger $k,d$. 
\end{itemize}

Figure \ref{fig:N-scatter-k3-lr05-N210}--\ref{fig:robust-lr-varyingN} display  
the results from our sensitivity analyses. As before, each figure caption
presents the salient points about the setup and takeaways, and we remark that
results for $k=4$ (not included for brevity) show qualitatively similar but even
more extreme behavior.  
	
\begin{figure}[p]
\centering
\includegraphics[width=0.8\textwidth]{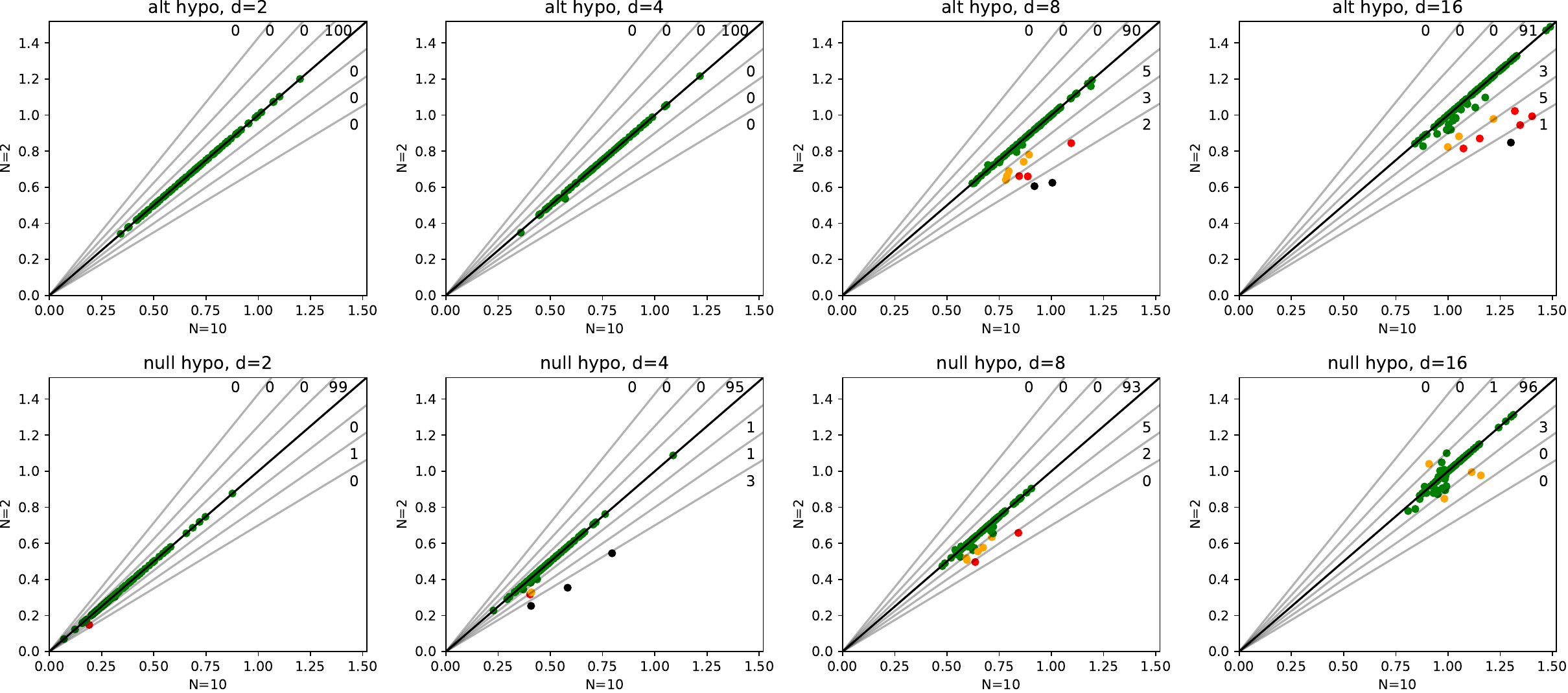}
\caption{IPM values when $N=10$ (x-axis) and $N=2$ (y-axis). Each point
  represents a set of samples drawn from $\popP,\popQ$, and the result of
  running $T=1200$ iterations with learning rate $0.5$, when $k=3$. Points below
  the diagonal mean that $N=10$ results in a larger IPM value, 
  which we see is especially prominent for larger $k,d$, and more prominent
  under the null.}  
\label{fig:N-scatter-k3-lr05-N210}

\bigskip
\includegraphics[width=0.8\textwidth]{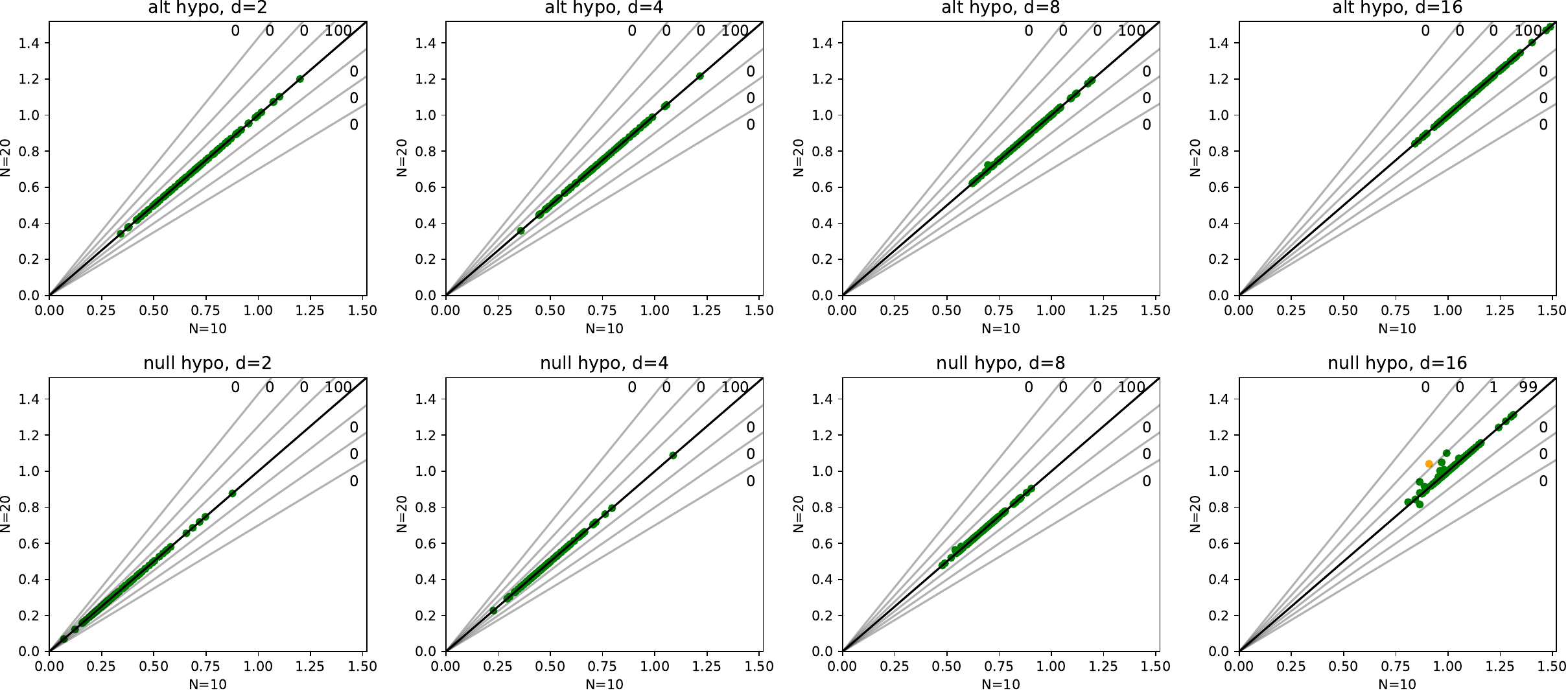}
\caption{IPM values when $N=10$ (x-axis) and $N=20$ (y-axis). The setup is as 
  above. Here we see no clear improvement in moving from $N=10$ to $N=20$
  neurons.} 
\end{figure}

\begin{figure}[p]
\centering
\includegraphics[width=0.8\textwidth]{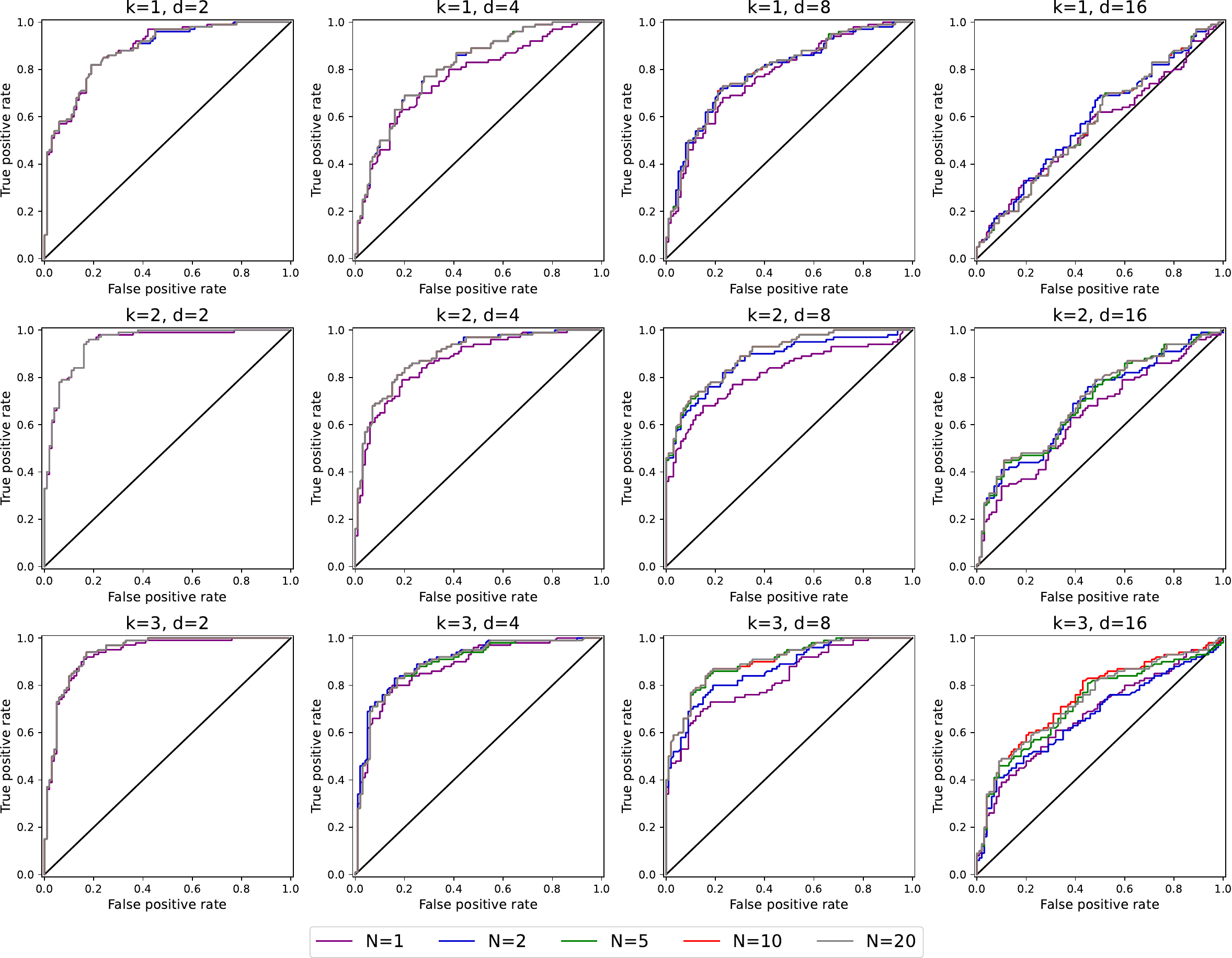}
\caption{ROC curves as the number of neurons varies from $N=1$ to $N=20$, for
  the same data as in Figure \ref{fig:N-scatter-k3-lr05-N210}. The ROC curves
  look generally better for a larger number of neurons, especially for larger
  $k,d$.} 
\label{fig:N-ROC-lr05-iter1200}
\end{figure}

\begin{figure}[p]
\centering
\begin{subfigure}[b]{\textwidth}
\centering
\includegraphics[width=0.75\textwidth]{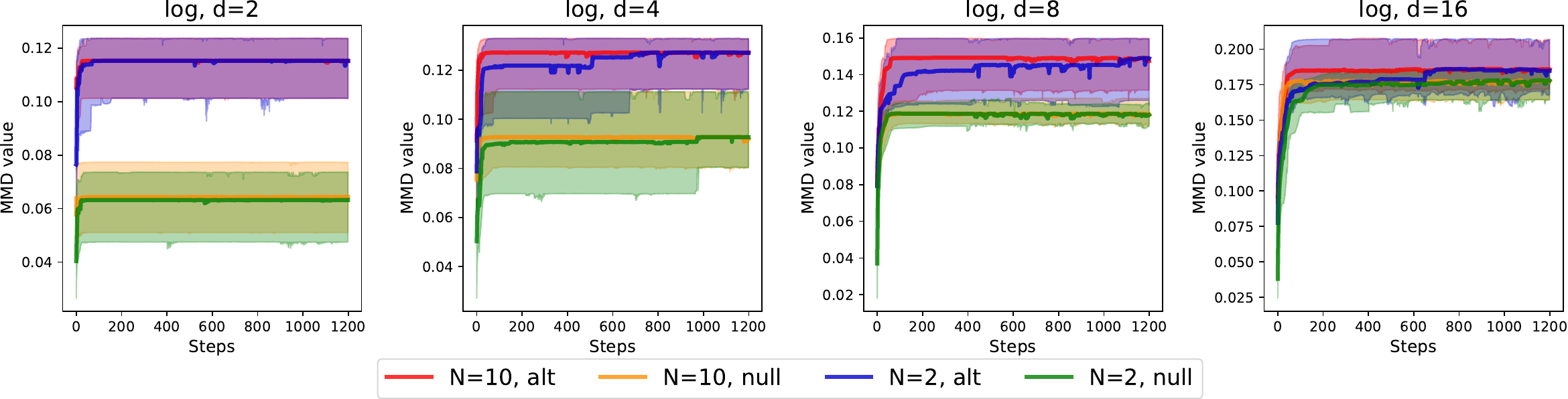}
\caption{$k=1$, $\text{lr}=0.5$}
\end{subfigure}

\medskip
\begin{subfigure}[b]{\textwidth}
\centering
\includegraphics[width=0.75\textwidth]{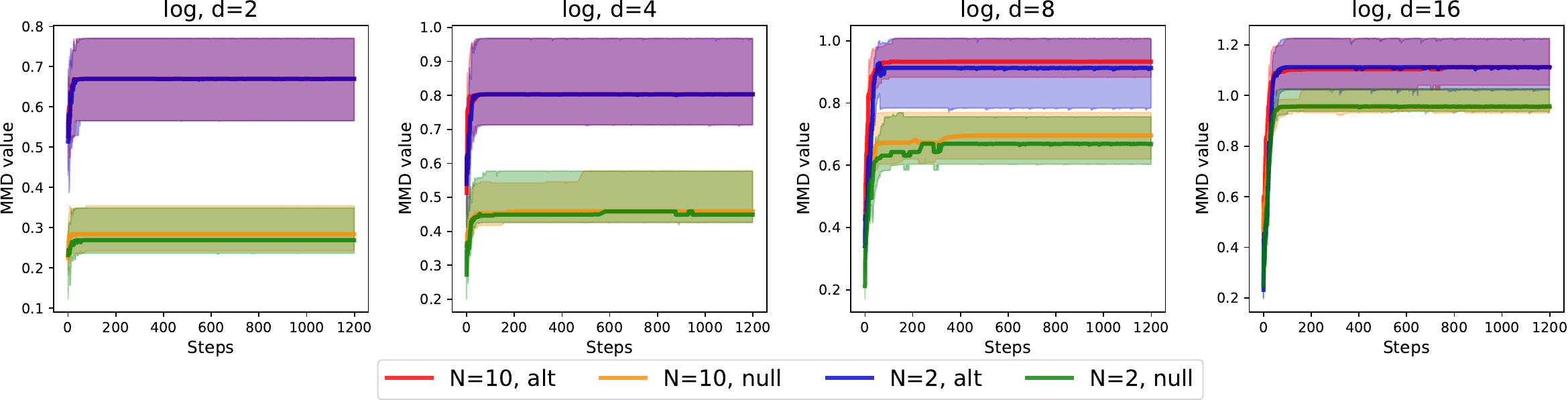}
\caption{$k=3$, $\text{lr}=0.5$}
\end{subfigure}
\caption{IPM value as a function of iteration for $N=10$ and $N=2$ neurons. This
  is carried out over 20 repetitions (draws of samples from $\popP,\popQ$);
  solid lines represent the median and shaded areas the interquartile range
  (computed over repetitions), at each iteration. The convergence speed is
  sometimes slightly faster with $N=10$ neurons but the differences are not
  large.}  
\label{fig:convergence-curve-k-lr-N210}
\end{figure}

\begin{figure}[p]
\centering
\includegraphics[width=0.75\textwidth]{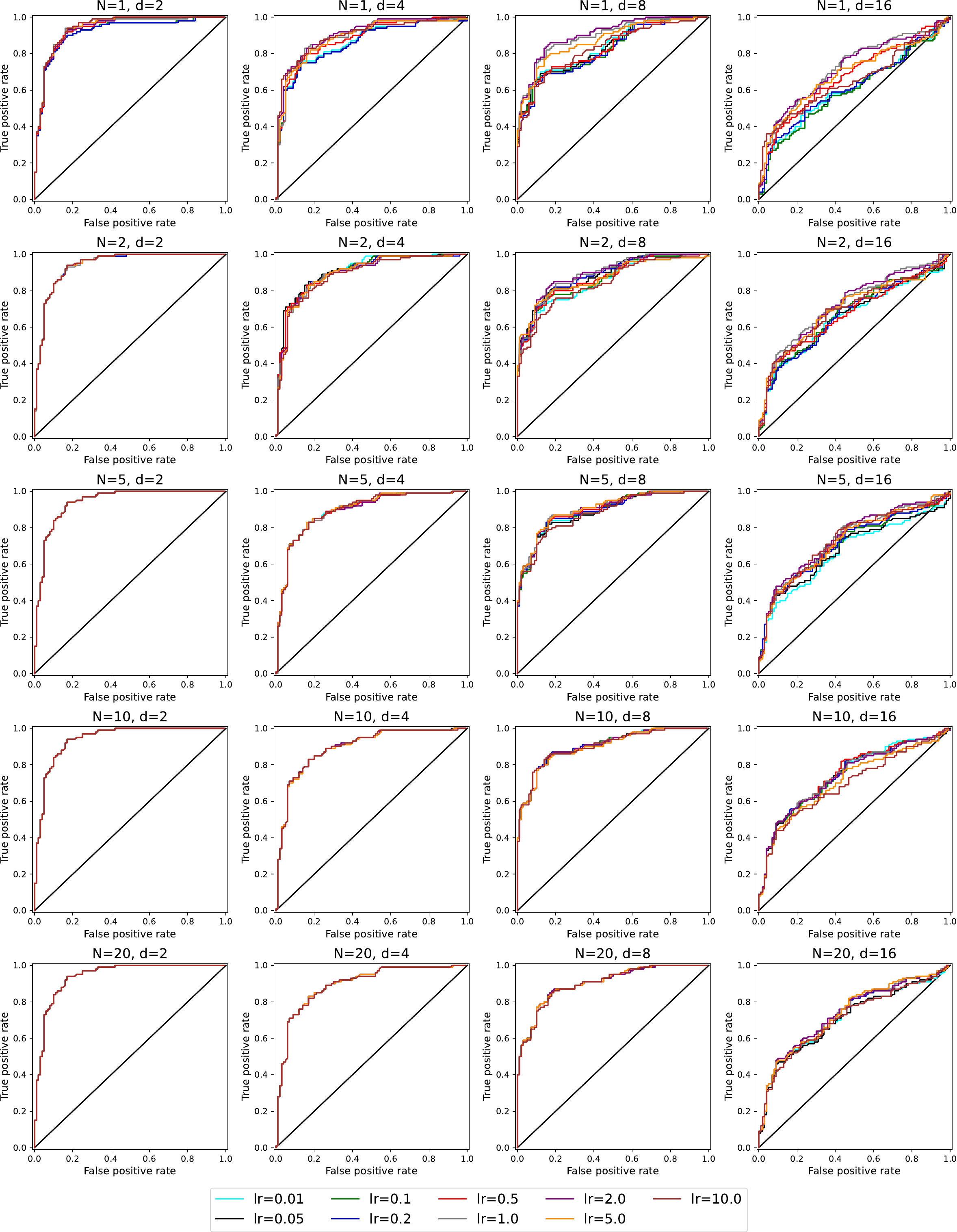}
\caption{ROC curves as the number of neurons varies from $N=1$ to $N=20$, and
  for varying learning rates, for the same data as in Figure
  \ref{fig:N-scatter-k3-lr05-N210}. A larger number of neurons is typically more
  robust to the choice of learning rate, especially for larger $d$.} 
\label{fig:robust-lr-varyingN}
\end{figure}

%!TEX root = main-arxiv.tex

\section{Comparison to polynomial kernel MMD}
\label{supp:mmd-poly}

Figure \ref{fig:roc-mmd-poly} compares the RKS tests to the polynomial kernel
MMD tests using the data as in Figure \ref{fig:roc-setup-per-dimension} in the
main paper. We have chosen linear, quadratic, and cubic degree polynomials by
(rough) analogy to the use of $k=1,2,3$ in RKS. The colors of the RKS test of
degree $k$ and kernel MMD with a polynomial kernel of degree $k$ are chosen to
match. However, we can see that these two frameworks produce tests with
generally quite different behaviors: for example, compare the behavior of RKS
with $k=1$ to kernel MMD with linear kernel, in the ``t-coord'' setting.   

Altogether, the behavior of kernel MMD with polynomial kernel is as expected: it 
performs quite well when $P \neq Q$ differ according to the alternative (moment
difference) that is anticipated by the test, and is otherwise essentially
powerless. 

\begin{figure}[p]
\vspace{-10pt}
\includegraphics[width=\textwidth]{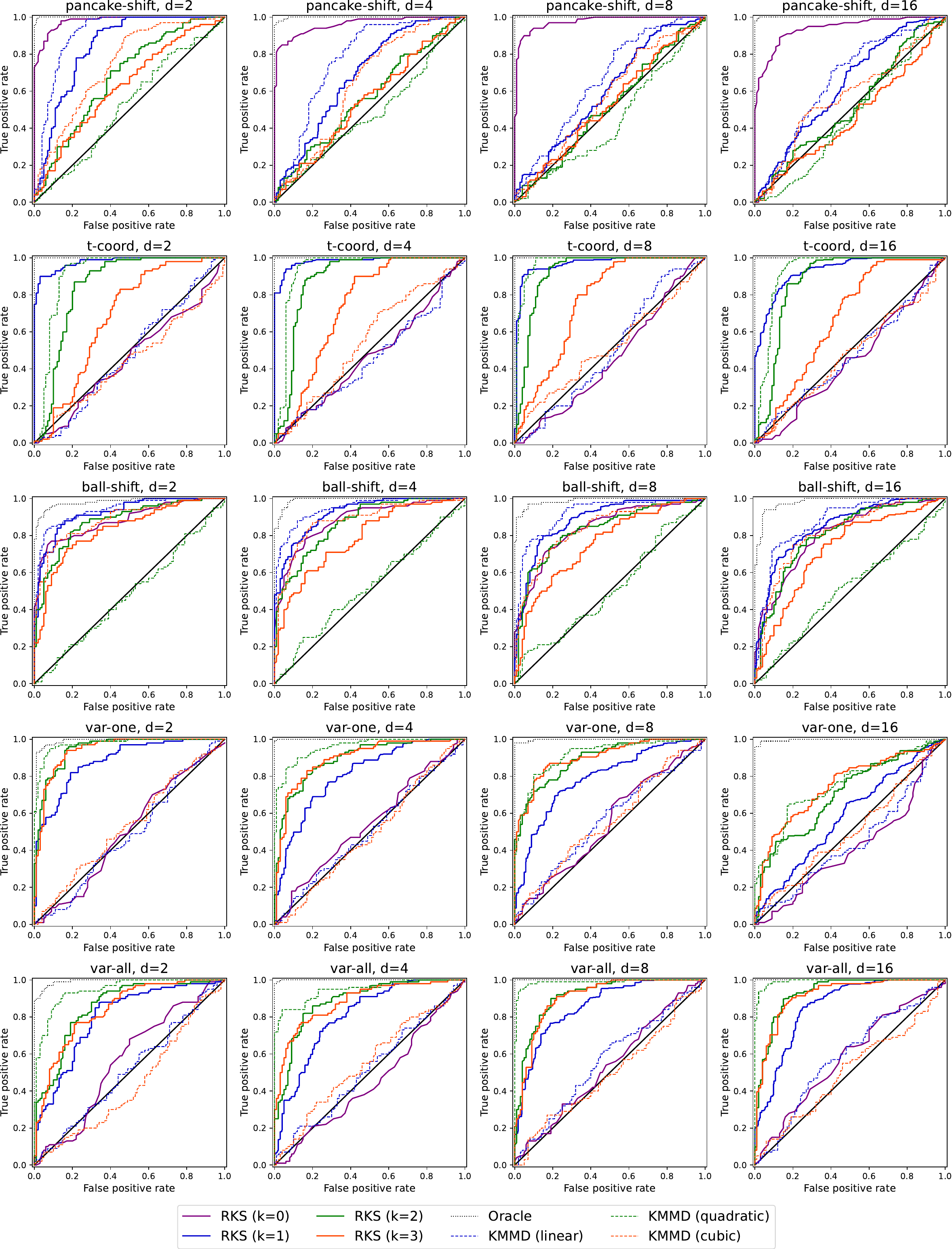}
\caption{ROC curves across the same experimental settings as in Figure 
  \ref{fig:roc-setup-per-dimension}, but now with kernel MMD using a polynomial 
  kernel of linear, quadratic, and cubic degrees. The ROC curve from the
  likelihood ratio test is also shown, indicating the best possible performance
  in each case (it depends on oracle knowledge of $\popP, \popQ$).}          
\label{fig:roc-mmd-poly}
\end{figure}

\end{document}